\newtheorem{theorem}{Theorem}
\newtheorem{corollary}[theorem]{Corollary}
\newtheorem{lemma}[theorem]{Lemma}
\newtheorem{prop}{Proposition}
\newcommand{\name}{SubGoss}
\title{Multi-Agent Low-Dimensional Linear Bandits}
\author{Ronshee Chawla\footnote{Electrical and Computer Engineering, University of Texas at Austin. Email: ronsheechawla@utexas.edu}, \hspace{0.25mm} Abishek Sankararaman\footnote{Electrical Engineering and Computer Science Department, UC Berkeley. Email: abishek@utexas.edu} \hspace{0.5mm}and\hspace{0.5mm} Sanjay Shakkottai\footnote{Electrical and Computer Engineering, University of Texas at Austin. Email: sanjay.shakkottai@utexas.edu}}
\date{}
\begin{document}

\maketitle

\begin{abstract}
\noindent
We study a multi-agent stochastic linear bandit with side information, parameterized by an unknown vector $\theta^* \in \mathbb{R}^d$. The side information consists of a finite collection of low-dimensional subspaces, one of which contains $\theta^*$. In our setting, agents can collaborate to reduce regret by sending recommendations across a communication graph connecting them. We present a novel decentralized algorithm, where agents communicate subspace indices with each other and each agent plays a projected variant of LinUCB on the corresponding (low-dimensional) subspace. By distributing the search for the optimal subspace across users and learning of the unknown vector by each agent in the corresponding low-dimensional subspace, we show that the per-agent finite-time regret is much smaller than the case when agents do not communicate. We finally complement these results through simulations. 
\end{abstract}

\section{Introduction}
The Multi-Armed Bandit (MAB) model features a single decision maker making sequential decisions under uncertainty. It has found a wide range of applications: advertising \cite{advertising}, information retrieval \cite{info-retrival}, and operation of data-centers \cite{bandit-data-center} to name a few. See also books of \cite{lattimore_book,bubeck_book}.
As the scale of applications increases, several decision makers (a.k.a. agents) are involved in making repeated decisions as opposed to just a single agent.
For example, in internet advertising, multiple servers are typically deployed to handle the large volume of traffic \cite{buccapatnam}. 
Multi-agent MAB models have emerged as a framework to design algorithms accounting for this large scale. 

In recent times, there has been a lot of interest in the study of multi-agent unstructured bandits \cite{got,sic_mmab,aistats_gossip,colab4}. However, from a practical perspective, the linear bandit framework has shown to be more appropriate than unstructured bandits in many instances (e.g. recommendations \cite{langford-news}, clinical studies \cite{bayati}). The linear bandits framework allows for a continuum of arms with a shared reward structure, thereby 
modeling many complex online learning scenarios \cite{dani2008stochastic, oful}.
Despite its applicability, the study of multi-agent linear bandits is limited.
The key technical challenge arises from the `information leakage': the reward obtained by playing an arm gives information on the reward obtained by all other arms. In a multi-agent scenario, this is further exacerbated, making design of collaborative algorithms non-trivial. 

We take a step in this direction by considering a collaborative multi-agent low-dimensional linear bandit problem and propose a novel decentralized algorithm.
Agents in our model have side information in the form of subspaces.
In our algorithm, agents collaborate by sharing these subspaces as opposed to the linear reward in our algorithm.
Our main result shows that, even with minimal communications, the regret of all agents are much lower compared with the case of no collaboration.
\\

\textbf{Model Overview:}
Our problem consists of a single instance of a stochastic linear bandit with unknown parameter $\theta^* \in \mathbb{R}^{d}$, played concurrently by $N$ agents. 
The common side information available to the agents is a collection of $K$ disjoint $m$-dimensional subspaces, only one of which contains $\theta^*$.
However, the agents are not aware of the subspace containing $\theta^*$.
At each time $t$, each agent $i \in [N]\footnote{$[N]$ denotes the set $\{1, \cdots, N\}$}$ chooses a subspace in $[K]$.
Subsequently, it plays an action vector $a_{t}^{(i)}$ from the action set $\mathcal{A}_{t} \subset \mathbb{R}^{d}$ while satisfying the constraints imposed by the chosen subspace and receives a reward $\langle a_t^{(i)},\theta^* \rangle + \eta_{t}^{(i)}$, where $\eta_{t}^{(i)}$ is zero mean sub-gaussian noise. 
Thus, the above problem can be visualized as a two-tier bandit problem, described as follows: 
the first tier corresponds to the $K$ arms of an unstructured bandit.
In second tier, each arm corresponds to solving the stochastic linear bandit problem (parametrized by unknown $\theta^*$) over one of the $K$ known subspaces.
The rewards obtained by the agents are only dependent on their actions and independent of actions of other agents.

The agents in our model are connected through a communication graph over which they can exchange messages to collaborate. 
Agents are constrained to communicate by exchanging messages for a fixed number of times over any given time span. 
We seek decentralized algorithms for agents, i.e., the choice of action vector, communication choices and messages depend only on the observed past history (of action vectors, rewards, and messages) of that agent. 
\\

\textbf{Motivating Example:}
We motivate our model in the context of personalized news recommendation systems. Suppose that a user $u$ can be modeled by an (unknown) vector $\theta^*_u \in \mathbb{R}^d$, which lies in one of $K$ possible subspaces. These subspaces reflect information from historical data of other users whose feature vectors have been learned (for example, users that have been in the system for a long time) and can be categorized into a collection of low-dimensional subspaces. Thus, when any new user enters the system, it needs to {\em (i)} identify the subspace that the user’s vector lies in, and {\em (ii)} determine the corresponding $\theta^*_u$ for that user. At any point of time, each user is handled by a single server (agent). However, in large scale applications, a collection of servers are deployed to handle the large number of users. Even though a user's queries are routed to different servers over time; however, these servers can collaborate by exchanging messages to speed up learning.

Elaborating on the news recommendation example above, the subspaces could correspond to political leanings of the user (e.g. social liberal, fiscal conservative, libertarian, etc.). In this model, all users with the same political leanings would share the same subspace, however, their personal vectors $\theta^*_u$ would differ (to capture fine-grained individual differences). The two-tier bandit thus models a coarse dimensionality reduction through the subspace choice and a finer characterization through  $\theta^*_u$ in a specific low-dimensional subspace.

The above discussion reflects the two-tier bandit from a single user's perspective; the system will run many parallel instances of this -- one for each user. Our model abstracts this setup and our algorithm demonstrates that agents (servers) can indeed benefit from collaborations with minimal communications.
\\

Our \textbf{main contributions} are as follows:
\\

{\bf 1. The \name \ Algorithm:} We propose \name \ (Algorithm \ref{algo:main-algo}), which proceeds in phases, such that agents in any phase (a) explore the subspaces repeatedly to identify the correct subspace containing $\theta^*$, followed by (b) playing Projected LinUCB on that subspace, and (c) communicating that subspace whenever requested.
Our algorithm constrains agents to search for $\theta^{*}$ over only a small set (of cardinality $\leq \frac{K}{N}+2$) of subspaces per agent.
Agents use pairwise communications to recommend subspaces (not samples), i.e., agents communicate the ID of the estimated best subspace. 
This set of subspaces is updated through recommendations: agents accept new recommendations and drop subspace(s) unlikely to contain $\theta^{*}$, ensuring that the total number of subspaces an agent considers at all times remain small.
Agents can communicate $O(\log T)$ times over a span of time $T$.
Nevertheless, the best one \emph{spreads} to all the agents through communications and thus all agents eventually identify the correct subspace.
\\

\textbf {2. Asymptotically matching an oracle's regret rate in large systems:} Despite playing from a time-varying set of subspaces, every agent incurs a regret of at most  $O(m\sqrt{T} \log T) + O \left( \frac{K}{N}.m\sqrt{T} \right)$ (Theorem \ref{thm:mainresult}). 
This scaling \emph{does not depend}\footnote{We require $G$ to be connected. See Appendix \ref{sec:assumption_discussion}} on the gossip matrix $G$ and we show that
these communication constraints only affect the constant term in regret. 

Note that an oracle that knew the right subspace will only incur regret for finding $\theta^*$ in that subspace, while avoiding any regret due to subspace search.
We use this fact at the end of Section \ref{subgossreg} to informally argue that even if an agent gets the information about the correct subspace whenever it communicates with other agents, it cannot do better than $\Omega(m\sqrt{T})$ regret under our model of information sharing.  
Consequently, we show in Corollary \ref{cor:collabbenefit} that for large $K$ and $N$, \name\ achieves near-optimal performance, demonstrating that it uses the side information effectively. \\


\textbf{3. Finite-time gains due to faster search of subspaces with collaboration:} We quantify the extent to which collaboration helps by analyzing the ratio of regret upper bound achieved by \name \ without collaborations to that achieved with collaboration. We observe that the benefits occur from the ability of multiple agents to do a faster search for the right subspace containing $\theta^*$ as compared to a single agent.

In high dimensional settings (when $d$ is large and $m$ is a constant) with large number of subspaces and agents ($K=N=O(d)$), we show in Corollary \ref{cor:collabbenefit} (and the remarks following it), that by time $T=\Omega(d)$, the collaborative gain is of the order of $\Omega\left(\frac{d}{\log d}\right)$. The key reason for the gain lies in the ability of multiple agents to distribute the search for the right subspace among them, enabling \emph{all} agents to identify the subspace faster, compared to a single agent without collaboration. Finally, these results are corroborated through simulations (Fig. \ref{fig1}). \\

\noindent {\bf{Related Work:}} Our work focuses on collaborative multi-agent bandits,
where agents jointly accomplish the shared objective of minimizing cumulative regret \cite{sigmetrics,colab1,buccapatnam,colab3,colab4,sigmetrics, aistats_gossip, vial2020robust,proutiere}. Our work focuses on a setting where agents only share recommendations of actions (e.g., to minimize network traffic due to collaboration, while optimizing for per-agent regret), and do not share the samples themselves \cite{sigmetrics, aistats_gossip, vial2020robust}. In each of these studies, agents play from a small subset of arms at all times and exchange the arm IDs of what they consider the best arm in their playing set through pairwise gossip-style communications, which is further used to update their playing set.  
Another approach that focuses on reducing network traffic, while simultaneously optimizing for total cumulative regret (sum over time and users) is based on the follow-the-leader approach \cite{proutiere}, wherein a leader among the agents is elected and subsequently becomes the sole player exploring the arms, while other agents act as its followers. 
However, all of the above works are adapted to the case of finite-armed unstructured MABs and cannot be applied to a linear bandit setup such as ours.
Nevertheless, we adopt some of the broader principles from \cite{sigmetrics, aistats_gossip} regarding the use of the gossiping paradigm for communications to spread the best subspace into our algorithm design.

The stochastic linear bandit framework and the study of LinUCB algorithm was initiated by \cite{dani2008stochastic,oful}.  
From a practical perspective, the linear bandit framework has been shown to be effective for various applications: for example \cite{practical_agarwal,langford-news} apply this framework in the context of internet advertising and \cite{bayati,contextual_health} apply in the context of clinical trials. 
Furthermore, a projected version of LinUCB on low-dimensional subspaces has been recently studied in \cite{hassibi_PSLB}.

To the best of our knowledge, our model has not been studied before, even in a single-agent setting. Our model can be viewed as a generalization of the well-studied model of sparse linear bandits \cite{bayati}, \cite{carpentiersparsebandit, gerchinovitz2011sparsity, abbasisparseonline}. The sparse linear bandit problem assumes that the unknown vector $\theta^*$ is $s$-sparse, for some known sparsity level $s < d$. In other words, $\theta^*$ is assumed to lie in one of the ${d \choose s}$ subspaces, where each of these subspaces corresponds to a particular set of $s$ coordinates, i.e., the subspaces are axis aligned. Our model is a generalization where $\theta^*$ lies in one of any $K$ given arbitrary disjoint subspaces. 
The two main algorithmic ideas in sparse bandits are to either use heavy-tailed priors for sampling action vectors and the associated posterior distributions that result in sparse estimates \cite{gerchinovitz2011sparsity, abbasisparseonline}, or use a LASSO-type regularizer in the estimator \cite{bayati}.  We cannot use the techniques from sparse linear bandits in our model because even though the unknown $\theta^*$ lies in one of the low-dimensional subspaces, all of its $d$ coordinates can have non-zero values.
Consequently, the linear bandit suffers from the problem of ``information sharing'': the reward obtained by playing an action vector in one subspace reveals information about the rewards of action vectors in other subspaces.
Hence, algorithmic ideas from sparse bandits are not directly applicable in our setting. 

The study of multi-agent linear bandit framework has attracted a lot of attention lately \cite{Wang2020Distributed}. 
Multi-agent linear bandits have been studied in the context of clustering \cite{pmlr-v48-korda16}, differentially-private federated learning \cite{dubey2020differentiallyprivate}, and safety-critical distributed learning \cite{amani2020decentralized}.
However, all of these works involve agents sharing samples with each other in the absence of side information, unlike our setting where agents have the side information in the form of subspaces and communicate only subspace IDs with each other.

\section{Problem Setup}

Our problem setup consists of single instance of stochastic linear bandit (parametrized by unknown $\theta^*$), concurrently played by $N$ agents.
All agents play from the same set of action vectors $\{\mathcal{A}_t\}_{t \in \mathbb{N}}$ at any time $t$, where $\mathcal{A}_t \subset \mathbb{R}^{d}$. 
The side information available to all the agents is a collection of $K$ disjoint subspaces in $\mathbb{R}^{d}$ of dimension $m < d$. These subspaces are denoted by the $d \times m$ orthonormal matrices $\{U_{i}\}_{i=1}^{K}$, where $\mathrm{span}(U_i)$ defines a $m$-dimensional subspace in $\mathbb{R}^{d}$. One of these subspaces contains $\theta^{*}$, but agents are unaware of the subspace containing it. Without loss of generality, we assume that $\|\theta^{*}\|_{2} \leq 1$ and $K$ is an integral multiple of $N$. Let $P_{k} = U_{k} U_{k}^{T}$ denote the projection matrix of the subspace $\mathrm{span}(U_k)$ for all $k \in [K]$. We also assume that the set of action vectors $\{\mathcal{A}_t\}_{t \in \mathbb{N}}$ contain the orthonormal basis vectors of all the $K$ subspaces (which are columns of $U_k$ for all $k \in [K]$) for all $t \in \mathbb{N}$. 

At any time $t$, an agent $i$ chooses a subspace in $[K]$. Subsequently, it plays an action vector $a_{t}^{(i)} \in \mathcal{A}_{t}$ while satisfying the constraints imposed by the chosen subspace and the reward obtained is given by $r_t^{(i)}:= \langle a_{t}^{(i)},\theta^{*}\rangle + \eta_t^{(i)}$. Here, $\eta_{t}^{(i)}$ is a zero mean sub-gaussian noise, conditional on the actions and rewards accumulated only by agent $i$, i.e., for all $z \in \mathbb{R}$, 
$\mathbb{E}[\exp(z\eta_{t}^{(i)})|\mathcal{F}_{t-1}^{(i)}] \leq \exp\left(\frac{z^2}{2}\right)$ a.s. and $\mathcal{F}_{t-1}^{(i)} = \sigma(a_{1}^{(i)}, r_{1}^{(i)}, \cdots, a_{t-1}^{(i)}, r_{t-1}^{(i)}, a_{t}^{(i)})$. The noise is independent across agents.
Thus, the above setup can be abstracted as a two-tier bandit problem, where: \emph{(a)} the first tier corresponds to the $K$ arms of an unstructured bandit, and \emph{(b)} in second tier, each arm corresponds to solving the stochastic linear bandit problem over one of the $K$ subspaces.
\\


\textbf{Collaboration among Agents:}
Our model builds on gossip-based communication constraints for multi-agent finite-armed unstructured bandits in \cite{sigmetrics, aistats_gossip}. The agents collaborate by exchanging messages over a communication network. This matrix is represented through a $N \times N$ \emph{gossip matrix} $G$, with rows in this matrix being probability distributions over $[N]$.
At each time step, after playing an action vector and obtaining a reward, agents can additionally choose to communicate with each other.
Agent $i$, if it chooses to communicate, will do so with another agent $j \sim G(i,\cdot)$, chosen independently of everything else.  
However, for any time horizon $T$, the total number of times an agent can communicate is $O(\log T)$. 
Each time an agent chooses to communicate, it can exchange at most $\log_{2} K + 1$ number of bits. 
Therefore, every agent communicates $O(\log K . \log T)$ bits over a time horizon of $t$, for all $t$.
\\

\textbf{Decentralized Algorithm:} Each agent's decisions (arm play and communication decisions) in the algorithm depend only on its own history of plays and observations, along with the the recommendation messages that it has received from others.
\\

\textbf{Performance Metric:}
Each agent plays action vectors in order to minimize their individual cumulative regret. 
At any time $t$, the instantaneous regret for an agent $i$ is given by $w_{t}^{(i)} = \langle\theta^{*}, a_{t}^{*}\rangle - \langle\theta^{*}, a_{t}^{(i)}\rangle$, where $a_{t}^{*} = \arg \max_{a \in \mathcal{A}_t} \langle\theta^{*}, a\rangle$.
The expected cumulative regret for any agent $i \in [N]$ is given by $\mathbb{E}[R_{T}^{(i)}] \coloneqq \mathbb{E}[\sum_{t=1}^{T}w_{t}^{(i)}]$, where the expectation is with respect to the $\sigma$-field generated by action vectors and rewards of agent $i$ up to and including time $T$.

\section{\name \ Algorithm}
\label{algo}
\textbf{Key Ideas and Intuition:} Our setting considers that the unknown $\theta^*$ lies in one of a large number of (low-dimensional) subspaces. 
In our approach, agents at any time instant identify a small {\em active set of subspaces} (cardinality $\leq K/N + 2$) and play actions only within this set of subspaces (however, this set is time-varying).
At each point of time, an agent first \emph{identifies} among its current active set of subspaces the one likely to contain $\theta^*$. It subsequently plays a projected version of LinUCB on this identified subspace. 
The communication medium is used for \emph{recommendations}; whenever an agent is asked for information, it sends as message the subspace ID that it thinks most likely contains $\theta^*$, which is then used to update the active set of the receiving agent. Thus, an agent's algorithm has two time-interleaved parts: (a) updating active sets through collaboration, which is akin to a distributed best-arm identification problem, and (b) determining the optimal $\theta^*$ from within its active set of subspaces, an estimation problem in low dimensions, similar to the classical linear bandit.

\name \ algorithm is organized in phases
with the active subspaces for each agent fixed during the phase.
Within a phase, all agents solve two tasks - {\em (i)} identify the most likely subspace among its active subspaces to contain $\theta^*$ and {\em (ii)} within this subspace, play actions optimally to minimize regret. 
The first point is accomplished by agents through pure exploration. 
In pure exploration, agents play the orthonormal basis vectors of all the subspaces in their respective active sets in a round-robin fashion. Agents minimize their regret during pure exploration by considering a \emph{small} active set of subspaces (of cardinality $\leq K/N + 2$) at all times. 
Otherwise, agents play action vectors within their best estimated subspace containing $\theta^*$ to minimize regret. This step is achieved by playing a projected version of the LinUCB algorithm.
The second step only incurs regret in the dimension of the subspace (once the true subspace is correctly identified) as opposed to the ambient dimension, thereby keeping regret low.
Due to communications, the correct subspace \emph{spreads} to all agents, while playing from a small active set of active subspaces at all times (and thus reducing the regret due to explorations).

\subsection{Description}
\name \ algorithm builds on some of the ideas developed for a (non-contextual) collaborative setting for unstructured bandits in \cite{aistats_gossip}. We fix an agent $i \in [N]$ for ease of exposition. \name \ proceeds in phases, where phase $j \in \mathbb{N}$ is from time slots $\sum_{l=1}^{j-1} \lceil b^{l-1} \rceil + 1$ to $\sum_{l=1}^{j} \lceil b^{l-1} \rceil$, both inclusive, where $b > 1$ is given as an input.
During each phase $j$, agent $i$ only plays from an active set $S_j^{(i)} \subset[K]$ of subspaces such that $|S_j^{(i)}| \leq (K/N)+2$.
Agents communicate at the end of the phase to update their active set.
    Notice that the phase length is $\lceil b^{j-1} \rceil$, which satisfies the communication constraint of $O(\log T)$ communications for any time horizon $T$.
\\

{\bf Initialization:} At the beginning of the algorithm, every agent is assigned a \emph{sticky} set of subspaces, by partitioning the subspaces equally across agents:
\begin{equation}
\label{eqn:hat_S_i}
        (\widehat{S}^{(i)})_{i=1}^N = \left\{(i-1)\frac{K}{N} + 1, \cdots, i\frac{K}{N}\right\}. 
\end{equation}
We set the initial active set ${S}_1^{(i)} = \widehat{S}^{(i)}$.
\\

{\bf Action Vectors Chosen in a Phase:} We play the following two subroutines in every phase $j \in \mathbb{N}$ in the order as described below:

1. {\sc{Explore}} - In this subroutine, for every $k \in S_{j}^{(i)}$, we play the orthonormal basis vectors of the subspace $\mathrm{span}(U_k)$ (which are the columns of $U_k$) in a round robin fashion for $8m\lceil b^{\frac{j-1}{2}} \rceil$ times. 

Let $\widetilde{n}_{k,j}^{(i)}$ denote the number of times subspace $\mathrm{span}(U_k)$ has been explored by agent $i$ up to and including phase $j$. 
After executing the explore subroutine in a phase, agent $i$ calculates the least squares estimates $\widetilde{\theta}_{k,j}^{(i)}$ for every $k \in S_{j}^{(i)}$ by using only the explore samples of the subspace $\mathrm{span}(U_k)$ up to and including phase $j$. Mathematically, $\widetilde{\theta}_{k,j}^{(i)} = \arg \min_{\theta \in \mathbb{R}^{d}} \|(\widetilde{A}_{k,\widetilde{n}_{k,j}^{(i)}}^{(i)})^{T} \theta - \widetilde{\mathbf{r}}_{k,\widetilde{n}_{k,j}^{(i)}}^{(i)}\|_{2}$, where $\widetilde{A}_{k,\widetilde{n}_{k,j}^{(i)}}^{(i)}$ is a $d \times \widetilde{n}_{k,j}^{(i)}$ matrix whose columns are the explore action vectors of the subspace $\mathrm{span}(U_k)$ played up to and including phase $j$ and $\widetilde{\mathbf{r}}_{k,\widetilde{n}_{k,j}^{(i)}}^{(i)}$ is a column vector of the corresponding rewards. It is worth noticing that $\widetilde{\theta}_{k,j}^{(i)}$ is the estimate of the vector $P_{k}\theta^{*}$ (details in the proof of Lemma \ref{lemma:intstep}), which is the projection of the unknown vector $\theta^*$ in the subspace $\mathrm{span}(U_k)$. We will describe in the proof sketch (Section \ref{proofsketch}) that this observation is crucial to finding the subspace containing $\theta^*$.

2. {\sc{Projected LinUCB}} - Let $\widehat{\mathcal{O}}_{j}^{(i)} = \arg \max_{k \in S_{j}^{(i)}} \|\widetilde{\theta}_{k,j}^{(i)}\|_2$. For the remainder of the phase $j$, agent $i$ chooses the action vector according to the Projected LinUCB \cite{hassibi_PSLB}, played on the subspace $\mathrm{span}(U_{\widehat{\mathcal{O}}_{j}^{(i)}})$.
We set $k = \widehat{\mathcal{O}}_{j}^{(i)}$ for reducing the clutter while describing Projected LinUCB.
For all $\sum_{l=1}^{j-1} \lceil b^{l-1} \rceil+8m|S_{j}^{(i)}| \lceil b^{\frac{j-1}{2}}\rceil < t \leq \sum_{l=1}^{j} \lceil b^{l-1} \rceil$, where $t$ denotes the corresponding time instants after the end of explore subroutine in phase $j$, let $n_{k,t}^{(i)}$ denote the number of times agent $i$ has played Projected LinUCB on the subspace $\mathrm{span}(U_k)$ up to time $t$.
The action vector chosen is given according to the following equations \cite{hassibi_PSLB}: 
\begin{align*}
a_{t}^{(i)} &\in \arg \max_{a \in \mathcal{A}_{t}} \max_{\theta \in \mathcal{C}_{k,t}^{(i)}} \langle \theta, P_{k}a \rangle, \text{where} \\
\mathcal{C}_{k,t}^{(i)} &= \left\{\theta \in \mathbb{R}^{d}: ||\widehat{\theta}_{t}^{(i)}-\theta||_{\bar{V}_{k,t}(\lambda)^{(i)}} \leq \beta_{t,\delta}\right\}, 
\end{align*}
$\beta_{t,\delta} = \sqrt{\lambda} + \sqrt{2 \log{\frac{1}{\delta}} + m \log \left(1 + \frac{n_{k,t}^{(i)}}{\lambda m}\right)}$, $\bar{V}_{k,t}(\lambda)^{(i)} = P_{k} (A_{k,t-1}^{(i)} (A_{k,t-1}^{(i)})^{T} + \lambda I_{d}) P_{k}$, and $\widehat{\theta}_{k,t-1}^{(i)} = \arg \min_{\theta \in \mathbb{R}^{d}} ||(P_{k}A_{k,t-1}^{(i)})^{T} \theta - \mathbf{r}_{k,t-1}^{(i)}||_{2}^{2} + \lambda ||P_{k}\theta||_{2}^{2}$. $A_{k,t-1}^{(i)}$ is a $d \times n_{k,t}^{(i)}$ matrix whose columns are the Projected LinUCB action vectors played only on the subspace $\mathrm{span}(U_k)$ up to time $t$ and $\mathbf{r}_{k,t-1}^{(i)}$ is a column vector of the corresponding rewards.
\\

{\bf Communications and the Active Subspaces for the Next Phase:} After phase $j$ gets over, agent $i$ asks for a subspace recommendation from an agent $J \sim G(i,\cdot)$ chosen independently. Denote by $\mathcal{O}_j^{(i)} \in [K]$ to be this recommendation. Agent $i$ if asked for a recommendation at the end of phase $j$, recommends the subspace ID $\widehat{\mathcal{O}}_{j}^{(i)}$, i.e., using only the explore samples.
The next active set is constructed as follows: (i) if $\mathcal{O}_{j}^{(i)} \in S_{j}^{(i)}$, the active set remains unchanged, (ii) if $\mathcal{O}_{j}^{(i)} \notin S_{j}^{(i)}$ and $|S_{j}^{(i)}| < \frac{K}{N}+2$, then $S_{j+1}^{(i)} \coloneqq S_{j}^{(i)} \cup \mathcal{O}_{j}^{(i)}$, and (iii) if $\mathcal{O}_{j}^{(i)} \notin S_{j}^{(i)}$ and $|S_{j}^{(i)}| = \frac{K}{N}+2$, then $S_{j+1}^{(i)} \coloneqq \widehat{S}^{(i)} \cup \mathcal{B}_{j}^{(i)} \cup \mathcal{O}_{j}^{(i)}$, where $\mathcal{B}_{j}^{(i)} = \arg \max_{k \in S_{j}^{(i)} \backslash \widehat{S}^{(i)}} \|\widetilde{\theta}_{k,j}^{(i)}\|_2$.
Observe that $\widehat{S}^{(i)} \subseteq S_j^{(i)} \forall j \in \mathbb{N}$, and thus, $\widehat{S}^{(i)}$ is called sticky. Moreover, the update step along with the initialization $S_{1}^{(i)} = \widehat{S}^{(i)}$ also ensures that $|S_{j}^{(i)}| \leq \frac{K}{N}+2$ for all phases $j \in \mathbb{N}$. 
\\

Please see Algorithm \ref{algo:main-algo} for the pseudo-code of the \name \ Algorithm.
\\

\begin{algorithm}[t]
   \caption{\name \ Algorithm (at Agent $i$)}
   \label{algo:main-algo}
\begin{algorithmic}[1]
   \STATE {\bfseries Input:} $K$ disjoint $m$-dimensional subspaces $\{U_{l}\}_{l=1}^{K}$, $b>1$, regularization parameter $\lambda > 0$, $\delta \in (0, 1)$.
   \STATE {\bfseries Initialization}:  $ \widehat{S}^{(i)},S_1^{(i)}$ (Equation (\ref{eqn:hat_S_i})), $j \gets 1$.
   \WHILE{phase $j \geq 1$}
   \STATE {\sc Explore:} For each $k \in S_{j}^{(i)}$, play the orthonormal basis vectors of the subspace ID $k$ in a round robin fashion for $8m\lceil b^{\frac{j-1}{2}} \rceil$ times.
   \STATE Calculate the least squares estimate $\widetilde{\theta}_{k,j}^{(i)}$ for each $k \in S_{j}^{(i)}$ after running the {\sc Explore} by using only its explore samples collected thus far.
   \STATE $\widehat{\mathcal{O}}_{j}^{(i)} \gets \arg \max_{k \in S_{j}^{(i)}} \|\widetilde{\theta}_{k,j}^{(i)}\|_2$.
   \STATE {\sc Projected LinUCB:} For the remainder of the phase $j$, play the Projected LinUCB \cite{hassibi_PSLB} on the subspace ID $\widehat{\mathcal{O}}_{j}^{(i)}$ by using only its Projected LinUCB samples collected thus far.
   \STATE At the end of phase $j$, sample an agent from the gossip matrix $ag \sim G(i,\cdot)$ for receiving subspace recommendation. 
   \STATE Get the subspace recommendation $\mathcal{O}_j^{(i)} \gets \arg \max_{k \in S_{j}^{(ag)}} \|\widetilde{\theta}_{k,j}^{(ag)}\|_2$.
   \STATE Active set update for the next phase:
   \IF{$\mathcal{O}_{j}^{(i)} \in S_{j}^{(i)}$}
   \STATE $S_{j+1}^{(i)} \gets S_{j}^{(i)}$.
   \ELSE
   \IF{$|S_{j}^{(i)}|<\frac{K}{N}+2$}
   \STATE $S_{j+1}^{(i)} \gets S_{j}^{(i)} \cup \mathcal{O}_{j}^{(i)}$.
   \ELSIF{$|S_{j}^{(i)}|=\frac{K}{N}+2$}
   \STATE $\mathcal{B}_{j}^{(i)} \gets \arg \max_{k \in S_{j}^{(i)} \backslash \widehat{S}^{(i)}} \|\widetilde{\theta}_{k,j}^{(i)}\|_2$.
    \STATE $S_{j+1}^{(i)} \gets \widehat{S}^{(i)} \cup \mathcal{B}_{j}^{(i)} \cup \mathcal{O}_{j}^{(i)}$.
    \ENDIF
   \ENDIF
	\STATE $j \gets j+1$.
   \ENDWHILE
\end{algorithmic}
\end{algorithm}

\noindent {\bf{Remarks}:} 

1. Until phase $\tau_0$ (defined in Theorem \ref{thm:mainresult}), 
the duration of the explore subroutine exceeds $\lceil b^{j-1} \rceil$. In order to make less noisy subspace recommendations until phase $\tau_0$, the exploration is equally distributed across all the subspaces in $S_{j}^{(i)}$ for the entire duration of the phase.

2. {\bf Choice of $\mathcal{C}_{k,t}^{(i)}$ while playing Projected LinUCB -} The construction and analysis of the confidence set $\mathcal{C}_{k,t}^{(i)}$ is formally described in Theorem \ref{ConfidenceSet} in the appendix.
The confidence set is an ellipsoid in the subspace on which Projected LinUCB is played.
It is constructed such that: (a) it contains $\theta^*$ with high probability, and (b) it shrinks in size as the correct sequence of action vectors is played with time.
\\

3. {\bf Choice of $a_t^{(i)}$ and its computational complexity while playing Projected LinUCB -} Analogous to the upper confidence bound (UCB) for classical $K$-armed bandits, an agent playing Projected LinUCB calculates an upper bound for the reward obtained for every $a \in \mathcal{A}_{t}$ and plays the action vector that maximizes the upper bound.
This can be observed for the case when the action set $\mathcal{A}_{t}$ is finite, as follows: for a fixed $a \in \mathcal{A}_{t}$
\begin{align*}
    \langle \theta, P_{k}a \rangle &= \langle \theta-\widehat{\theta}_{k,t-1}^{(i)}, P_{k}a \rangle + \langle \widehat{\theta}_{k,t-1}^{(i)}, P_{k}a \rangle,\\
    &\leq \|\theta-\widehat{\theta}_{k,t-1}^{(i)}\|_{\bar{V}_{k,t}(\lambda)^{(i)}}.\|P_{k}a\|_{(\bar{V}_{k,t}(\lambda)^{(i)})^{\dagger}} + \langle \widehat{\theta}_{k,t-1}^{(i)}, P_{k}a \rangle,\\
    &\leq \beta_{t,\delta} \|P_{k}a\|_{(\bar{V}_{k,t}(\lambda)^{(i)})^{\dagger}} + \langle \widehat{\theta}_{k,t-1}^{(i)}, P_{k}a \rangle,
\end{align*}
where the first inequality is obtained by applying H\"{o}lder's inequality and the second inequality follows by using the definition of $\mathcal{C}_{k,t}^{(i)}$.
Therefore, for finite action sets, Projected LinUCB plays the action vector
\begin{align}
\label{projlinucb:exact}
a_{t}^{(i)} = \arg \max_{a \in \mathcal{A}_{t}} \langle \widehat{\theta}_{k,t-1}^{(i)}, P_{k}a \rangle +  \beta_{t,\delta} \|P_{k}a\|_{(\bar{V}_{k,t}(\lambda)^{(i)})^{\dagger}}.
\end{align}
The first term is the empirical estimate of the reward of the action $a$ and the second term corresponds to the deviation around that estimate, similar to the UCB value in $K$-armed bandits.
The computational complexity of determining $a_{t}^{(i)}$ depends on the computational complexity of calculating $(\bar{V}_{k,t}(\lambda)^{(i)})^{\dagger}$, $\widehat{\theta}_{k,t-1}^{(i)}$ and the inner products in equation (\ref{projlinucb:exact}).


\section{Main Result}
\label{subgossreg}
In order to state the result, we assume that the gossip matrix $G$ is connected (detailed definition in Appendix \ref{sec:assumption_discussion}).
We define a random variable $\tau_{\mathrm{spr}}^{(G)}$ denoting
the spreading time of the following process: node $i$ initially has a rumor; at each time, an agent without a rumor calls another chosen independently from the gossip matrix $G$ and learns the rumor if the other agent knows the rumor. The stopping time $\tau^{(G;i)}_{\mathrm{spr}}$ denotes the first time when agent $i$ knows the rumor and $\tau_{\mathrm{spr}}^{(G)} = \max_{i \in [N]}\tau_{\mathrm{spr}}^{(G;i)}$ is the time by which all agents know the rumor.
For ease of exposition, we assume that $\theta^* \in \textrm{span}(U_1)$, which the agents are unaware of and $\|a\|_2 \leq 1$ for all $a \in \cup_{t=1}^{T} \mathcal{A}_{t}$. Let $\Delta = \min_{k \in [K]: k \neq 1} \Delta_k$, where $\Delta_k = \|P_{1}\theta^{*}-P_{k}\theta^{*}\|_2$.

\begin{theorem}
\label{thm:mainresult}
Consider a system consisting of $N$ agents connected by a gossip matrix $G$, all running \name \ Algorithm with $K$ disjoint $m$-dimensional subspaces and input parameters $b > 1$, $\lambda \geq 1$, $\delta = \frac{1}{T}$.
Then, the expected cumulative regret of any agent $i \in [N]$, after time $T \in \mathbb{N}$ is bounded by:
	\begin{multline}
	\label{eqn:per-agent-regret-thm}
	\mathbb{E}[R_T^{(i)}] \leq  \underbrace{\sqrt{8mT\beta_{T}^{2}\log\left(1+\frac{T}{m\lambda}\right)}+2}_{\text{{Projected LinUCB Regret}}}+ \underbrace{2g(b)\left(\lceil b^{2\tau_0} \rceil + \frac{48b^3}{\log b}.\frac{m^{4}N}{\Delta^6}+b\mathbb{E}[b^{2 \tau_{\mathrm{spr}}^{(G)}}]\right)}_{\text{{Constant Cost of Pairwise Communications}}} +\\ \underbrace{16 m \left(\frac{K}{N}+2\right) \log_{b}(h_{b, T}) 
+ 16 m \left(\frac{K}{N}+2\right) \frac{\sqrt{h_{b, T}}-1}{\sqrt{b}-1}}_{\text{Cost of subspace exploration}},
	\end{multline}
	where $\beta_{T} = \sqrt{\lambda} + \sqrt{2 \log T + m \log \left(1 + \frac{T-1}{\lambda m}\right)}$, $g(b) = \left(\frac{1}{b-1}+\frac{1}{\log b}\right)$, $h_{b,T} = b(1+(T-1)(b-1))$, and $\tau_0 = \min\left\{j \in \mathbb{N}: \forall j' \geq j, \lceil b^{j'-1} \rceil \geq 8m\left(\frac{K}{N}+2\right)\lceil b^{\frac{j'-1}{2}} \rceil\right\}$, 
\end{theorem}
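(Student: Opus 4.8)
The plan is to bound $\mathbb{E}[R_T^{(i)}]$ by splitting the instantaneous regret $w_t^{(i)}$ according to which of the two subroutines agent $i$ is executing at time $t$, giving a ``\textsc{Explore} part'' and a ``Projected LinUCB part''. The \textsc{Explore} part is handled \emph{deterministically and unconditionally}: in a phase $j\ge\tau_0$ the \textsc{Explore} subroutine occupies $8m|S_j^{(i)}|\lceil b^{(j-1)/2}\rceil\le 8m(\tfrac{K}{N}+2)\lceil b^{(j-1)/2}\rceil$ slots, each incurring instantaneous regret at most $2$ (since $\|\theta^*\|_2\le1$ and $\|a\|_2\le1$). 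Splitting $\lceil b^{(j-1)/2}\rceil\le b^{(j-1)/2}+1$, summing over the at most $\log_b(h_{b,T})$ phases meeting $[1,T]$ (using that $\sum_{l=1}^{J}\lceil b^{l-1}\rceil\ge T$ forces $b^{J}\le h_{b,T}$) and evaluating the resulting geometric series gives exactly the ``cost of subspace exploration'' term; the pure-\textsc{Explore} phases $j<\tau_0$ contribute at most $2\sum_{j<\tau_0}\lceil b^{j-1}\rceil\le 2g(b)b^{\tau_0}$, which I absorb into the $\lceil b^{2\tau_0}\rceil$ term below.

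For the Projected LinUCB part, the crux is to exhibit a random \emph{stabilization phase} $\tau^\star$ such that for every $j\ge\tau^\star$ agent $i$ runs Projected LinUCB on the true subspace $\mathrm{span}(U_1)$ (i.e.\ $\widehat{\mathcal O}_j^{(i)}=1$) and never leaves it. On such phases Projected LinUCB is run on a subspace containing $\theta^*$, so by the analysis of \cite{hassibi_PSLB} --- an OFUL-style self-normalised confidence bound with the ambient dimension $d$ replaced by the subspace dimension $m$, combined with the elliptical-potential lemma applied to the at most $T$ Projected-LinUCB-on-$U_1$ samples --- the regret accrued over $\{j\ge\tau^\star\}$ is at most $\sqrt{8mT\beta_T^2\log(1+T/(m\lambda))}$, plus an $O(1)$ term from the $\delta=1/T$ confidence set failing (which contributes at most $T\cdot\tfrac1T$ in expectation); this is the first bracketed term. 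On the phases $\tau_0\le j<\tau^\star$ I bound the regret crudely by the phase length, $\sum_{j<\tau^\star}\lceil b^{j-1}\rceil\le\tfrac{b^{\tau^\star}}{b-1}+\tau^\star\le g(b)\,b^{\tau^\star}$ (using $\ln x\le x$), so it remains only to bound $\mathbb{E}[b^{\tau^\star}]$.

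To control $\tau^\star$, I would argue $\tau^\star\le\max\{\,2\tau_0,\ \tau_{\mathrm{est}},\ 2\tau_{spr}^{(G)}+O(1)\,\}$. Here $\tau_0$ accounts for \textsc{Explore} fitting inside a phase; $\tau_{\mathrm{est}}$ is a deterministic phase such that, on a high-probability ``clean'' event (simultaneous concentration of every least-squares estimate $\widetilde\theta_{k,j}^{(\cdot)}$ over all $N$ agents and $K$ subspaces, via sub-Gaussian least-squares tail bounds and a union bound over phases and agents), all relevant estimates are accurate enough that the norm comparisons $\arg\max_k\|\widetilde\theta_k\|_2$ rank $\mathrm{span}(U_1)$ first --- and one must note that $\|P_1\theta^*\|_2-\|P_k\theta^*\|_2=\Theta(\Delta_k^2)$ by the Pythagorean identity $\|P_k\theta^*\|_2^2=\|\theta^*\|_2^2-\Delta_k^2$, so accuracy $\sim\Delta^2$ is needed, which pushes $\tau_{\mathrm{est}}$ up to $\sim2\log_b(\mathrm{poly}(m,N,1/\Delta))$ and hence $b^{\tau_{\mathrm{est}}}$ up to $\le\tfrac{48b^3}{\log b}\cdot\tfrac{m^4N}{\Delta^6}$. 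The spreading term arises because agent $1$ has $1\in\widehat S^{(1)}$, so once its estimates are clean it permanently keeps $\mathrm{span}(U_1)$ active and recommends it; the event ``$\mathrm{span}(U_1)$ is recommended'' then spreads as a rumor under $G$ to every agent in $\tau_{spr}^{(G)}$ phases, and once it reaches agent $i$ the clean event makes $\|\widetilde\theta_1^{(i)}\|_2$ maximal after one more short round of exploration, so the eviction step $\mathcal B_j^{(i)}$ never discards it. The factors of $2$ in the exponents of $b^{2\tau_0}$ and $b^{2\tau_{spr}^{(G)}}$ come from the $b^{(j-1)/2}$ growth rate of \textsc{Explore}; combining $b^{\max(a_1,a_2,a_3)}\le b^{a_1}+b^{a_2}+b^{a_3}$, taking expectations (the gossip process is independent of the bandit noise, so $\mathbb{E}[b^{2\tau_{spr}^{(G)}}]$ factors out), absorbing the clean-event failure probability, and carrying $2g(b)$ out front produces the middle ``constant cost of pairwise communications'' term.

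The main obstacle I anticipate is the spreading-and-retention step: showing that once $\mathrm{span}(U_1)$ enters agent $i$'s active set it is not evicted before being well estimated. Since the eviction rule retains $\mathcal B_j^{(i)}=\arg\max_{k\in S_j^{(i)}\setminus\widehat S^{(i)}}\|\widetilde\theta_{k,j}^{(i)}\|_2$, an \emph{under-explored} copy of $\mathrm{span}(U_1)$ can have a spuriously small norm estimate and be dropped; ruling this out requires the clean event to control under-explored estimates as well, careful bookkeeping of the active-set dynamics phase by phase, and a coupling that keeps the rumor-spreading randomness independent of the bandit randomness so that $\mathbb{E}[b^{2\tau_{spr}^{(G)}}]$ cleanly factors out. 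Getting the polynomial dependence in $\tau_{\mathrm{est}}$ right --- the $\Delta^2$-accuracy requirement turning into $\Delta^{-6}$, and the union bound over agents contributing the factor $N$ --- is the other delicate piece of accounting.
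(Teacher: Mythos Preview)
Your overall architecture matches the paper's: decompose the regret into a deterministic \textsc{Explore} cost (using $|S_j^{(i)}|\le K/N+2$ and the phase count $\le\log_b h_{b,T}$), a crude phase-length bound up to a random stabilization phase, and a Projected-LinUCB-on-$U_1$ cost handled via the analysis in \cite{hassibi_PSLB}. Where your plan diverges from the paper, and where I think there is a real gap, is in how you decompose the stabilization time.

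You write $\tau^\star\le\max\{2\tau_0,\tau_{\mathrm{est}},2\tau_{spr}^{(G)}+O(1)\}$ and then use $b^{\max}\le\sum b^{\,\cdot}$. But the spreading cannot begin until the source agent is reliably recommending subspace~$1$, and the receiving agents cannot retain it until their own norm comparisons are reliable; these events are \emph{sequential}, not parallel. The paper accordingly defines $\tau=\widehat\tau_{stab}+\widehat\tau_{spr}$ (a \emph{sum}), where $\widehat\tau_{stab}=\max_i\inf\{j\ge\tau_0:\chi_l^{(i)}=0\ \forall\,l\ge j\}$ is the last phase at which any agent holding subspace $1$ fails to rank it first, and $\widehat\tau_{spr}$ counts the additional phases for the rumor to reach every agent from that point. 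The additive shape of the final bound, and the factors of $2$ in $b^{2\tau_0}$ and $b^{2\tau_{spr}^{(G)}}$, come from writing $\mathbb E[b^{\tau}]$ as a tail sum and splitting
\[
\mathbb P\bigl(\widehat\tau_{stab}+\widehat\tau_{spr}\ge\lfloor\log_b t\rfloor\bigr)\ \le\ \mathbb P\bigl(\widehat\tau_{stab}\ge\tfrac12\lfloor\log_b t\rfloor\bigr)+\mathbb P\bigl(\widehat\tau_{spr}\ge\tfrac12\lfloor\log_b t\rfloor\bigr),
\]
not from the $b^{(j-1)/2}$ growth of \textsc{Explore} as you suggest. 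The $\widehat\tau_{spr}$ tail is then coupled to the standard rumor process $\tau_{spr}^{(G)}$ (this is the paper's Proposition~\ref{prop:couple_spread}).

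Two smaller differences. First, the paper does not fix a single high-probability clean event with a deterministic $\tau_{\mathrm{est}}$; instead it bounds the random $\widehat\tau_{stab}$ directly via a per-phase misrecommendation probability $\mathbb P(\chi_j^{(i)}=1)\le 4m\exp(-\Delta^2 b^{(j-1)/2}/m)$ (Lemma~\ref{lemma:wrongsubspace}, itself built on the \textsc{Explore} least-squares concentration in Lemma~\ref{lemma:intstep} with accuracy threshold $\Delta/2$, not $\Delta^2$). The $m^4N/\Delta^6$ term then emerges from a union bound over the $N$ agents and the tail integration $\int u^3 e^{-\Delta^2 u/m}\,du$ after the substitution $u=b^{(j-1)/2}$, rather than from setting $b^{\tau_{\mathrm{est}}}$ equal to that quantity. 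Second, the retention worry you flag is handled more cleanly than you anticipate: by the very definition of $\widehat\tau_{stab}$, for every $j\ge\widehat\tau_{stab}$ any agent holding subspace $1$ has $\widehat{\mathcal O}_j^{(i)}=1$, which (when $1\notin\widehat S^{(i)}$) forces $\mathcal B_j^{(i)}=1$, so subspace $1$ is never evicted thereafter; no separate control of under-explored estimates is needed.
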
 

{\bf{Remarks:}}

1. Proposition \ref{prop:tau0bound} shows that $\tau_0 \leq 2 \log_{b}\left(16m\left(\frac{K}{N}+2\right)\right)+1$ and thus, the term $b^{2\tau_0}$ in the constant cost of pairwise communications scales as $O\left(\left(m.\frac{K}{N}\right)^4\right)$. \\
 
2. {\bf Single Agent running \name} - In case of no communication, when a single agent runs Algorithm \ref{algo:main-algo} (without requiring communication graph $G$), it incurs a higher regret due to subspace exploration (which scales as $O(Km\sqrt{T})$ instead of $O((K/N)m\sqrt{T})$ in the multi-agent case), because it has to search through all the $K$ subspaces to find the subspace containing $\theta^{*}$. We express this result formally in Theorem \ref{thm:mainresult_single}, which is given in Appendix \ref{proof:main_single}.\\

3. Setting $\delta = \frac{1}{T}$ in Theorem \ref{thm:mainresult} requires the knowledge of time horizon in \name \ Algorithm to achieve the corresponding regret guarantee. However, this is not a problem, as a fixed value of the confidence parameter $\delta \in (0, 1)$ achieves the same regret scaling as in Theorem \ref{thm:mainresult} with high probability, which can be proved in a similar manner.
Thus, the insights that can be obtained from our results are unaffected by the knowledge of time horizon.\\

4. {\bf Subspace recommendation quality vs. network spread} - Observe that $b > 1$ is an input to the algorithm, where agents communicate for the $l^{th}$ time after playing $\lceil b^{l-1} \rceil$ number of times since the last communication. Thus, increasing $b$ will decrease the total number of communications between agents. Theorem \ref{thm:mainresult} shows that, there exists an optimal $b^* > 1$, such that $b^* = \arg \min_{b>1} \mathbb{E}[R_{T}^{(i)}]$. This can be seen by observing that as $b$ decreases towards $1$, the time between two communication instants reduces. However, each communication is based on fewer samples and thus, subspace recommendations are noisy. On the other hand, as $b$ becomes large, each recommendation is based on large number of samples and thus, less noisy. The number of communications, however, is much lower, leading to a large time for the best subspace to spread. The optimal $b^{*}$ trades-off between these two competing effects.

\subsection{Impact of Network Structure on Regret}
We can obtain the dependence of regret bound on network-related parameters by expressing the term $\mathbb{E}[b^{2\tau_{\mathrm{spr}}^{(G)}}]$ in terms of the conductance $\phi$ of the gossip matrix (graph) $G$. In order to do so, we use a result obtained in \cite{aistats_gossip}, Corollary 17, which we reproduce here:

For a $d$-regular\footnote{standard graph-theoretic notion which has nothing to do with ambient dimension $d$ in $\mathbb{R}^d$} graph with adjacency matrix $A_{G}$, conductance $\phi$ and gossip matrix $G=d^{-1}A_{G}$, $\mathbb{E}[b^{2\tau_{\mathrm{spr}}^{(G)}}] \leq b^{\frac{2C \log N}{\phi}}$ for all $b \leq \mathrm{exp}\left(\frac{\phi}{C}\right)$, where $C$ is a universal constant.

Using the above result, we now consider an illustrative example 
in which we assume that the agents are connected by a complete graph, i.e., $G(i,j)=\frac{1}{N-1}$ for $j \neq i$, $0$ otherwise.
In this case, it is easy to see that for all $N$ and $b \leq \exp\left(\frac{N}{2(N-1)C}\right)$ (where $C$ is an universal constant), $\mathbb{E}[b^{2\tau_{\mathrm{spr}}^{(G)}}] \leq \alpha N^{2(\log_{2}b+ \log b)}$ for some constant $\alpha>0$, independent of $N$ (see Corollary 16 of \cite{aistats_gossip}, where we substitute the conductance $\phi = \frac{N}{2(N-1)}$ for the complete graph). This is because for the complete graph, $\tau_{\mathrm{spr}}^{(G)} \leq \log_{2}N + \log N$ with high probability. Corollary \ref{cor_regret} quantifies the impact of underlying network on regret scaling.
\begin{corollary}
\label{cor_regret}
Suppose the agents are connected by a complete graph and 

\noindent$b=\min\left\{\exp\left(\frac{\log 2}{1+\log 2}.\frac{1}{2}\right), \exp\left(\frac{N}{2(N-1)C}\right)\right\}$, i.e., $\log_{2}b+ \log b \leq \frac{1}{2}$. 
With the same assumptions for $\lambda$ and $\delta$ as in Theorem \ref{thm:mainresult}, the regret scaling of any agent $i \in [N]$ after playing \name \ Algorithm for $T$ time steps is given by
\begin{multline}
  \label{eqn:cor_regret}
    \mathbb{E}[R_T^{(i)}] \leq \underbrace{O(m\sqrt{T} \log T)}_{\text{Projected LinUCB Regret}} + \underbrace{O \left(\frac{K}{N}m \sqrt{T} \right)}_{\text{Cost of Subspace Exploration}} + \underbrace{O(N)}_{\mathbb{E}[b^{2\tau_{\mathrm{spr}}^{(G)}}]} +  O\left(\left(m.\frac{K}{N}\right)^4\right) + O\left(\frac{m^{4}N}{\Delta^6}\right),
\end{multline}
where $C > 0$ is an universal constant. 
\end{corollary}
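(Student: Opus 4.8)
\textbf{Proof sketch of Corollary~\ref{cor_regret}.}
The plan is to specialize Theorem~\ref{thm:mainresult} to the complete graph with the prescribed $b$ and bound each of the three groups of terms in \eqref{eqn:per-agent-regret-thm}. The key preliminary observation is that, for every $N \geq 2$, the choice $b=\min\{\exp(\tfrac{\log 2}{1+\log 2}\cdot\tfrac12),\ \exp(\tfrac{N}{2(N-1)C})\}$ satisfies $1 < b \leq \exp(\tfrac{\log 2}{2(1+\log 2)})$, a constant independent of $N$ (since $\tfrac{N}{2(N-1)}\geq\tfrac12$), so that $g(b)=\tfrac{1}{b-1}+\tfrac{1}{\log b}$, $\tfrac{b^3}{\log b}$, $b$, and $\tfrac{1}{\sqrt b-1}$ are all $\BigO{1}$; moreover the two branches of the $\min$ guarantee simultaneously that $\log_2 b + \log b \leq \tfrac12$ and that $b \leq \exp(\tfrac{N}{2(N-1)C})$, which are precisely the hypotheses needed to invoke the complete-graph spreading-time bound of \cite{aistats_gossip}.

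Next I would handle the Projected LinUCB regret. With $\lambda\geq 1$ fixed, $\beta_T = \sqrt\lambda + \sqrt{2\log T + m\log(1+\tfrac{T-1}{\lambda m})} = \BigO{\sqrt{m\log T}}$ and $\log(1+\tfrac{T}{m\lambda}) = \BigO{\log T}$, hence $\sqrt{8mT\beta_T^2\log(1+\tfrac{T}{m\lambda})}+2 = \BigO{\sqrt{m^2 T\log^2 T}} = \BigO{m\sqrt T\log T}$, which is the first term of \eqref{eqn:cor_regret}.

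For the constant cost of pairwise communications I would use the three cited facts. By Proposition~\ref{prop:tau0bound}, $\tau_0 \leq 2\log_b\!\bigl(16m(\tfrac KN+2)\bigr)+1$, so $\lceil b^{2\tau_0}\rceil = \BigO{(m\tfrac KN)^4}$. The middle summand $\tfrac{48 b^3}{\log b}\cdot\tfrac{m^4 N}{\Delta^6}$ is $\BigO{\tfrac{m^4 N}{\Delta^6}}$ by the first paragraph. For the last summand, Corollary~16 of \cite{aistats_gossip} applied with conductance $\phi=\tfrac{N}{2(N-1)}$ (legitimate because $b\leq\exp(\tfrac{N}{2(N-1)C})$) gives $\mathbb{E}[b^{2\tau_{spr}^{(G)}}] \leq \alpha N^{2(\log_2 b + \log b)} \leq \alpha N$, using $\log_2 b + \log b \leq \tfrac12$. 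Multiplying through by the $\BigO{1}$ prefactors $2g(b)$ and $b$ yields $\BigO{(m\tfrac KN)^4}+\BigO{\tfrac{m^4 N}{\Delta^6}}+\BigO{N}$.

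Finally, for the cost of subspace exploration, $h_{b,T}=b(1+(T-1)(b-1))=\BigO{T}$ since $b=\BigO{1}$, so $\log_b(h_{b,T})=\BigO{\log T}$ and $\sqrt{h_{b,T}}=\BigO{\sqrt T}$; therefore $16m(\tfrac KN+2)\log_b(h_{b,T}) + 16m(\tfrac KN+2)\tfrac{\sqrt{h_{b,T}}-1}{\sqrt b-1} = \BigO{\tfrac KN m\log T} + \BigO{\tfrac KN m\sqrt T} = \BigO{\tfrac KN m\sqrt T}$, the $\log T$ piece being absorbed into the $\sqrt T$ piece. Summing the three groups reproduces \eqref{eqn:cor_regret}. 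The bulk of the argument is bookkeeping; the only point requiring care — and the mild obstacle — is verifying that the single numerical choice of $b$ meets both the spreading-time precondition and the $\log_2 b + \log b \leq \tfrac12$ constraint while keeping every $b$-dependent constant bounded uniformly in $N$, and then importing the exponential-moment bound on $\tau_{spr}^{(G)}$ from \cite{aistats_gossip} with the correct conductance for the complete graph.
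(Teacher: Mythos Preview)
Your proposal is correct and follows essentially the same approach the paper takes: the paper does not write out a proof for Corollary~\ref{cor_regret} but indicates (in the paragraph preceding it and in Remark~1 after Theorem~\ref{thm:mainresult}) that it follows by substituting the complete-graph spreading-time bound $\mathbb{E}[b^{2\tau_{spr}^{(G)}}]\leq\alpha N^{2(\log_2 b+\log b)}$ from Corollary~16 of \cite{aistats_gossip} and the bound $b^{2\tau_0}=O((mK/N)^4)$ from Proposition~\ref{prop:tau0bound} into Theorem~\ref{thm:mainresult}. Your careful verification that the chosen $b$ is bounded away from $1$ uniformly in $N$ (so that $g(b)$, $\tfrac{b^3}{\log b}$, and $\tfrac{1}{\sqrt b-1}$ are all $O(1)$) is the only nontrivial bookkeeping, and you handle it correctly.
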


In (\ref{eqn:cor_regret}), the $O(\cdot)$ notation only hides input constants and universal constants. 
It is evident from (\ref{eqn:cor_regret}) that the network structure does not affect the time scaling in regret. 

\subsection{Proof Sketch}
\label{proofsketch}
We provide the proof of Theorem \ref{thm:mainresult} in Appendix \ref{proof:main}, however, we summarize its salient ideas here. Similar to the phenomenon in unstructured bandits \cite{aistats_gossip}, we prove in Proposition \ref{prop:strong_freeze} that in our linear bandit setup, after a \emph{freezing phase} $\tau$, all agents have the correct subspace containing $\theta^{*}$. Consequently, for all phases $j > \tau$, all agents will play Projected LinUCB \cite{hassibi_PSLB} from the correct subspace in the exploit time instants and recommend it at the end of the phase. Therefore, the set of subspaces every agent has does not change after phase $\tau$ and the regret after phase $\tau$ can be decomposed into regret due to pure exploration and regret due to Projected LinUCB (Proposition \ref{prop:regdecomp}).

The technical novelty of our proof lies in bounding the regret till phase $\tau$, i.e., $\mathbb{E}\left[\tau + \frac{b^{\tau}-1}{b-1}\right]$ (Proposition \ref{prop:reg_btau}) and in particular showing it to be finite. This follows from two key observations arising from pure exploration in the explore time steps. First, for any agent $i \in [N]$ and subspace ID $k \in S_{j}^{(i)}$, the estimate $\widetilde{\theta}_{k,j}^{(i)}$ of $\theta^*$ using only its explore samples up to and including phase $j$, concentrates to $P_{k}\theta^{*}$ in $l_2$ norm (Lemma \ref{lemma:intstep}). Therefore, for any subspace ID $k$, $\|\widetilde{\theta}_{k,j}^{(i)}\|_2$ concentrates to $\|P_{k}\theta^*\|_2$ and eventually, the correct subspace $\mathrm{span}(U_1)$ will achieve the largest value of  $\|\widetilde{\theta}_{k,j}^{(i)}\|_2$ among all subspaces, if present in the active set. Subsequently, we use the previous fact in Lemma \ref{lemma:wrongsubspace} to show that if an agent has the correct subspace containing $\theta^{*}$ in a phase, the probability that it will not be recommended and hence dropped at the end of the phase is small.

Combining these two observations we establish that, after a random phase denoted by $\widehat{\tau}_{\mathrm{stab}}$ in Appendix \ref{proof:main}, satisfying $\mathbb{E}[\widehat{\tau}_{\mathrm{stab}}]<\infty$, agents never recommend incorrectly at the end of a phase and thus play the Projected LinUCB on the correct subspace in the exploit time instants of a phase. 
To conclude, after random phase $\widehat{\tau}_{\mathrm{stab}}$, the spreading time can be coupled with that of a standard rumor spreading \cite{rumor_spread}, as once an agent learns the correct subspace, it is not dropped by the agent.
This final part is similar to the one conducted for unstructured bandits in \cite{aistats_gossip}, 
giving us the desired bound on $\mathbb{E}\left[\tau + \frac{b^{\tau}-1}{b-1}\right]$. \\


\noindent {\bf Remark (Freezing time):} The freezing phase $\tau$ is a quantity only showing up in the analysis, but is not part of the algorithm. In fact, the algorithm needs all agents to explore and communicate in all phases indefinitely, because $\tau$ is a sample-path dependent quantity. Indeed, any bandit algorithm that can achieve sub-linear cumulative regret in the stochastic setting inherently has such a freezing time with finite expectation (including in the classical single-agent $K$-armed bandit); beyond this time, the best arm is identified with high probability. This can be shown by noting that sub-linear regret implies that the probability the best arm is not played at time $t$, decreases to $0$ as $t$ goes to infinity. The finite freezing time follows from the simple Hoeffding inequality and Borel-Cantelli lemma. However, this is not useful in the algorithm but serves only as a proof technique. Formally, this random time $\tau$ is not a stopping time, and cannot be determined in an online fashion. Moreover, despite the existence of such a freezing time, the lower bounds for regret increase with the time horizon, showing that infinite exploration is necessary.
\\

\noindent {\bf Remark (Technical differences w.r.t. \cite{aistats_gossip})}:
The algorithms in \cite{aistats_gossip} and the \name \ Algorithm appear similar, because of the correspondence between the subspaces in our setup and the arms in a $K$-armed bandit. 
However, this correspondence is superficial, because unlike an arm, a subspace represents a continuum of actions, instead of just being an action.
In order to quantify the reward corresponding to a subspace, one has to form an estimate of $\theta^*$ in that subspace by playing the sequence of action vectors spanning that subspace.

Furthermore, any given phase in \name \ bears a superficial resemblance to the \emph{explore-then-commit} (ETC) algorithm, wherein the explore part of the phase that identifies the subspace to commit to in the exploit part, analogous to best arm identification in the standard $K$-armed bandit.
However, the ETC algorithm requires the knowledge of lower bound of arm mean gaps as an input. In our model, this translates to agents needing knowledge of the distance between subspaces (denoted by $\Delta_k$ for all $k \neq 1$), which requires knowledge of $\theta^*$.
We circumvented this through a phased approach with exponentially increasing lengths, where each phase has an explore part and a commit part. 
The phases with exponentially increasing lengths ensure that: (i) the probability of picking a subspace not containing $\theta^*$ decreases with every phase, (ii) the increasing duration of playing Projected LinUCB within a phase as the phases progress minimizes the cumulative regret, and (iii) does not need knowledge of gap between subspaces. 
The consequence of not knowing $\theta^*$ is why agents need to continually explore in the explore part of every phase, as opposed to only exploring once in the beginning.
\\


\noindent {\bf Remark (Discussion on a lower bound)}:
We provide a brief discussion about the fundamental limits of our model (in terms of cumulative regret) to evaluate the effectiveness of \name \ Algorithm.
We conjecture that the regret of $\Omega(m\sqrt{T})$ is unavoidable.

The above claim can be argued as follows: in our model, an agent can exchange at most $\log_2 K + 1$ number of bits each time it chooses to communicate. 
One can consider the scenario in which whenever an agent decides to pull subspace recommendation (as a subspace ID in $\{1, \cdots, K\}$ can be perfectly described by $\log_2 K + 1$ number of bits) from another agent based on gossip matrix $G$, suppose it always receives the ID of the correct subspace containing $\theta^*$. 
In that case, the agent doesn't have to incur any regret in finding the correct subspace. 
However, given that there is no sample (action vectors and corresponding rewards) sharing possible between the agents, an agent will still have to search for $\theta^*$ in the correct subspace by itself. 
From \cite{lattimore_book}, Chapter 24, we know that finding $\theta^*$ in $\mathbb{R}^{d}$ without any side information results in $\Omega(d\sqrt{T})$ regret.
Given that the subspaces are $m$-dimensional, we can replace $d$ with $m$ in the previous statement and conclude that finding $\theta^*$ in the correct subspace incurs a regret of $\Omega(m \sqrt{T})$.

However, formalizing this argument requires surmounting some technical challenges. First, we need to precisely define the space of allowed communication policies without prescribing the content of the messages, for example, those that communicate at most a fixed number of bits at each time instant and total number of bits that scales as the logarithm of the time horizon. Once this is done, we need to establish that no communication policy under this constraint can encode knowledge of the true underlying $\theta^{*}$ to small enough precision, and show that communicating information other than subspace indices does not yield regret reduction. While the preceding paragraph provides a plausible intuition for the $\Omega(m\sqrt{T})$ lower bound, a detailed argument is left to future work.  



\section{Benefit of Collaboration in High Dimensions}
\label{collab_benefit}
In this section, we illustrate how collaboration aids in reducing regret for each agent in the high-dimensional setting. We quantify this by computing the ratio of the regret upper bound achieved by \name \ Algorithm without collaboration to that achieved with collaboration for any agent, denoted by $r_{C}(T)$.
The \emph{high-dimensional} setting corresponds to large $d$, $m$ a constant, $K$ and $N$ scaling linear in $d$ (system with a large number of agents).

\begin{corollary}
\label{cor:collabbenefit}
Consider a high-dimensional system where $N$ agents are connected by a complete graph with $K = N = \frac{d}{m}$, where $d$ is a multiple of $m$ and $m$ is a constant. Assume that $d \geq 3m$. With the choice of $b$ as in Corollary \ref{cor_regret},  

(a) for any agent $i \in [N]$, the regret with collaboration scales as $O(m\sqrt{T} \log T)$,

(b) $r_{C}(T) = \frac{r_{S}(T)}{r_{M}(T)}$, where
	\begin{multline*}
	r_{S}(T) = \sqrt{8mT\beta_{T}^{2}\log\left(1+\frac{T}{m\lambda}\right)}+2+ 2g(b)\left(\lceil b(16d)^2 \rceil + \frac{8b^2}{\log b}.\frac{m^2}{\Delta^2}\right)\\+ 16d \log_{b}(h_{b, T}) + 16 d \frac{\sqrt{h_{b, T}}-1}{\sqrt{b}-1},
	\end{multline*}
	\begin{multline*}
	r_{M}(T) = \sqrt{8mT\beta_{T}^{2}\log\left(1+\frac{T}{m\lambda}\right)}+2+ 2g(b)\left(\lceil b^{2} (48m)^4 \rceil + \frac{48b^3}{\log b}.\frac{m^{3}d}{\Delta^6}+\frac{\alpha d}{m}\right) \\+ 48 m \log_{b}(h_{b, T})
+ 48 m \frac{\sqrt{h_{b, T}}-1}{\sqrt{b}-1}.
	\end{multline*}
\end{corollary}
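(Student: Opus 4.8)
The statement is a specialization of the two main regret bounds — Theorem \ref{thm:mainresult} for the collaborative run of \name\ and Theorem \ref{thm:mainresult_single} for a single agent running \name\ without communication — to the regime $K = N = d/m$ on the complete graph, with the particular $b$ fixed in Corollary \ref{cor_regret}. The plan for part (b) is to carry the substitution $K/N = 1$ (so $K/N + 2 = 3$, producing the $48m$ prefactors in the subspace-exploration terms) through the bound of Theorem \ref{thm:mainresult}; use Proposition \ref{prop:tau0bound} with $16m(K/N+2) = 48m$ to replace $\lceil b^{2\tau_0}\rceil$ by $\lceil b^2(48m)^4\rceil$; substitute $N = d/m$ into $\frac{48b^3}{\log b}\cdot\frac{m^4 N}{\Delta^6}$ to get $\frac{48b^3}{\log b}\cdot\frac{m^3 d}{\Delta^6}$; and, for the complete graph with the prescribed $b$ (which is exactly the value making $\log_2 b + \log b \le \tfrac12$), invoke Corollary 16 of \cite{aistats_gossip} with conductance $\phi = \frac{N}{2(N-1)}$ to get $\mathbb{E}[b^{2\tau_{spr}^{(G)}}] \le \alpha N^{2(\log_2 b + \log b)} \le \alpha N = \alpha d/m$, absorbing the leading factor $b \le 2$ into $\alpha$. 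This identifies the resulting bound with $r_M(T)$. Symmetrically, Theorem \ref{thm:mainresult_single} applied with $K = d/m$ subspaces — a single agent keeps all $K$ of them active at all times and never communicates, so there is no spreading term — and the identity $mK = d$ give the $16d$ prefactors in the exploration terms, the $\lceil b(16d)^2\rceil$ term from its $\tau_0$-bound, and the $\frac{8b^2}{\log b}\cdot\frac{m^2}{\Delta^2}$ estimation-gap term; this is $r_S(T)$. Dividing one by the other yields $r_C(T) = r_S(T)/r_M(T)$.

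Part (a) then follows by reading off the $T$-dependence of each summand of $r_M(T)$, with $m, \lambda, \Delta, b, d$ held fixed. Since $\beta_T^2 = \BigO{m\log T}$, the Projected LinUCB term is $\sqrt{8mT\beta_T^2\log(1 + T/(m\lambda))} = \BigO{m\sqrt{T}\log T}$; the three constant-cost terms $\lceil b^2(48m)^4\rceil$, $\frac{48b^3}{\log b}\cdot\frac{m^3 d}{\Delta^6}$ and $\frac{\alpha d}{m}$ are $\BigO{1}$ in $T$; and since $h_{b,T} = \Theta(T)$, the two subspace-exploration terms are $48m\log_b(h_{b,T}) = \BigO{m\log T}$ and $48m\frac{\sqrt{h_{b,T}}-1}{\sqrt{b}-1} = \BigO{m\sqrt{T}}$. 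Every summand is dominated by $\BigO{m\sqrt{T}\log T}$, so $r_M(T) = \BigO{m\sqrt{T}\log T}$, which is the asserted scaling of the collaborative regret.

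The content here is bookkeeping rather than a new argument — the real work sits in Theorems \ref{thm:mainresult} and \ref{thm:mainresult_single} (via Lemmas \ref{lemma:intstep} and \ref{lemma:wrongsubspace}). The points needing care are (i) confirming the single-agent bound has exactly the claimed shape, in particular that it carries no $\tau_{spr}^{(G)}$ term and that its $\tau_0$-contribution enters as $\lceil b^{\tau_0}\rceil$ (so Proposition \ref{prop:tau0bound}-type reasoning, together with $mK = d$, yields $\lceil b(16d)^2\rceil$); and (ii) checking that the prescribed $b$ forces the exponent $2(\log_2 b + \log b)$ in the complete-graph spreading bound to be at most $1$, which is precisely what makes $\mathbb{E}[b^{2\tau_{spr}^{(G)}}]$ grow only linearly in $N$ (hence in $d$) and is why $b$ was chosen as in Corollary \ref{cor_regret}. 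No step is a genuine obstacle; the corollary is best read as a worked instance of the main theorems that exposes the $\Omega(d/\log d)$ collaborative gain discussed in the remarks following it.
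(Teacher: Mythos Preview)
Your proposal is correct and follows exactly the paper's approach: the paper's proof is simply ``The proof follows by substituting $K=N=\frac{d}{m}$ in Theorems \ref{thm:mainresult} and \ref{thm:mainresult_single}, along with the bound for spreading time $\mathbb{E}[b^{2\tau_{spr}^{(G)}}]$ from Corollary \ref{cor_regret}.'' You have spelled out the substitutions in more detail than the paper does (including the $K/N+2=3$ simplification, the use of Proposition~\ref{prop:tau0bound} to produce $\lceil b^2(48m)^4\rceil$, the distinction that the single-agent bound carries $\lceil b^{\bar\tau_0}\rceil$ rather than $\lceil b^{2\tau_0}\rceil$, and the absorption of the factor $b$ into $\alpha$ for the spreading term), but the content is identical.
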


\begin{proof}
The proof follows by substituting $K=N=\frac{d}{m}$ in Theorems \ref{thm:mainresult} and \ref{thm:mainresult_single}, along with the bound for spreading time $\mathbb{E}[b^{2\tau_{\mathrm{spr}}^{(G)}}]$ from Corollary \ref{cor_regret}. 
\end{proof}

The following {\bf observations} can be deduced from point $(b)$ of Corollary \ref{cor:collabbenefit}:


(i) when $T = \Theta\left(d^{1+\gamma}\right)$ for all $\gamma \geq 0$, $r_{C}(T) = \Omega\left(\frac{d}{\log d}\right)$.

(ii) when $T = \Theta(e^{d^\beta})$ for all $\beta \in (0, 1)$, $r_{C}(T) = \Omega(d^{1-\beta})$.

(iii) when $T = \Omega(e^{d})$, $r_{C}(T) = \Omega(1)$.
\\

Two remarks are now in order.
\\

{\bf{Remarks}:}

1. \textbf{Matching an oracle's regret rate, asymptotically:} \ref{cor:collabbenefit} shows the power of collaboration in a large multi-agent system, as the regret scaling for any agent $i \in [N]$ matches that of a genie who is already aware of the subspace containing $\theta^*$ and can play Projected LinUCB on that subspace \cite{hassibi_PSLB}. This demonstrates that the cost of subspace search can be amortized across agents and only contributes a lower order term in regret, despite agents communicating infrequently (a total of $O(\log T)$ number of pairwise communications by every agent) and exchanging a limited number of bits in each communication (no sample sharing). 
Furthermore, the discussion in Section \ref{subgossreg} implies that in the absence of sample sharing between agents, an agent will incur $\Omega(m\sqrt{T})$ regret for finding $\theta^*$ in the correct subspace.
Thus, \name \ Algorithm is \emph{near-optimal} even in high-dimensional settings with large number of subspaces and agents. 

2. \textbf{Finite-time gains due to faster search of subspaces with collaboration:} The observations following the Corollary \ref{cor:collabbenefit} show that even the single agent running \name \ without communications is able to utilize the side information and incur lower regret ($O(m\sqrt{T}\log T + d\sqrt{T})$), compared to an agent running OFUL \cite{oful} without any side information.
However, the time taken by the single agent running \name \ to reap the benefits of the subspace side information is very large in high-dimensional settings ($T=\Omega(e^d)$).
In contrast, the ability of a multi-agent system to learn the right subspace faster is what leads to large collaborative gain of $r_{C}(T)= \Omega\left(\frac{d}{\log d}\right)$ by time $T=\Omega(d)$. 
These gains are also observed empirically in Figure \ref{fig1}.
These gains are more pronounced and are observed for large duration of time in settings with large $d$, which is typical in many modern applications.

\section{Numerical Results}

We evaluate the \name \ algorithm empirically in synthetic simulations. We show the cumulative regret (after averaging across all agents) over $30$ random runs of the algorithm with $95\%$ confidence intervals. We compare its performance with two benchmarks: \name \ algorithm with no collaborations (i.e., a single agent playing \name \ algorithm) and a single agent playing the OFUL (classical LinUCB) algorithm of \cite{oful}. In this section, the number of times a subspace $\mathrm{span}(U_k)$ (where $k \in S_{j}^{(i)}$) is explored during the {\sc{Explore}} subroutine in phase $j$ is set to $m \lceil b^{\frac{j-2}{2}} \rceil$, as the constants in \name \ algorithm (Algorithm \ref{algo:main-algo}) arise from somewhat loose tail bounds.

\begin{figure*}[h]
\centering
\includegraphics[width=\textwidth]{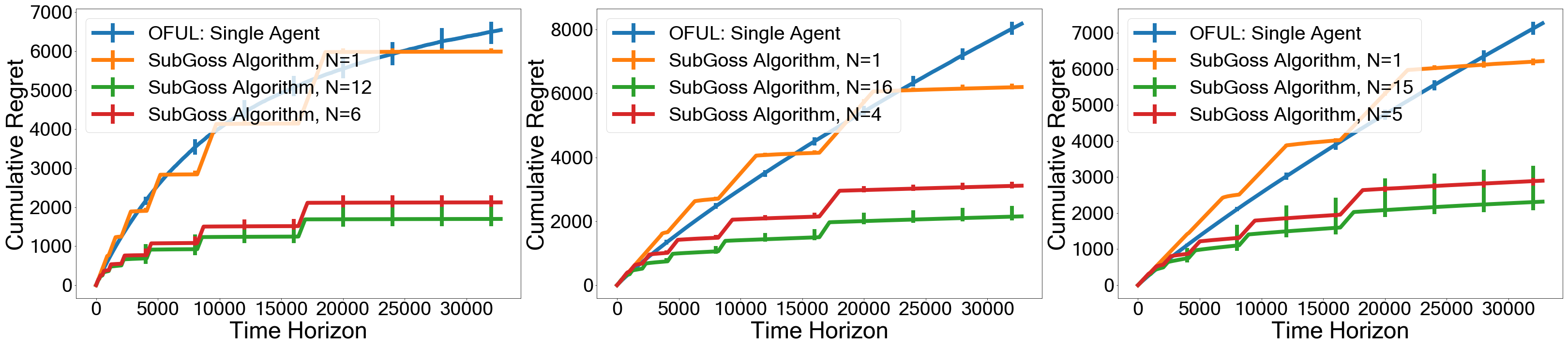}
\caption{Illustrating benefit of collaboration. $(d, m, K)$ are (24, 2, 12), (48, 3, 16), and (60, 4, 15) respectively.}
\label{fig1}
\end{figure*}

In our experiments, the agents are connected through a complete graph. 
Each $m$-dimensional subspace is the orthogonal matrix obtained by the SVD of a random $d \times m$ matrix with i.i.d. standard normal entries.
The action set $\mathcal{A}$ consists of $5d$ i.i.d. Gaussian vectors on surface of the unit $l_2$ ball, along with orthonormal basis vectors for each of the $K$ subspaces.
The vector $\theta^*$ is the projected version of a standard Gaussian vector onto subspace $1$ (the true subspace).
We set $b=2$ and $\lambda = 1$ in simulations. 
Fig. \ref{fig1} evaluates the performance of \name \ algorithm for different values of problem parameters $(d,m,K,N)$.

{\bf Insights from numerical results:} 
From simulations we confirm several insights predicted by our theory. First, we see that \name\ yields lower regret than OFUL for the single agent case, demonstrating that \name\ can effectively leverage the side-information provided through the subspaces. Second, we observe the collaboration gains, where any agent in the multi-agent setting incurs far smaller regret compared to a single agent without collaboration. Finally, we also observe that as the number of agents increases, the regret for every agent decreases. These collaborative gains follow, as each agent has to search through a smaller set of subspaces to find the true subspace.

%


\section{Conclusions and Open Problems}
We studied a multi-agent linear bandit problem with side information (in the form of disjoint $m$-dimensional subspaces), where only one of the subspaces contains the unknown parameter $\theta^* \in \mathbb{R}^{d}$, but agents are unaware of the subspace containing it. 
We proposed a novel decentralized algorithm, where agents collaborate by sending recommendations through pairwise gossip communications across a communication graph connecting them, to minimize their individual cumulative regret. 
We demonstrated that distributing the search for the subspace containing $\theta^*$ across the agents and learning of the unknown vector in the corresponding low-dimensional subspace results in a much smaller per-agent regret, compared to the case when agents do not communicate. 
However, the paper leaves open, some important questions. The paper assumed that all agents have exact knowledge of the subspaces. In several practical applications however, the subspaces are estimated from historical data and as such can only be known noisily at best. Developing algorithms that can leverage benefit from collaboration while being robust to mis-specifications is an interesting direction for future work.  Another open problem is to establish lower bounds on regret under our model of information sharing. This is non-trivial to define since the communication budget needs to be accounted for in regret. To the best of our knowledge, lower bounds involving both communication and regret minimization have not been established even for the simple unstructured bandits case. 

\section*{Acknowledgements}
This work was partially supported by ONR Grant N00014-19-1-2566, NSF Grant SATC 1704778, ARO grant W911NF-17-1-0359, the NSA SoS Lablet H98230-18-D-0007 and the WNCG Industrial Affiliates Program.

\bibliographystyle{plain}
\bibliography{ref-lin-bandits}

\begin{thebibliography}{10}

\bibitem{oful}
Yasin Abbasi-Yadkori, D{\'a}vid P{\'a}l, and Csaba Szepesv{\'a}ri.
\newblock Improved algorithms for linear stochastic bandits.
\newblock In {\em Advances in Neural Information Processing Systems}, pages
  2312--2320, 2011.

\bibitem{abbasisparseonline}
Yasin Abbasi-Yadkori, David Pal, and Csaba Szepesvari.
\newblock Online-to-confidence-set conversions and application to sparse
  stochastic bandits.
\newblock In {\em Artificial Intelligence and Statistics}, pages 1--9. PMLR,
  2012.

\bibitem{practical_agarwal}
Alekh Agarwal, Sarah Bird, Markus Cozowicz, Luong Hoang, John Langford, Stephen
  Lee, Jiaji Li, Dan Melamed, Gal Oshri, Oswaldo Ribas, et~al.
\newblock Making contextual decisions with low technical debt.
\newblock {\em arXiv preprint arXiv:1606.03966}, 2016.

\bibitem{amani2020decentralized}
Sanae Amani and Christos Thrampoulidis.
\newblock Decentralized multi-agent linear bandits with safety constraints.
\newblock {\em arXiv preprint arXiv:2012.00314}, 2020.

\bibitem{bayati}
Hamsa Bastani and Mohsen Bayati.
\newblock Online decision making with high-dimensional covariates.
\newblock {\em Operations Research}, 68(1):276--294, 2020.

\bibitem{got}
Ilai Bistritz and Amir Leshem.
\newblock Distributed multi-player bandits-a game of thrones approach.
\newblock In {\em Advances in Neural Information Processing Systems}, pages
  7222--7232, 2018.

\bibitem{sic_mmab}
Etienne Boursier and Vianney Perchet.
\newblock Sic-mmab: synchronisation involves communication in multiplayer
  multi-armed bandits.
\newblock In {\em Advances in Neural Information Processing Systems}, pages
  12048--12057, 2019.

\bibitem{bubeck_book}
S{\'e}bastien Bubeck, Nicolo Cesa-Bianchi, et~al.
\newblock Regret analysis of stochastic and nonstochastic multi-armed bandit
  problems.
\newblock {\em Foundations and Trends{\textregistered} in Machine Learning},
  5(1):1--122, 2012.

\bibitem{buccapatnam}
Swapna Buccapatnam, Jian Tan, and Li~Zhang.
\newblock Information sharing in distributed stochastic bandits.
\newblock In {\em 2015 IEEE Conference on Computer Communications (INFOCOM)},
  pages 2605--2613. IEEE, 2015.

\bibitem{carpentiersparsebandit}
Alexandra Carpentier and R{\'e}mi Munos.
\newblock Bandit theory meets compressed sensing for high dimensional
  stochastic linear bandit.
\newblock In {\em Artificial Intelligence and Statistics}, pages 190--198.
  PMLR, 2012.

\bibitem{advertising}
Deepayan Chakrabarti, Ravi Kumar, Filip Radlinski, and Eli Upfal.
\newblock Mortal multi-armed bandits.
\newblock In {\em Advances in neural information processing systems}, pages
  273--280, 2009.

\bibitem{colab1}
Mithun Chakraborty, Kai Yee~Phoebe Chua, Sanmay Das, and Brendan Juba.
\newblock Coordinated versus decentralized exploration in multi-agent
  multi-armed bandits.
\newblock In {\em IJCAI}, pages 164--170, 2017.

\bibitem{aistats_gossip}
Ronshee Chawla, Abishek Sankararaman, Ayalvadi Ganesh, and Sanjay Shakkottai.
\newblock The gossiping insert-eliminate algorithm for multi-agent bandits.
\newblock In {\em Proceedings of the Twenty Third International Conference on
  Artificial Intelligence and Statistics}, volume 108 of {\em Proceedings of
  Machine Learning Research}, pages 3471--3481, 2020.

\bibitem{rumor_spread}
Flavio Chierichetti, Silvio Lattanzi, and Alessandro Panconesi.
\newblock Almost tight bounds for rumour spreading with conductance.
\newblock In {\em Proceedings of the forty-second ACM symposium on Theory of
  computing}, pages 399--408, 2010.

\bibitem{dani2008stochastic}
Varsha Dani, Thomas~P Hayes, and Sham~M Kakade.
\newblock Stochastic linear optimization under bandit feedback.
\newblock {\em COLT}, 2008.

\bibitem{dubey2020differentiallyprivate}
Abhimanyu Dubey and Alex Pentland.
\newblock Differentially-private federated linear bandits.
\newblock {\em arXiv preprint arXiv:2010.11425}, 2020.

\bibitem{gerchinovitz2011sparsity}
S{\'e}bastien Gerchinovitz.
\newblock Sparsity regret bounds for individual sequences in online linear
  regression.
\newblock In {\em Proceedings of the 24th Annual Conference on Learning
  Theory}, pages 377--396. JMLR Workshop and Conference Proceedings, 2011.

\bibitem{bandit-data-center}
Eshcar Hillel, Zohar~S Karnin, Tomer Koren, Ronny Lempel, and Oren Somekh.
\newblock Distributed exploration in multi-armed bandits.
\newblock In {\em Advances in Neural Information Processing Systems}, pages
  854--862, 2013.

\bibitem{colab3}
Ravi~Kumar Kolla, Krishna Jagannathan, and Aditya Gopalan.
\newblock Collaborative learning of stochastic bandits over a social network.
\newblock {\em IEEE/ACM Transactions on Networking}, 26(4):1782--1795, 2018.

\bibitem{pmlr-v48-korda16}
Nathan Korda, Balazs Szorenyi, and Shuai Li.
\newblock Distributed clustering of linear bandits in peer to peer networks.
\newblock In {\em Proceedings of The 33rd International Conference on Machine
  Learning}, pages 1301--1309. PMLR, 2016.

\bibitem{hassibi_PSLB}
Sahin Lale, Kamyar Azizzadenesheli, Anima Anandkumar, and Babak Hassibi.
\newblock Stochastic linear bandits with hidden low rank structure.
\newblock {\em CoRR}, abs/1901.09490, 2019.

\bibitem{lattimore_book}
Tor Lattimore and Csaba Szepesv{\'a}ri.
\newblock {\em Bandit algorithms}.
\newblock Cambridge University Press, 2020.

\bibitem{langford-news}
Lihong Li, Wei Chu, John Langford, and Robert~E Schapire.
\newblock A contextual-bandit approach to personalized news article
  recommendation.
\newblock In {\em Proceedings of the 19th international conference on World
  wide web}, pages 661--670. ACM, 2010.

\bibitem{colab4}
David Mart{\'\i}nez-Rubio, Varun Kanade, and Patrick Rebeschini.
\newblock Decentralized cooperative stochastic bandits.
\newblock In {\em Advances in Neural Information Processing Systems}, pages
  4531--4542, 2019.

\bibitem{sigmetrics}
Abishek Sankararaman, Ayalvadi Ganesh, and Sanjay Shakkottai.
\newblock Social learning in multi agent multi armed bandits.
\newblock {\em Proceedings of the ACM on Measurement and Analysis of Computing
  Systems}, 3(3):1--35, 2019.

\bibitem{contextual_health}
Ambuj Tewari and Susan~A Murphy.
\newblock From ads to interventions: Contextual bandits in mobile health.
\newblock In {\em Mobile Health}, pages 495--517. Springer, 2017.

\bibitem{vial2020robust}
Daniel Vial, Sanjay Shakkottai, and R~Srikant.
\newblock Robust multi-agent multi-armed bandits.
\newblock {\em arXiv preprint arXiv:2007.03812}, 2020.

\bibitem{proutiere}
Po-An Wang, Alexandre Proutiere, Kaito Ariu, Yassir Jedra, and Alessio Russo.
\newblock Optimal algorithms for multiplayer multi-armed bandits.
\newblock In {\em International Conference on Artificial Intelligence and
  Statistics}, pages 4120--4129. PMLR, 2020.

\bibitem{Wang2020Distributed}
Yuanhao Wang, Jiachen Hu, Xiaoyu Chen, and Liwei Wang.
\newblock Distributed bandit learning: Near-optimal regret with efficient
  communication.
\newblock In {\em International Conference on Learning Representations}, 2020.

\bibitem{info-retrival}
Yisong Yue and Thorsten Joachims.
\newblock Interactively optimizing information retrieval systems as a dueling
  bandits problem.
\newblock In {\em Proceedings of the 26th Annual International Conference on
  Machine Learning}, pages 1201--1208, 2009.

\end{thebibliography}

\clearpage
\begin{appendices}
\section{Technical Assumption for Theorem \ref{thm:mainresult}} 
\label{sec:assumption_discussion}
Building on the communication constraints considered in \cite{aistats_gossip}, we make the following mild assumption:
\cite{aistats_gossip} The gossip matrix $G$ is \emph{irreducible}, i.e., for any $i \neq j \in [N]$, there exists $2 \leq l \leq N$ and $k_1,\cdots,k_l \in [N]$, with $k_1 = i$ and $k_l=j$ such that the product $G(k_1,k_2) \cdots G(k_{l-1},k_l) > 0$. In words, the communication graph among the agents is connected \cite{aistats_gossip}.

This assumption is needed because if the communication graph among the agents is not connected, the setup becomes \emph{degenerate}, as there exists at least a pair of agents which cannot participate in information exchange.
However, the practical insights that can be obtained from our results are not affected by this assumption. 

\section{Proof of Theorem \ref{thm:mainresult}}
\label{proof:main}
In this section and subsequent sections, we assume agents know the parameter $S$ such that $\|\theta^{*}\|_{2} \leq S$. In the paper, we set $S=1$ for ease of exposition. Before going through the proof, we will first set some definitions and notations.

\subsection{Definitions and Notations}
We adapt the proof ideas developed in \cite{aistats_gossip} for the unstructured bandit case.
Recall that for any phase $j$, agent $i \in [N]$, and subspace $k \in S_{j}^{(i)}$, $\widetilde{n}_{k,j}^{(i)}$ is the number of times agent $i$ explores the subspace $\mathrm{span}(U_k)$ up to and including phase $j$ and $\widehat{\mathcal{O}}_{j}^{(i)} = \arg \max_{k \in S_{j}^{(i)}} \|\widetilde{\theta}_{k,j}^{(i)}\|_{2}$.
In words, $\widehat{\mathcal{O}}_{j}^{(i)}$ is the ID of the subspace in which every agent $i \in [N]$ plays Projected LinUCB in the exploit time slots of phase $j$ and subsequently, recommends it at the end of phase $j$.
Let $\chi_{j}^{(i)}$ be the indicator variable for the event $\left\{1 \in S_{j}^{(i)}, \widehat{\mathcal{O}}_{j}^{(i)} \neq 1\right\}$, i.e., 
\begin{align*}
\chi_{j}^{(i)} = \mathbf{1}\left(1 \in S_{j}^{(i)}, \widehat{\mathcal{O}}_{j}^{(i)} \neq 1\right),    
\end{align*} 
which indicates whether agent $i$, \emph{if} it has the subspace $\mathrm{span}(U_1)$, does not recommend it at the end of a phase. Similar to \cite{aistats_gossip}, we provide the definitions of certain random times that will aid in the analysis:
\begin{align*}
\widehat{\tau}_{\mathrm{stab}}^{(i)} &= \inf \{ j^{'} \geq \tau_0 : \forall j \geq j^{'}, \chi_j^{(i)} = 0 \}, \\
\widehat{\tau}_{\mathrm{stab}} &= \max_{i \in [N]} \widehat{\tau}_{\mathrm{stab}}^{(i)},\\
\widehat{\tau}_{\mathrm{spr}}^{(i)} &= \inf\{j \geq \widehat{\tau}_{\mathrm{stab}} : 1 \in S_j^{(i)} \} - \widehat{\tau}_{\mathrm{stab}}, \\
\widehat{\tau}_{\mathrm{spr}} &= \max_{i \in \{1,\cdots,N\}} \widehat{\tau}_{\mathrm{spr}}^{(i)}, \\
{\tau} &= \widehat{\tau}_{\mathrm{stab}}+\widehat{\tau}_{\mathrm{spr}}.
\end{align*}
Here, $\widehat{\tau}_{\mathrm{stab}}^{(i)}$ is the earliest phase, such that \emph{if} agent $i$ has the subspace $\mathrm{span}(U_1)$ in the phases following it, it will recommend the subspace $\mathrm{span}(U_1)$. The number of phases it takes after $\widehat{\tau}_{\mathrm{stab}}$ to have the subspace $\mathrm{span}(U_1)$ in its playing set is denoted by $\widehat{\tau}_{\mathrm{spr}}^{(i)}$. The following proposition shows that the system is frozen after phase $\tau$, i.e. after phase $\tau$, the set of subspaces of all agents remain fixed in the future.

\begin{prop}
\label{prop:strong_freeze}
	 For all agents $i \in \{1,\cdots,N\}$, we have almost-surely,
	\begin{align*}
	\bigcap_{j \geq{\tau}} S_j^{(i)} &= S_{{\tau}}^{(i)}, \\
	\widehat{\mathcal{O}}_l^{(i)} &= 1 \text{ }\forall l \geq {\tau}, \text{ } \forall i \in \{1,\cdots,N\}.
	\end{align*}
\end{prop}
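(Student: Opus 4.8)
\textbf{Proof proposal for Proposition \ref{prop:strong_freeze}.}

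The plan is to argue from the definitions of $\widehat{\tau}_{stab}$, $\widehat{\tau}_{spr}$, and $\tau = \widehat{\tau}_{stab} + \widehat{\tau}_{spr}$, using the active-set update rule of \name, and to reduce everything to two claims: (i) the sticky set $\widehat S^{(i)}$ is always contained in $S_j^{(i)}$, so once agent $i$'s active set contains $\mathrm{span}(U_1)$ \emph{and} agent $i$ never again recommends incorrectly, then $1$ stays in $S_j^{(i)}$ forever; and (ii) once every agent both has $\mathrm{span}(U_1)$ in its active set and always recommends it, no active set can ever change again, since recommendations are the only mechanism for changing active sets and every recommendation received is an element already present in the receiver's set. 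I would carry this out as an induction on phases $j \geq \tau$.

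First I would establish the base case and the key structural facts. By definition of $\widehat{\tau}_{stab}^{(i)}$, for every $j \geq \widehat{\tau}_{stab}$ we have $\chi_j^{(i)} = 0$, i.e. \emph{if} $1 \in S_j^{(i)}$ then $\widehat{\mathcal O}_j^{(i)} = 1$. By definition of $\widehat{\tau}_{spr}^{(i)}$, we have $1 \in S_j^{(i)}$ for all $j \geq \widehat{\tau}_{stab} + \widehat{\tau}_{spr}^{(i)} \geq \widehat{\tau}_{stab} + \widehat{\tau}_{spr}= \tau$ — wait, one must be careful about the direction of this inequality; since $\widehat{\tau}_{spr} = \max_i \widehat{\tau}_{spr}^{(i)} \geq \widehat{\tau}_{spr}^{(i)}$, the phase $\widehat{\tau}_{stab}+\widehat{\tau}_{spr}$ is no earlier than $\widehat{\tau}_{stab}+\widehat{\tau}_{spr}^{(i)}$, hence $1 \in S_\tau^{(i)}$ for every $i$. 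Combining, at phase $\tau$ every agent $i$ has $1 \in S_\tau^{(i)}$ and (since $\tau \geq \widehat{\tau}_{stab}$) recommends $\widehat{\mathcal O}_\tau^{(i)} = 1$.

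Now for the inductive step: suppose for some $j \geq \tau$ we have $1 \in S_j^{(i)}$ and $\widehat{\mathcal O}_j^{(i)}=1$ for every $i \in [N]$. Fix agent $i$; it queries some agent $ag$ and receives $\mathcal O_j^{(i)} = \widehat{\mathcal O}_j^{(ag)} = 1$ by the inductive hypothesis. Since $\mathcal O_j^{(i)} = 1 \in S_j^{(i)}$, the update rule's first branch applies and $S_{j+1}^{(i)} = S_j^{(i)}$. Thus $S_{j+1}^{(i)} = S_j^{(i)}$ for all $i$ (so the active sets do not change from phase $j$ to $j+1$), and in particular $1 \in S_{j+1}^{(i)}$; and since $j+1 \geq \tau \geq \widehat{\tau}_{stab}$ and $1 \in S_{j+1}^{(i)}$, we get $\chi_{j+1}^{(i)} = 0$, i.e. $\widehat{\mathcal O}_{j+1}^{(i)} = 1$. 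This closes the induction, yielding $S_j^{(i)} = S_\tau^{(i)}$ and $\widehat{\mathcal O}_j^{(i)} = 1$ for all $j \geq \tau$, which is exactly the claim (the first equation follows since the sets are constant from phase $\tau$ on, so their intersection equals $S_\tau^{(i)}$). I would also note these events hold almost surely because $\widehat{\tau}_{stab}$ and $\widehat{\tau}_{spr}$ are almost-surely finite — though finiteness itself is deferred to the later propositions bounding $\mathbb{E}[\widehat{\tau}_{stab}]$ and the spreading-time analysis; here we only need the pathwise implication on the event $\{\tau < \infty\}$.

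The main obstacle is bookkeeping rather than depth: one must be scrupulous about the ordering $\widehat{\tau}_{spr}^{(i)} \leq \widehat{\tau}_{spr}$ and $\widehat{\tau}_{stab}^{(i)} \leq \widehat{\tau}_{stab}$ so that \emph{every} agent simultaneously satisfies both "$1$ is in the active set" and "recommends $1$" starting at the common phase $\tau$ — the induction genuinely needs all agents to be synchronized, since agent $i$'s update depends on the recommendation of an arbitrary neighbor $ag$. A secondary subtlety is confirming that the only way an active set changes is through a received recommendation (all three update branches are triggered by $\mathcal O_j^{(i)} \notin S_j^{(i)}$), so that once recommendations are always "already-present" elements, the sets are frozen; this is immediate from the pseudocode but worth stating explicitly.
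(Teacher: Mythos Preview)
Your proposal is correct and follows essentially the same approach as the paper's own proof: use $\tau \geq \widehat{\tau}_{stab}^{(i)}$ to get $\chi_j^{(i)}=0$ for all $j\ge\tau$, use $\tau \geq \widehat{\tau}_{stab}+\widehat{\tau}_{spr}^{(i)}$ to get $1\in S_j^{(i)}$, conclude $\widehat{\mathcal O}_j^{(i)}=1$ for every agent, and then invoke the update rule to see that every received recommendation is already present, hence the active sets freeze. The paper states this in one pass without a formal induction, whereas you make the phase-by-phase induction explicit and flag the persistence argument (your claim~(i)) that $1$ cannot be dropped once $j\ge\widehat{\tau}_{stab}$ and $1\in S_j^{(i)}$; that claim is indeed what bridges $1\in S^{(i)}_{\widehat{\tau}_{stab}+\widehat{\tau}_{spr}^{(i)}}$ to $1\in S^{(i)}_{\tau}$, and is implicit in the paper's proof as well.
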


\begin{proof}
	For any agent $i \in [N]$ and any phase $j \geq \tau$, we have for all $j \geq \tau$, 
	\begin{equation}
	\chi_j^{(i)} = 0,
	\label{eqn:freeze_proof1}
	\end{equation}
    as $\tau \geq \widehat{\tau}_{\mathrm{stab}}^{(i)}$. However, as $\tau \geq \widehat{\tau}_{\mathrm{stab}} + \widehat{\tau}_{\mathrm{spr}}^{(i)}$, we know that
	\begin{equation}
	1 \in S_j^{(i)}.
	\label{eqn:freeze_proof2}
	\end{equation}
	Equations (\ref{eqn:freeze_proof1}) and (\ref{eqn:freeze_proof2}) imply that $\widehat{\mathcal{O}}_j^{(i)} = 1$. Moreover, $\widehat{\mathcal{O}}_j^{(i)} = 1$ is true for all phases $j \geq \tau$ and all agents $i \in [N]$, as they are arbitrarily chosen. Furthermore, the update step of the algorithm along with the above reasoning tells us that none of the agents will change their subspaces after any phase $j \geq \tau$, as the agents already have the correct subspace in their respective playing sets. Thus, $\bigcap_{j \geq{\tau}} S_j^{(i)} = S_{{\tau}}^{(i)}$ for all agents $i \in [N]$.
\end{proof}
Proposition \ref{prop:strong_freeze} also tells us that for all phases $j \geq \tau$, in the exploit time slots, all agents will play Projected LinUCB from the subspace $\mathrm{span}(U_1)$, because the algorithm picks the subspace $\mathrm{span}(U_{\widehat{\mathcal{O}}_{j}^{(i)}})$ in the exploit time slots of phase $j$ and $\widehat{\mathcal{O}}_{j}^{(i)} = 1$ for all $j \geq \tau$. 

\subsection{Intermediate Results}
Before stating and proving the intermediate results, we highlight the key pieces needed to prove Theorem \ref{thm:mainresult}. We already showed in Proposition \ref{prop:strong_freeze} that after the freezing phase $\tau$, all agents have the correct subspace containing $\theta^{*}$ and recommend it henceforth. Thus, the expected cumulative regret incurred can be decomposed into two parts: the regret up to phase $\tau$ and the regret after phase $\tau$. 

The expected cumulative regret incurred up to phase $\tau$ is a constant independent of the time horizon (Proposition \ref{prop:reg_btau}). It is a consequence of following important observations resulting from pure exploration in the explore time steps:
\begin{itemize}[leftmargin=*]
\item For any agent $i \in [N]$ and subspace $k \in S_{j}^{(i)}$, the estimate $\widetilde{\theta}_{k,j}^{(i)}$ concentrates to $P_k \theta^{*}$ in $l_2$ norm (Lemma \ref{lemma:intstep}).
\item Subsequently, we show that the probability that an agent will not recommend and thus drop the correct subspace containing $\theta^{*}$ is small at the end of a phase (Lemma \ref{lemma:wrongsubspace}).
\end{itemize}
The above observations imply that after a (random) phase, denoted by $\widehat{\tau}_{\mathrm{stab}} \leq \tau$, agents always recommend (and never drop) the correct subspace. After phase $\widehat{\tau}_{\mathrm{stab}}$, we stochastically dominate (in Proposition \ref{prop:couple_spread}) the spreading time of the correct subspace with a standard rumor spreading process \cite{rumor_spread}. Hence, the expected cumulative regret up to phase $\tau$ is bounded by the total number of time steps taken to reach phase $\widehat{\tau}_{\mathrm{stab}}$ and the additional number of phases taken to spread the correct subspace.


Post phase $\tau$, as the active set of subspaces maintained by agents remains unchanged (as deduced in Proposition \ref{prop:strong_freeze}) and thus, the regret can be decomposed into sum of regret due to pure exploration and regret due to projected LinUCB. The regret due to projected LinUCB is adapted from the analysis of a similar algorithm conducted in \cite{hassibi_PSLB}. 

The following intermediate results will precisely characterize the intuition behind the proof of Theorem \ref{thm:mainresult}.

\begin{prop}
\label{prop:regdecomp}
The regret of any agent $i \in \{1,\cdots,N\}$ after playing for $T$ steps is bounded by
\begin{multline*}
\mathbb{E}[R_{T}^{(i)}] \leq 2S\left(\mathbb{E}\left[\tau + \frac{b^{\tau}-1}{b-1}\right]\right) + \mathbb{E}[R_{\mathrm{proj},T}]+16 mS \left(\frac{K}{N}+2\right) \log_{b}(h_{b, T}) \\
+ 16 mS \left(\frac{K}{N}+2\right) \frac{\sqrt{h_{b, T}}-1}{\sqrt{b}-1}, 
\end{multline*}
where $h_{b, T}$ is defined in Theorem \ref{thm:mainresult}.
\end{prop}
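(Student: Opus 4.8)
The plan is to decompose the total regret $\mathbb{E}[R_T^{(i)}] = \mathbb{E}[\sum_{t=1}^T w_t^{(i)}]$ according to the freezing phase $\tau$ from Proposition \ref{prop:strong_freeze}. Write $\mathbb{E}[R_T^{(i)}] = \mathbb{E}[\sum_{t \le \Phi(\tau)} w_t^{(i)}] + \mathbb{E}[\sum_{t > \Phi(\tau)} w_t^{(i)}]$, where $\Phi(\tau) = \sum_{l=1}^{\tau}\lceil b^{l-1}\rceil$ is the last time slot of phase $\tau$. For the first piece, since the instantaneous regret $w_t^{(i)} = \langle \theta^*, a_t^* - a_t^{(i)}\rangle \le 2S$ uniformly (using $\|\theta^*\|_2 \le S$ and $\|a\|_2 \le 1$, Cauchy--Schwarz), the regret up to the end of phase $\tau$ is at most $2S \cdot \Phi(\tau) \le 2S(\tau + \frac{b^\tau - 1}{b-1})$, using $\lceil b^{l-1}\rceil \le b^{l-1} + 1$ and summing the geometric series; taking expectations gives the first term $2S\,\mathbb{E}[\tau + \frac{b^\tau-1}{b-1}]$. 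This is the clean part.

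The main work is bounding the regret after phase $\tau$. By Proposition \ref{prop:strong_freeze}, for every phase $j \ge \tau$ the active set $S_j^{(i)}$ is frozen and $\widehat{\mathcal{O}}_j^{(i)} = 1$, so in each such phase the agent plays the {\sc Explore} subroutine (round-robin over the $\le K/N + 2$ basis sets) followed by Projected LinUCB on the true subspace $\mathrm{span}(U_1)$. Hence I would split the post-$\tau$ regret into (a) the contribution of all {\sc Explore} steps and (b) the contribution of all Projected LinUCB steps. For (a), in phase $j$ the explore subroutine lasts $8m|S_j^{(i)}|\lceil b^{(j-1)/2}\rceil \le 8m(\frac{K}{N}+2)\lceil b^{(j-1)/2}\rceil$ slots, each contributing at most $2S$ to the regret; summing over phases $j = 1$ through the last phase active by time $T$ — call it $J_T$, which satisfies $b^{J_T - 1} = O(T(b-1))$, consistent with the definition $h_{b,T} = b(1 + (T-1)(b-1))$ — gives a geometric sum in $b^{1/2}$. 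The number of explore slots across all phases up to $J_T$ is then $\le 8m(\frac{K}{N}+2)\sum_{j=1}^{J_T}\lceil b^{(j-1)/2}\rceil \le 8m(\frac{K}{N}+2)(J_T + \frac{\sqrt{h_{b,T}} - 1}{\sqrt b - 1})$, and $J_T \le \log_b(h_{b,T})$; multiplying by $2S$ yields exactly the last two terms of the claimed bound. Crucially, I should bound the explore regret over \emph{all} phases (not just $j \ge \tau$), which only inflates the bound and lets me avoid conditioning on $\tau$ here — the pre-$\tau$ explore steps are already counted in the $2S\Phi(\tau)$ term, so this is a (harmless) double-count that keeps the statement clean.

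For (b), the Projected LinUCB contribution: after phase $\tau$ all Projected LinUCB plays are on the single correct subspace $\mathrm{span}(U_1)$, and the plays accumulated across phases are exactly a run of the Projected LinUCB algorithm of \cite{hassibi_PSLB} on an $m$-dimensional subspace with total play count at most $T$. I would define $R_{\mathrm{proj},T}$ as the (pseudo-)regret of this Projected LinUCB instance run for up to $T$ rounds and invoke the regret bound from \cite{hassibi_PSLB}, which will ultimately give the $\sqrt{8mT\beta_T^2 \log(1 + T/(m\lambda))} + 2$ term in Theorem \ref{thm:mainresult}; at the level of this proposition I only need to record it abstractly as $\mathbb{E}[R_{\mathrm{proj},T}]$. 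One subtlety worth flagging: the Projected LinUCB plays are interleaved with explore phases and the confidence sequence $\beta_{t,\delta}$ is indexed by the Projected-LinUCB play count $n_{k,t}^{(i)}$ rather than by wall-clock time, so I must argue that the self-normalized concentration underlying \cite{hassibi_PSLB} is unaffected by the interleaving — this holds because the explore samples on other subspaces are $\mathcal{F}_{t-1}^{(i)}$-measurable side information and the martingale structure of the noise is preserved under the agent's adapted filtration. Also, restarting the "true subspace run" of Projected LinUCB only at phase $\tau$ rather than from the beginning is fine for an upper bound since Projected LinUCB plays on the wrong subspace before $\tau$ are already charged at rate $2S$ inside the $\Phi(\tau)$ term.

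The main obstacle I anticipate is not any single inequality but the bookkeeping of the three-way split: making sure every time slot in $[1,T]$ is accounted for exactly once (or over-counted in a controlled way), that the phase-to-time conversions via $\lceil b^{l-1}\rceil$ and the geometric sums match the constants $g(b)$, $h_{b,T}$ appearing downstream, and that the Projected LinUCB regret invoked from \cite{hassibi_PSLB} is stated with a horizon of the \emph{number of Projected LinUCB plays} (which is $\le T$) rather than $T$ itself, so the bound is valid. Everything else — the uniform $2S$ bound on instantaneous regret, the geometric series, $J_T \le \log_b(h_{b,T})$ — is routine.
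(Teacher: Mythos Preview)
Your proposal is correct and follows essentially the same approach as the paper: both split the regret at the end of phase $\tau$, bound the pre-$\tau$ part by $2S$ times the total number of slots $\sum_{l=1}^{\tau}\lceil b^{l-1}\rceil \le \tau + \frac{b^{\tau}-1}{b-1}$, and then split the post-$\tau$ part into explore slots (bounded over \emph{all} phases $1,\dots,J_T$ with $J_T \le \log_b(h_{b,T})$, exactly the harmless double-count you flag) and the Projected LinUCB contribution recorded as $R_{\mathrm{proj},T}$. Your additional remarks on the interleaving/filtration subtlety are accurate but not needed at the level of this proposition, since the paper treats $R_{\mathrm{proj},T}$ as a black box here and handles its analysis separately.
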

\begin{proof}
We will first show that the instantaneous regret $w_{t}^{(i)} \leq 2S$ for all $i \in [N]$. In order to obtain this bound, notice that for any $a \in \mathbb{R}^{d}$ such that $\|a\|_2 \leq 1$,
\begin{align*}
    |\langle \theta^{*}, a \rangle| \leq ||\theta^{*}||_{2}.||a||_{2} \leq S
\end{align*}
by Cauchy-Schwarz inequality. Therefore, we have $w_{t}^{(i)} \leq 2S$ for all $t$. 

Let $l \in \mathbb{N}$ such that \name \ Algorithm is played for $t$ steps by the end of phase $l$. $t$ and $l$ are related as follows:
\begin{align}
    t= \sum_{p=1}^{l} \lceil b^{p-1} \rceil. \label{eq:phasetimecorr}
\end{align}
Therefore, $\frac{b^l - 1}{b-1} \leq t \leq l + \frac{b^l - 1}{b-1}$. Assume that \name \ Algorithm is played for $T$ steps such that $T$ occurs in some phase $E$, i.e., $\frac{b^{E-1} - 1}{b-1} + 1 \leq T \leq E + \frac{b^E - 1}{b-1}$ and it follows that $E \leq \log_{b}(b(1+(T-1)(b-1))) = \log_{b}(h_{b, T})$.

Let $e_{j} = \sum_{l=1}^{j} \lceil b^{l-1} \rceil$ denote the number of times \name \ Algorithm has been played by the end of phase $j$ and $\mathrm{Reg}_{j}^{(i)}$ denote the regret incurred by agent $i$ in phase $j$, i.e., $\mathrm{Reg}_{j}^{(i)} = \sum_{s=1}^{\lceil b^{j-1} \rceil} w_{e_{j-1}+s}$. From the definition of regret $R_{T}^{(i)}$,
\begin{align}
R_{T}^{(i)} &= \sum_{t=1}^{T} w_{t}^{(i)}\nonumber\\
&\leq \sum_{j=1}^{E} \mathrm{Reg}_{j}^{(i)}\nonumber\\
&= \sum_{j=1}^{\tau} \mathrm{Reg}_{j}^{(i)} + \sum_{j=\tau+1}^{E} \mathrm{Reg}_{j}^{(i)}\label{eq:regdecompint}.
\end{align}
We will now bound each of the terms in (\ref{eq:regdecompint}) separately. The first term $\sum_{j=1}^{\tau} \mathrm{Reg}_{j}^{(i)}$ can be bounded as follows:
\begin{align}
    \sum_{j=1}^{\tau} \mathrm{Reg}_{j}^{(i)} = \sum_{j=1}^{\tau} \sum_{s=1}^{\lceil b^{j-1} \rceil} w_{e_{j-1}+s}^{(i)} 
    \leq 2S \sum_{j=1}^{\tau} \sum_{s=1}^{\lceil b^{j-1} \rceil} 1 
    = 2S \sum_{j=1}^{\tau} \lceil b^{j-1} \rceil
    \leq 2S \left(\tau + \frac{b^{\tau}-1}{b-1}\right), \label{eq:regfreeze}
\end{align}
where the second step follows from $w_{t} \leq 2S$ for all $t \in \mathbb{N}$ and the last step follows from the fact that $\lceil x \rceil \leq x+1$ for all $x \in \mathbb{R}$.
We bound the second term $\sum_{j=\tau+1}^{E} \mathrm{Reg}_{j}^{(i)}$ in the following steps: let $d_{j}^{(i)} = 8 m |S_{\tau}^{(i)}| \lceil b^{\frac{j-1}{2}}\rceil$ and $R_{\mathrm{proj},T}$ denote the regret incurred by playing Projected LinUCB on the subspace containing $\theta^*$ after the freezing phase $\tau$, i.e., $R_{\mathrm{proj},T} = \sum_{j=\tau+1}^{E}\sum_{d_{j}^{(i)}+1}^{\lceil b^{j-1} \rceil} w_{e_{j-1}+s}^{(i)}$. Then, we have
\begin{align}
    \sum_{j=\tau+1}^{E} \mathrm{Reg}_{j}^{(i)} &= \sum_{j=\tau+1}^{E} \sum_{s=1}^{\lceil b^{j-1} \rceil} w_{e_{j-1}+s}^{(i)}\nonumber\\
    &{=} \sum_{j=\tau+1}^{E} \sum_{s=1}^{d_{j}^{(i)}} w_{e_{j-1}+s}^{(i)} + \sum_{j=\tau+1}^{E} \sum_{d_{j}^{(i)}+1}^{\lceil b^{j-1} \rceil} w_{e_{j-1}+s}^{(i)}\nonumber\\
    &\stackrel{(a)}{\leq} 2S \sum_{j=1}^{E}\sum_{s=1}^{8 m |S_{\tau}^{(i)}| \lceil b^{\frac{j-1}{2}}\rceil} 1 + R_{\mathrm{proj},T} \nonumber\\
    &\stackrel{(b)}{\leq} 16 mS \left(\frac{K}{N}+2\right) \log_{b}(h_{b, T}) + 16 mS \left(\frac{K}{N}+2\right) \frac{\sqrt{h_{b, T}}-1}{\sqrt{b}-1} + R_{\mathrm{proj},T}. \label{eq:regdecompaftertau} 
\end{align}

Recall that for any agent $i$, \name \ Algorithm explores in the first $d_{j}^{(i)} = 8m |S_{\tau}^{(i)}| \lceil b^{\frac{j-1}{2}}\rceil$ time slots of phase $j$ by playing the orthonormal basis vectors of each of the subspaces in the playing set in a round robin fashion. Therefore, in step $(a)$, we bound the total number of explore steps from phase $j > \tau$ (first term) by the bound on total number of explore steps from $t=1$ to $T$. In the remaining time slots of phases $j \in \mathbb{N}$, agents play Projected LinUCB in the subspace $\mathrm{span}(U_{\widehat{\mathcal{O}}_{j}^{(i)}})$ and $\widehat{\mathcal{O}}_{j}^{(i)} = 1$ for all $j > \tau$. Thus, the second term in step $(a)$ is bounded above by the regret incurred by playing Projected LinUCB in the subspace $\mathrm{span}(U_{1})$ for $T$ time steps (as the number of times an agent will play Projected LinUCB is less than $T$). Step $(b)$ follows from the discussion that if time step $T$ occurs in some phase $E$ then $E \leq \log_{b}(h_{b, T})$, $|S_{j}^{(i)}| \leq \frac{K}{N}+2$ for all $j \in \mathbb{N}$, and $\lceil x \rceil \leq x+1$ for all $x \in \mathbb{R}$.

Substituting (\ref{eq:regfreeze}) and (\ref{eq:regdecompaftertau}) in (\ref{eq:regdecompint}), followed by taking expectation on both sides completes the proof of Proposition \ref{prop:regdecomp}.
\end{proof}

The following lemma bounds the probability that for any subspace $k \in S_{j}^{(i)}$, $\widetilde{\theta}_{k,j}^{(i)}$ deviates from $P_{k}\theta^{*}$ in $l_2$ norm after the explore time slots in phase $j$, which will eventually help us obtain a bound on probability of picking the wrong subspace. 

\begin{lemma}
\label{lemma:intstep} 
For any agent $i \in [N]$, phase $j \geq \tau_0$, and $k \in S_{j}^{(i)}$, we have
\begin{align*}
    \mathbb{P}\left(\|\widetilde{\theta}_{k,j}^{(i)} - P_{k}\theta^{*}||_2 > \epsilon\right) \leq 2m \ \exp\left(-\frac{4\epsilon^2}{m} b^{\frac{j-1}{2}}\right).
\end{align*}
where $\epsilon > 0$.
\end{lemma}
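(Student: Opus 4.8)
The plan is to analyze the explore subroutine directly, since by phase $j \ge \tau_0$ the explore phase covers the full $\lceil b^{j-1}\rceil$ time slots and each subspace in $S_j^{(i)}$ is played in round-robin. First I would fix agent $i$, phase $j$, and subspace $k \in S_j^{(i)}$, and set $n := \widetilde n_{k,j}^{(i)}$, the number of explore samples for subspace $\mathrm{span}(U_k)$ up to and including phase $j$. Because the orthonormal basis vectors (columns of $U_k$) are replayed in round-robin, the design matrix $\widetilde A_{k,n}^{(i)}$ has each basis vector appearing either $\lfloor n/m\rfloor$ or $\lceil n/m\rceil$ times, so $\widetilde A_{k,n}^{(i)}(\widetilde A_{k,n}^{(i)})^T = U_k D U_k^T$ with $D$ diagonal having entries between $\lfloor n/m\rfloor$ and $\lceil n/m\rceil$. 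The least-squares estimate then satisfies $\widetilde\theta_{k,j}^{(i)} = U_k D^{-1} U_k^T \big(\widetilde A_{k,n}^{(i)}\widetilde{\mathbf r}_{k,n}^{(i)}\big)$, and substituting $\widetilde{\mathbf r} = (\widetilde A_{k,n}^{(i)})^T\theta^* + \boldsymbol\eta$ gives $\widetilde\theta_{k,j}^{(i)} = P_k\theta^* + U_k D^{-1} U_k^T \widetilde A_{k,n}^{(i)}\boldsymbol\eta$. This is the claim referenced in the proof sketch that $\widetilde\theta_{k,j}^{(i)}$ estimates $P_k\theta^*$.

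Next I would bound $\|\widetilde\theta_{k,j}^{(i)} - P_k\theta^*\|_2$. Writing the error in the $U_k$-coordinates, the $\ell$-th coordinate is $\frac{1}{d_\ell}\langle u_\ell, \widetilde A_{k,n}^{(i)}\boldsymbol\eta\rangle$ where $u_\ell$ is the $\ell$-th column of $U_k$ and $d_\ell$ is the corresponding count; the numerator is simply the sum of the $d_\ell$ sub-gaussian noise terms from the time slots where $u_\ell$ was played. Hence each coordinate error is (conditionally) sub-gaussian with variance proxy $1/d_\ell \le 1/\lfloor n/m\rfloor \le 2m/n$ for $n\ge m$. A standard sub-gaussian tail bound plus a union bound over the $m$ coordinates gives
\[
\mathbb P\!\left(\|\widetilde\theta_{k,j}^{(i)} - P_k\theta^*\|_2 > \epsilon\right) \le \mathbb P\!\left(\exists \ell:\ |\text{coord}_\ell| > \epsilon/\sqrt m\right) \le 2m\exp\!\left(-\frac{\epsilon^2/m}{2\cdot (2m/n)}\right) = 2m\exp\!\left(-\frac{\epsilon^2 n}{4m^2}\right).
\]
Finally I would lower bound $n = \widetilde n_{k,j}^{(i)}$: the explore subroutine in phase $j$ plays each basis vector $8m\lceil b^{(j-1)/2}\rceil$ times (when $|S_j^{(i)}|$ fits in the phase), so the cumulative count satisfies $n \ge 8m\lceil b^{(j-1)/2}\rceil \ge 8m\, b^{(j-1)/2}$ — here I need to be a bit careful about the Remark: until $\tau_0$ exploration is distributed over the whole phase, but for $j \ge \tau_0$ the per-subspace explore count is at least $8m b^{(j-1)/2}$ in phase $j$ alone. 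Substituting $n \ge 8m b^{(j-1)/2}$ into the tail bound yields the exponent $-\frac{\epsilon^2 \cdot 8m b^{(j-1)/2}}{4m^2} = -\frac{4\epsilon^2 b^{(j-1)/2}}{m}$, which is exactly the claimed bound $2m\exp\!\left(-\frac{4\epsilon^2}{m}b^{(j-1)/2}\right)$.

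The main obstacle I anticipate is handling the measurability/adaptivity subtleties correctly: the counts $d_\ell$ and even which time slots contributed to subspace $k$ are themselves random (depending on the evolving active sets $S_{j'}^{(i)}$ and past gossip), so the noise sum must be controlled via a martingale/optional-stopping argument (e.g. a self-normalized or Azuma-type inequality conditioned on the sequence of plays) rather than a naive fixed-$n$ Chernoff bound. I would address this by conditioning on the filtration that fixes the entire explore schedule through phase $j$ and noting the noise $\eta_t^{(i)}$ is sub-gaussian given $\mathcal F_{t-1}^{(i)}$, so a sub-gaussian concentration for the weighted sum holds uniformly; alternatively one can bound the counts deterministically from below (each active subspace in phases it belongs to is explored $8m\lceil b^{(j'-1)/2}\rceil$ times) and only use the noise randomness. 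The rest — the linear-algebra identity for the projected least-squares estimate and the union bound over $m$ coordinates — is routine.
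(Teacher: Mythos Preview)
Your approach is essentially the paper's: derive the projected least-squares identity $\widetilde\theta_{k,j}^{(i)}=P_k\theta^*+U_k\,(\text{noise average})$, apply a coordinate-wise sub-gaussian tail bound with a union bound over the $m$ coordinates, and then insert the lower bound $\widetilde n_{k,j}^{(i)}\ge 8m\,b^{(j-1)/2}$. The paper does exactly this and does not address the adaptivity issue you raise; your suggested fix (use the deterministic per-phase lower bound on the explore counts and the conditional sub-gaussianity of $\eta_t^{(i)}$) is the right way to make that step rigorous.

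There is, however, a genuine constant error in your final step. From your bound $2m\exp\!\big(-\tfrac{\epsilon^2 n}{4m^2}\big)$ and $n\ge 8m\,b^{(j-1)/2}$ you compute the exponent as $-\tfrac{\epsilon^2\cdot 8m\,b^{(j-1)/2}}{4m^2}$, but this equals $-\tfrac{2\epsilon^2}{m}b^{(j-1)/2}$, not $-\tfrac{4\epsilon^2}{m}b^{(j-1)/2}$. The factor of $2$ you lost came from bounding $1/d_\ell\le 2m/n$. The paper avoids this loss by observing that, \emph{by construction of the explore subroutine}, the total count $\widetilde n_{k,j}^{(i)}$ is always an exact multiple of $m$ (each phase contributes $8m\lceil b^{(j'-1)/2}\rceil$ plays, i.e.\ each basis vector exactly $8\lceil b^{(j'-1)/2}\rceil$ times). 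Hence $D=(n/m)I_m$ exactly, the per-coordinate variance proxy is $m/n$ (not $2m/n$), the tail bound is $2m\exp\!\big(-\tfrac{\epsilon^2 n}{2m^2}\big)$, and substituting $n\ge 8m\,b^{(j-1)/2}$ gives the stated exponent $-\tfrac{4\epsilon^2}{m}b^{(j-1)/2}$. Replace your $\lfloor n/m\rfloor$ argument with this exact-count observation and the proof goes through with the correct constant.
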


\begin{proof}
We have for any $k \in S_{j}^{(i)}$,
\begin{align*}
    \widetilde{\theta}_{k,j}^{(i)} &= \arg \min_{\theta \in \mathbb{R}^{d}} \left\|(\widetilde{A}_{k,\widetilde{n}_{k,j}^{(i)}}^{(i)})^{T} \theta - \widetilde{r}_{k,\widetilde{n}_{k,j}^{(i)}}^{(i)}\right\|_{2}\\
    &= \arg \min_{\theta \in \mathbb{R}^{d}} \left\|(P_{k}\widetilde{A}_{k,\widetilde{n}_{k,j}^{(i)}}^{(i)})^{T} \theta - \widetilde{r}_{k,\widetilde{n}_{k,j}^{(i)}}^{(i)}\right\|_{2},
\end{align*}
where the last step follows from the fact that during the explore time slots, orthonormal basis vectors for each of the subspaces in $S_{j}^{(i)}$ are played in a round robin fashion. By squaring the objective function in the last step, taking the gradient and setting it to all zeroes vector, we get
\begin{align}
    \widetilde{\theta}_{k,j}^{(i)} = (P_{k}\widetilde{A}_{k,\widetilde{n}_{k,j}^{(i)}}^{(i)}\widetilde{A}_{k,\widetilde{n}_{k,j}^{(i)}}^{(i)^T}P_{k})^{\dagger}(P_{k}\widetilde{A}_{k,\widetilde{n}_{k,j}^{(i)}}^{(i)}\widetilde{A}_{k,\widetilde{n}_{k,j}^{(i)}}^{(i)^T}P_{k})\theta^{*} + (P_{k}\widetilde{A}_{k,\widetilde{n}_{k,j}^{(i)}}^{(i)}\widetilde{A}_{k,\widetilde{n}_{k,j}^{(i)}}^{(i)^T}P_{k})^{\dagger} P_{k}\widetilde{A}_{k,\widetilde{n}_{k,j}^{(i)}}^{(i)} \eta_{k,\widetilde{n}_{k,j}^{(i)}}^{(i)}, \label{eq:thetaexplore}
\end{align}

where $M^{\dagger}$ denotes the Moore-Penrose pseudoinverse of the matrix $M$. By substituting $P_{k}=U_{k}U_{k}^{T}$, we get $P_{k}\widetilde{A}_{k,\widetilde{n}_{k,j}^{(i)}}^{(i)}\widetilde{A}_{k,\widetilde{n}_{k,j}^{(i)}}^{(i)^T}P_{k} = U_{k} \widetilde{\Sigma}_{k,\widetilde{n}_{k,j}^{(i)}}^{(i)}U_{k}^{T}$, 
where $\widetilde{\Sigma}_{k,\widetilde{n}_{k,j}^{(i)}}^{(i)} = (U_{k}^{T}\widetilde{A}_{k,\widetilde{n}_{k,j}^{(i)}}^{(i)})(U_{k}^{T}\widetilde{A}_{k,\widetilde{n}_{k,j}^{(i)}}^{(i)})^{T}$. Notice that $\widetilde{\Sigma}_{k,\widetilde{n}_{k,j}^{(i)}}^{(i)}$ is a symmetric, full-rank $m \times m$ matrix, as $\widetilde{A}_{k,\widetilde{n}_{k,j}^{(i)}}^{(i)}$ is a matrix whose columns are the orthonormal basis vectors of the subspace $\mathrm{span}(U_k)$ in a round robin fashion and $\widetilde{n}_{k,j}^{(i)} > m$. Therefore, $(P_{k}\widetilde{A}_{k,\widetilde{n}_{k,j}^{(i)}}^{(i)}\widetilde{A}_{k,\widetilde{n}_{k,j}^{(i)}}^{(i)^T}P_{k})^{\dagger} = U_{k} (\widetilde{\Sigma}_{k,\widetilde{n}_{k,j}^{(i)}}^{(i)})^{-1}U_{k}^{T}$ and thus,  

\noindent$(P_{k}\widetilde{A}_{k,\widetilde{n}_{k,j}^{(i)}}^{(i)}\widetilde{A}_{k,\widetilde{n}_{k,j}^{(i)}}^{(i)^T}P_{k})^{\dagger}(P_{k}\widetilde{A}_{k,\widetilde{n}_{k,j}^{(i)}}^{(i)}\widetilde{A}_{k,\widetilde{n}_{k,j}^{(i)}}^{(i)^T}P_{k}) = U_{k} U_{k}^{T} = P_{k}$. Moreover, as $\widetilde{A}_{k,\widetilde{n}_{k,j}^{(i)}}^{(i)} = [U_k \cdots U_k]_{d \times \widetilde{n}_{k,j}^{(i)}}$, $U_{k}^{T} \widetilde{A}_{k,\widetilde{n}_{k,j}^{(i)}}^{(i)} = [I_m \cdots I_m]_{m \times \widetilde{n}_{k,j}^{(i)}}$ (where $I_m$ denotes the $m \times m$ identity matrix) and thus, $\widetilde{\Sigma}_{k,\widetilde{n}_{k,j}^{(i)}}^{(i)} = \frac{\widetilde{n}_{k,j}^{(i)}}{m}I_m$. Substituting everything above in (\ref{eq:thetaexplore}), we get
\begin{align*}
    \widetilde{\theta}_{k,j}^{(i)} = P_{k}\theta^* + U_{k} v_{\widetilde{n}_{k,j}^{(i)}}^{(i)},
\end{align*}
where $v_{\widetilde{n}_{k,j}^{(i)}}^{(i)}$ is a $m \times 1$ vector whose entries are $v_{\widetilde{n}_{k,j}^{(i)},n}^{(i)} = \frac{m}{\widetilde{n}_{k,j}^{(i)}} \sum_{p=1: a_{k,p}^{(i)} = u_{k,n}}^{\widetilde{n}_{k,j}^{(i)}} \eta_{p}^{(i)}$ for all $n \in [m]$, $a_{k,p}$ denotes the $p^{\mathrm{th}}$ column of $\widetilde{A}_{k,\widetilde{n}_{k,j}^{(i)}}^{(i)}$, and $u_{k,n}$ denotes the $n^{\mathrm{th}}$ column of $U_k$. Hence, 
\begin{align*}
    \|\widetilde{\theta}_{k,j}^{(i)} - P_{k}\theta^*\|_{2}^{2} =  v_{\widetilde{n}_{k,j}^{(i)}}^{(i)^T} U_{k}^{T} U_{k} v_{\widetilde{n}_{k,j}^{(i)}}^{(i)} = \|v_{\widetilde{n}_{k,j}^{(i)}}^{(i)}\|_{2}^{2}, 
\end{align*}
where the above equality follows from the fact that $U_k$ is an orthonormal matrix. From the assumption that the additive noise is conditionally $1$-subgaussian, we know that 
\begin{align}
    \mathbb{P}\left(|v_{\widetilde{n}_{k,j}^{(i)},n}^{(i)}| > \gamma\right) \leq 2 e^{-\frac{\gamma^2 \widetilde{n}_{k,j}^{(i)}}{2m}}
    \label{singleidxerror}
\end{align}
for all $\gamma > 0$. If $|v_{\widetilde{n}_{k,j}^{(i)},n}^{(i)}| \leq \frac{\epsilon}{\sqrt{m}}$ for all $n \in [m]$ and $\epsilon > 0$, then $\|\widetilde{\theta}_{k,j}^{(i)} - P_{k}\theta^*\|_{2} \leq \epsilon$. Hence,
\begin{align*}
    \mathbb{P}\left(\|\widetilde{\theta}_{k,j}^{(i)} - P_{k}\theta^*\|_{2} > \epsilon\right) &\leq \mathbb{P}\left(\exists n \in [m]: |v_{\widetilde{n}_{k,j}^{(i)},n}^{(i)}| > \frac{\epsilon}{\sqrt{m}}\right)\\
    &=\mathbb{P}\left(\bigcup_{n=1}^{m}\left(|v_{\widetilde{n}_{k,j}^{(i)},n}^{(i)}| > \frac{\epsilon}{\sqrt{m}}\right)\right)\\
    &\stackrel{(a)}{\leq} \sum_{n=1}^{m} \mathbb{P} \left(|v_{\widetilde{n}_{k,j}^{(i)},n}^{(i)}| > \frac{\epsilon}{\sqrt{m}}\right)\\
    &\stackrel{(b)}{\leq} 2m \exp\left(-\frac{\epsilon^2 \widetilde{n}_{k,j}^{(i)}}{2m^2}\right)\\
    &\stackrel{(c)}{\leq} 2m \exp\left(-\frac{\epsilon^2}{2m^2}. 8m \lceil b^{\frac{j-1}{2}}\rceil\right)\\
    &\stackrel{(d)}{\leq} 2m \exp\left(-\frac{4\epsilon^2}{m} b^{\frac{j-1}{2}}\right).
\end{align*}
Step $(a)$ is a direct application of union bound. Step $(b)$ uses the result from (\ref{singleidxerror}). In step $(c)$, we use the fact that any subspace $k \in S_{j}^{(i)}$ is explored for at least $8m \lceil b^{\frac{j-1}{2}} \rceil$ times up to and including phase $j$. Step $(d)$ follows from the inequality $\lceil x \rceil \geq x$ for all $x \in \mathbb{R}$, thus concluding the proof.
\end{proof}

We will now obtain a bound on probability for choosing a wrong subspace. Since $\theta^{*} \in \mathrm{span}(U_1)$, any subspace chosen other than $\mathrm{span}(U_1)$ will result in an error. Mathematically, it can be expressed as $\widehat{\mathcal{O}}_{j}^{(i)} \neq 1$, which implies that at least one of the events below is true:
\begin{align}
    \left(\|\widetilde{\theta}_{\widehat{\mathcal{O}}_{j}^{(i)},j}^{(i)} - P_{\widehat{\mathcal{O}}_{j}^{(i)}}\theta^*\|_{2} > \frac{\Delta}{2} \right), \left(\|\widetilde{\theta}_{1,j}^{(i)} - P_{1}\theta^*\|_{2} > \frac{\Delta}{2}\right). \label{eqn:errorevent}
\end{align}
The above implication follows from the contrapositive argument by showing that that the negation of both the events in (\ref{eqn:errorevent}) must be simultaneously true so that $\widehat{\mathcal{O}}_{j}^{(i)} = 1$ holds. The contrapositive argument can be proved as follows: observe that for any $k (\neq 1) \in S_{j}^{(i)}$, $\left(\|\widetilde{\theta}_{k,j}^{(i)} - P_{k}\theta^*\|_{2} \leq \frac{\Delta}{2} \right) \cap \left(\|\widetilde{\theta}_{1,j}^{(i)} - P_{1}\theta^*\|_{2} \leq \frac{\Delta}{2}\right)$ implies $\|P_{k}\theta^*\|_{2} - \frac{\Delta}{2} \leq \|\widetilde{\theta}_{k,j}^{(i)}\|_{2} \leq \|P_{k}\theta^*\|_{2} + \frac{\Delta}{2}$ and $\|P_{1}\theta^*\|_{2} - \frac{\Delta}{2} \leq \|\widetilde{\theta}_{1,j}^{(i)}\|_{2} \leq \|P_{1}\theta^*\|_{2} + \frac{\Delta}{2}$, which follows from the triangle inequality. Now, suppose that $\|\widetilde{\theta}_{k,j}^{(i)}\|_{2} \geq \|\widetilde{\theta}_{1,j}^{(i)}\|_{2}$. Using the bounds on $\|\widetilde{\theta}_{k,j}^{(i)}\|_{2}$ and $\|\widetilde{\theta}_{1,j}^{(i)}\|_{2}$ obtained from the triangle inequality, $\|\widetilde{\theta}_{k,j}^{(i)}\|_{2} \geq \|\widetilde{\theta}_{1,j}^{(i)}\|_{2}$ implies that $\|P_{k}\theta^*\|_{2} - \frac{\Delta}{2} \geq \|P_{1}\theta^*\|_{2} + \frac{\Delta}{2}$, which after rearranging the terms results in the following inequality: $\|P_{1}\theta^*\|_{2} - \|P_{k}\theta^*\|_{2} \leq -\Delta$. Notice that the left hand side of this inequality is strictly positive, as $P_1 \theta^* = \theta^*$. This is a contradiction, as a strictly positive number cannot be less than a strictly negative number, as $\Delta>0$. Therefore, our initial assertion that $\|\widetilde{\theta}_{k,j}^{(i)}\|_{2} \geq \|\widetilde{\theta}_{1,j}^{(i)}\|_{2}$ is incorrect and the claim in (\ref{eqn:errorevent}) follows.

The above discussion results in the following lemma:

\begin{lemma}
\label{lemma:wrongsubspace}
For any agent $i \in [N]$ and phase $j \geq \tau_0$, we have
\begin{align*}
\mathbb{P}\left(1 \in S_{j}^{(i)}, \widehat{\mathcal{O}}_{j}^{(i)} \neq 1\right) \leq
4m \ \exp\left(-\frac{\Delta^2}{m}b^{\frac{j-1}{2}}\right). 
\end{align*}
\end{lemma}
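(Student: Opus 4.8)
The plan is to bound the probability of the event $\left\{1 \in S_j^{(i)}, \widehat{\mathcal{O}}_j^{(i)} \neq 1\right\}$ by reducing it, via the contrapositive argument already spelled out in the excerpt, to a union of two concentration events of the form $\left\{\|\widetilde{\theta}_{k,j}^{(i)} - P_k\theta^*\|_2 > \Delta/2\right\}$, and then applying Lemma \ref{lemma:intstep} with $\epsilon = \Delta/2$. First I would write
\begin{align*}
\left\{1 \in S_j^{(i)}, \widehat{\mathcal{O}}_j^{(i)} \neq 1\right\} \subseteq \left(\|\widetilde{\theta}_{\widehat{\mathcal{O}}_j^{(i)},j}^{(i)} - P_{\widehat{\mathcal{O}}_j^{(i)}}\theta^*\|_2 > \tfrac{\Delta}{2}\right) \cup \left(\|\widetilde{\theta}_{1,j}^{(i)} - P_1\theta^*\|_2 > \tfrac{\Delta}{2}\right),
\end{align*}
which is exactly the implication \eqref{eqn:errorevent} established above. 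Then a union bound gives
\begin{align*}
\mathbb{P}\left(1 \in S_j^{(i)}, \widehat{\mathcal{O}}_j^{(i)} \neq 1\right) \leq \mathbb{P}\left(\|\widetilde{\theta}_{\widehat{\mathcal{O}}_j^{(i)},j}^{(i)} - P_{\widehat{\mathcal{O}}_j^{(i)}}\theta^*\|_2 > \tfrac{\Delta}{2}\right) + \mathbb{P}\left(\|\widetilde{\theta}_{1,j}^{(i)} - P_1\theta^*\|_2 > \tfrac{\Delta}{2}\right).
\end{align*}

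The one subtlety is that $\widehat{\mathcal{O}}_j^{(i)}$ is itself a random index chosen from $S_j^{(i)}$, so the first term is not immediately a single application of Lemma \ref{lemma:intstep}. I would handle this by a further union bound over all subspaces in the active set: for any fixed $k \in S_j^{(i)}$, Lemma \ref{lemma:intstep} gives $\mathbb{P}(\|\widetilde{\theta}_{k,j}^{(i)} - P_k\theta^*\|_2 > \Delta/2) \leq 2m\exp(-\Delta^2 b^{(j-1)/2}/m)$, and the same bound holds after taking a maximum (or one can simply note that whatever value $\widehat{\mathcal{O}}_j^{(i)}$ takes, the event in question is contained in $\bigcup_{k \in S_j^{(i)}}\{\|\widetilde{\theta}_{k,j}^{(i)} - P_k\theta^*\|_2 > \Delta/2\}$). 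Since the bound in Lemma \ref{lemma:intstep} is uniform over $k$ and $|S_j^{(i)}| \leq K/N + 2$, this would naively produce a factor of $K/N + 2$; to get the clean constant $4m$ stated in the lemma, I expect the intended argument is the tighter one: condition on the realized value of $\widehat{\mathcal{O}}_j^{(i)}$, apply Lemma \ref{lemma:intstep} to that specific (now fixed) index to bound the first probability by $2m\exp(-\Delta^2 b^{(j-1)/2}/m)$, and likewise bound the second term by the same quantity. Adding the two gives the claimed $4m\exp(-\Delta^2 b^{(j-1)/2}/m)$.

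The main obstacle — really the only nontrivial point — is justifying that Lemma \ref{lemma:intstep} can be applied to the data-dependent index $\widehat{\mathcal{O}}_j^{(i)}$ with only the constant $2m$ rather than $|S_j^{(i)}| \cdot 2m$. The cleanest way to see this is that Lemma \ref{lemma:intstep}'s proof actually controls, for each $k \in S_j^{(i)}$ separately, the event $\{\|\widetilde{\theta}_{k,j}^{(i)} - P_k\theta^*\|_2 > \epsilon\}$ with the stated bound; since $\widehat{\mathcal{O}}_j^{(i)} = \arg\max_{k} \|\widetilde{\theta}_{k,j}^{(i)}\|_2$, on the event $\{1 \in S_j^{(i)}, \widehat{\mathcal{O}}_j^{(i)} \neq 1\}$ we have (by the contrapositive reasoning) that either $\widehat{\mathcal{O}}_j^{(i)}$ is a bad index whose estimate has deviated by more than $\Delta/2$, or $1$'s estimate has deviated by more than $\Delta/2$ — and in the first case, whatever the value of $\widehat{\mathcal{O}}_j^{(i)}$, it is one specific index whose estimate deviated, so its probability is bounded by the single-index bound. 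Taking $\epsilon = \Delta/2$ and substituting $b^{(j-1)/2}$ into the exponent (noting $4\epsilon^2/m = 4(\Delta/2)^2/m = \Delta^2/m$) yields the result. Everything else is the union bound and plugging in, which is routine.
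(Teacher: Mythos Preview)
Your proposal is essentially the paper's proof: reduce the event to the union in (\ref{eqn:errorevent}) via the contrapositive argument, apply a union bound, then invoke Lemma~\ref{lemma:intstep} with $\epsilon=\Delta/2$ on each of the two terms to obtain $2m\exp(-\Delta^2 b^{(j-1)/2}/m)$ apiece. The paper's step~(c) applies Lemma~\ref{lemma:intstep} directly to the random index $\widehat{\mathcal{O}}_j^{(i)}$ without comment; you go further than the paper in flagging this subtlety, but your proposed resolution (and the paper's implicit one) are the same.
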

\begin{proof}
We have
\begin{align*}
\mathbb{P}\left(1 \in S_{j}^{(i)}, \widehat{\mathcal{O}}_{j}^{(i)} \neq 1\right) &= \mathbb{P}\left(1 \in S_{j}^{(i)}, \|\widetilde{\theta}_{\widehat{\mathcal{O}}_{j}^{(i)},j}^{(i)}\|_{2} \geq \|\widetilde{\theta}_{k,j}^{(i)}\|_{2}\;\textrm{for all } k \in S_{j}^{(i)}\right) \\
&\leq \mathbb{P}\left(1 \in S_{j}^{(i)},  \|\widetilde{\theta}_{\widehat{\mathcal{O}}_{j}^{(i)},j}^{(i)}\|_{2} \geq \|\widetilde{\theta}_{1,j}^{(i)}\|_{2}\right)\\ 
&\stackrel{(a)}{\leq}\mathbb{P}\left(\left(\|\widetilde{\theta}_{\widehat{\mathcal{O}}_{j}^{(i)},j}^{(i)} - P_{\widehat{\mathcal{O}}_{j}^{(i)}}\theta^*\|_{2} > \frac{\Delta}{2} \right) \cup \left(\|\widetilde{\theta}_{1,j}^{(i)} - P_{1}\theta^*\|_{2} > \frac{\Delta}{2}\right)\right)\\
&\stackrel{(b)}{\leq}\mathbb{P}\left(\|\widetilde{\theta}_{\widehat{\mathcal{O}}_{j}^{(i)},j}^{(i)} - P_{\widehat{\mathcal{O}}_{j}^{(i)}}\theta^*\|_{2} > \frac{\Delta}{2} \right) + \mathbb{P}\left(\|\widetilde{\theta}_{1,j}^{(i)} - P_{1}\theta^*\|_{2} > \frac{\Delta}{2}\right)\\
&\stackrel{(c)}{\leq} 4m \ \exp\left(-\frac{\Delta^2}{m}b^{\frac{j-1}{2}}\right),
\end{align*}
where step $(a)$ follows from the fact that $\{\widehat{\mathcal{O}}_{j}^{(i)} \neq 1\}$ implies at least one of the events in (\ref{eqn:errorevent}) must be true, step $(b)$ follows from union bound, and step $(c)$ follows from Lemma \ref{lemma:intstep}. This concludes the proof of Lemma \ref{lemma:wrongsubspace}.
\end{proof}

\begin{prop}
\label{prop:reg_btau}
	The freezing time $\tau + \frac{b^{\tau}-1}{b-1}$ is bounded by
	\begin{align*}
\mathbb{E}\left[\tau + \frac{b^{\tau}-1}{b-1}\right] \leq  g(b)\left(\lceil b^{2\tau_0} \rceil + \frac{48b^3}{\log b}.\frac{m^{4}N}{\Delta^6}+b\mathbb{E}[b^{2 \widehat{\tau}_{\mathrm{spr}}}]\right),
	\end{align*}
	where $\tau_{0}$ and $g(b)$ are defined in Theorem \ref{thm:mainresult}.
	\label{prop:strong_cost}
\end{prop}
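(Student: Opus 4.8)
The plan is to collapse $\tau+\frac{b^{\tau}-1}{b-1}$ down to a moment of $b^{\tau}$, split $b^{\tau}$ across the stabilisation and spreading phases, and then control $\mathbb{E}[b^{2\widehat\tau_{stab}}]$ using the deviation estimate of Lemma~\ref{lemma:wrongsubspace}. For the first reduction, since $b>1$ we have $b^{\tau}=e^{\tau\log b}\ge 1+\tau\log b$, so $\tau\le(b^{\tau}-1)/\log b$ and hence $\tau+\frac{b^{\tau}-1}{b-1}\le (b^{\tau}-1)\bigl(\tfrac1{\log b}+\tfrac1{b-1}\bigr)\le g(b)\,b^{\tau}$. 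Writing $\tau=\widehat\tau_{stab}+\widehat\tau_{spr}$ and using the elementary bound $b^{a+c}\le b^{2\max(a,c)}\le b^{2a}+b^{2c}$ (valid for $a,c\ge0$, $b\ge1$), we get $b^{\tau}\le b^{2\widehat\tau_{stab}}+b\,b^{2\widehat\tau_{spr}}$ after loosening one term, so
\[
\mathbb{E}\Bigl[\tau+\tfrac{b^{\tau}-1}{b-1}\Bigr]\le g(b)\Bigl(\mathbb{E}[b^{2\widehat\tau_{stab}}]+b\,\mathbb{E}[b^{2\widehat\tau_{spr}}]\Bigr).
\]
The spreading term is carried through unchanged here (its coupling with a standard rumour-spreading process, Proposition~\ref{prop:couple_spread}, is only used later inside Theorem~\ref{thm:mainresult}), so it remains to prove $\mathbb{E}[b^{2\widehat\tau_{stab}}]\le\lceil b^{2\tau_0}\rceil+\frac{48b^3}{\log b}\cdot\frac{m^4N}{\Delta^6}$.

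For that, since $\widehat\tau_{stab}=\max_{i}\widehat\tau_{stab}^{(i)}$ and each $\widehat\tau_{stab}^{(i)}\ge\tau_0$, I would write $b^{2\widehat\tau_{stab}}\le b^{2\tau_0}+\sum_{i\in[N]}\bigl(b^{2\widehat\tau_{stab}^{(i)}}-b^{2\tau_0}\bigr)$ with all summands non-negative; since $b^{2\tau_0}\le\lceil b^{2\tau_0}\rceil$, it suffices to bound $\mathbb{E}[b^{2\widehat\tau_{stab}^{(i)}}-b^{2\tau_0}]$ uniformly in $i$ by $\frac{48b^3}{\log b}\cdot\frac{m^4}{\Delta^6}$ and sum over the $N$ agents. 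An Abel (layer-cake) identity gives $\mathbb{E}[b^{2\widehat\tau_{stab}^{(i)}}-b^{2\tau_0}]=(b^2-1)\sum_{j>\tau_0}b^{2(j-1)}\,\mathbb{P}(\widehat\tau_{stab}^{(i)}\ge j)$. From the definition of $\widehat\tau_{stab}^{(i)}$ we have $\{\widehat\tau_{stab}^{(i)}\ge j\}\subseteq\bigcup_{l\ge j-1}\{\chi_l^{(i)}=1\}$, so $\mathbb{P}(\widehat\tau_{stab}^{(i)}\ge j)\le\sum_{l\ge j-1}\mathbb{P}(\chi_l^{(i)}=1)$, and Lemma~\ref{lemma:wrongsubspace} bounds each term by $4m\exp\!\bigl(-\tfrac{\Delta^2}{m}b^{(l-1)/2}\bigr)$ (all indices here are $\ge\tau_0$, so the lemma applies). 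Substituting and interchanging the two summations — the inner geometric sum over $j$ being dominated by its top term — reduces everything to an absolute constant times $m\,b^2\sum_{l\ge\tau_0}b^{2l}\exp\!\bigl(-\tfrac{\Delta^2}{m}b^{(l-1)/2}\bigr)$.

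The remaining series is finite thanks to its doubly-exponential decay, and its value is estimated by comparison with $\int b^{2x}\exp\!\bigl(-\tfrac{\Delta^2}{m}b^{x/2}\bigr)\,dx$: the substitution $u=b^{x/2}$ turns this into a constant times $\tfrac1{\log b}\int u^{3}e^{-(\Delta^2/m)u}\,du$, a Gamma-type integral, which is where the $\tfrac1{\log b}$ factor and the $\Delta^{-6}$ (with the accompanying power of $m$) come from; carrying the numerical constants through yields the claimed $\frac{48b^3}{\log b}\cdot\frac{m^4}{\Delta^6}$ per agent, hence $\frac{48b^3}{\log b}\cdot\frac{m^4N}{\Delta^6}$ after summing. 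Combining with the reduction of the first paragraph gives the proposition.

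The main obstacle is precisely this last step: one must turn $\sum_{l}b^{2l}\exp\!\bigl(-\tfrac{\Delta^2}{m}b^{(l-1)/2}\bigr)$ into an explicit, horizon-independent bound of the stated order. The summands first increase and only then decay super-geometrically, so a naive ``first term plus tail integral'' estimate has to be done with care (e.g. splitting the sum at the index where the summand peaks, $b^{(l-1)/2}\asymp m/\Delta^2$), and the polynomial tail inequality $e^{-u}\le k!/u^{k}$ must be applied with exactly the right $k$ to land on $\Delta^{-6}$ rather than a larger power; everything preceding it — the exponential and arithmetic inequalities, the union over agents, and the Abel summation — is routine bookkeeping.
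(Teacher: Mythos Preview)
Your proposal is correct and follows essentially the same route as the paper. The only difference is the order of two steps: the paper applies the layer-cake formula directly to $\lceil b^{\tau}\rceil$ and then splits the tail probability via $\mathbb{P}(\widehat\tau_{stab}+\widehat\tau_{spr}\ge x)\le\mathbb{P}(\widehat\tau_{stab}\ge x/2)+\mathbb{P}(\widehat\tau_{spr}\ge x/2)$, whereas you first split pointwise via $b^{\tau}\le b^{2\widehat\tau_{stab}}+b^{2\widehat\tau_{spr}}$ and then apply Abel summation per agent --- but both reductions land on the same sum $\sum_{l}b^{2l}\exp\bigl(-\tfrac{\Delta^2}{m}b^{(l-1)/2}\bigr)$, which is then handled by the same $u=b^{(x-1)/2}$ substitution and Gamma integral. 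Your organisation is arguably cleaner (it avoids the floors and ceilings), and your flagging of the sum-to-integral comparison as the delicate step is apt: the paper simply writes $\sum_{l\ge\tau_0}\le\int_1^\infty$ without commenting on monotonicity.
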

\begin{proof}
We follow similar steps as in the proof for Proposition $3$ in \cite{aistats_gossip} for establishing the above result. As $\tau$ is a non-negative random variable,
\begin{align*}
	\mathbb{E}[b^{\tau}] &\leq \mathbb{E}[\lceil b^{\tau} \rceil] \\
	&= \sum_{t \geq 1} \mathbb{P}(\lceil b^{\tau} \rceil \geq t) \\
	&\leq 1+ \sum_{t \geq 2} \mathbb{P}(b^{\tau}+1 \geq t)\\
	&\leq 1+\sum_{t \geq 2} \mathbb{P}\left(\tau \geq \lfloor \log_{b}(t-1) \rfloor\right) \\
	&= 1+\sum_{t \geq 1} \mathbb{P}\left(\widehat{\tau}_{\mathrm{stab}} + \widehat{\tau}_{\mathrm{spr}} \geq  \lfloor \log_{b}t \rfloor\right) \\
	&\leq 1+\sum_{t \geq 1} \mathbb{P}\left(\widehat{\tau}_{\mathrm{stab}}  \geq \frac{1}{2}\lfloor \log_{b}t \rfloor \right) + \sum_{t \geq 1} \mathbb{P} \left(\widehat{\tau}_{\mathrm{spr}} \geq \frac{1}{2} \lfloor \log_{b}t \rfloor\right)\\
	&\leq 1+\sum_{t \geq 1} \mathbb{P}\left(\widehat{\tau}_{\mathrm{stab}}  \geq \frac{1}{2}\lfloor \log_{b}t \rfloor \right) + \sum_{t \geq 1} \mathbb{P} \left(\widehat{\tau}_{\mathrm{spr}} \geq \frac{1}{2}\log_{b}t -\frac{1}{2}\right)\\
	&= 1+\sum_{t \geq 1} \mathbb{P}\left(\widehat{\tau}_{\mathrm{stab}}  \geq \frac{1}{2}\lfloor \log_{b}t \rfloor \right) + \sum_{t \geq 1} \mathbb{P} \left(b^{2\widehat{\tau}_{\mathrm{spr}}+1} \geq t\right)\\
	&\leq 1+\lceil b^{2\tau_0} \rceil + \sum_{t \geq \lceil b^{2\tau_0} \rceil+1}  \mathbb{P}\left(  \widehat{\tau}_{\mathrm{stab}} \geq \frac{1}{2}\lfloor \log_{b}t \rfloor \right) + b\mathbb{E}[b^{2 \widehat{\tau}_{\mathrm{spr}}}].
	\end{align*}
	Since the spreading time with the standard rumor model dominates $\widehat{\tau}_{\mathrm{spr}}$, we use this to bound the term $\mathbb{E}[b^{2 \widehat{\tau}_{\mathrm{spr}}}]$ in Proposition \ref{prop:couple_spread} after the proof of Proposition \ref{prop:reg_btau}. The summation in the last step is bounded by using Lemma \ref{lemma:wrongsubspace}, as follows: for some fixed $x \geq \tau_0$, we have
	\begin{align}
	\mathbb{P}(\widehat{\tau}_{\mathrm{stab}} \geq x) &= \mathbb{P}\left(\bigcup_{i =1}^{N}(\widehat{\tau}_{\mathrm{stab}}^{(i)} \geq x)\right) \nonumber\\
	&\leq \sum_{i=1}^{N} \mathbb{P}(\widehat{\tau}_{\mathrm{stab}}^{(i)} \geq x),\nonumber\\
	&= \sum_{i=1}^{N} \mathbb{P} \left(\bigcup_{l=x}^{\infty} (\chi_l^{(i)} = 1) \right)\nonumber\\
	&\leq \sum_{i=1}^{N}  \sum_{l \geq x} \mathbb{P}\left(  \chi_l^{(i)} = 1 \right)\nonumber\\
	&= \sum_{i=1}^{N}  \sum_{l \geq x} \mathbb{P}\left(1 \in S_{j}^{(i)}, \widehat{\mathcal{O}}_{j}^{(i)} \neq 1\right)\nonumber\\
	&\stackrel{(a)}{\leq} \sum_{i=1}^{N}  \sum_{l \geq x} 4m \ \exp\left(-\frac{\Delta^2}{m} b^{\frac{l-1}{2}}\right)\nonumber\\
	&=  \sum_{l \geq x} 4mN \ \exp\left(-\frac{\Delta^2}{m} b^{\frac{l-1}{2}}\right) \label{eq:taustabtail},
	\end{align}
	where we use Lemma \ref{lemma:wrongsubspace} in step $(a)$. Thus, we obtain the following:
		\begin{align*}
	\sum_{t \geq 2^{2\tau_0}}  \mathbb{P}\left(\widehat{\tau}_{\mathrm{stab}} \geq \frac{1}{2}\lfloor \log_{b}t \rfloor \right) 
	&\leq \sum_{t \geq \lceil b^{2\tau_0} \rceil+1}   \sum_{l \geq \frac{1}{2}\lfloor \log_{b}t \rfloor} 4mN \ \exp\left(-\frac{\Delta^2}{m} b^{\frac{l-1}{2}}\right)\\
    &\leq  4mN\sum_{t \geq \lceil b^{2\tau_0} \rceil + 1}   \sum_{l \geq \frac{1}{2}\lfloor \log_{b}t \rfloor} \exp\left(-\frac{\Delta^2}{m} b^{\frac{l-1}{2}}\right)\\
	&\stackrel{(b)}{\leq} 4mN\sum_{l \geq \tau_0} \sum_{t = \lceil b^{2\tau_0} \rceil +1}^{b^{2l+1}} \exp\left(-\frac{\Delta^2}{m} b^{\frac{l-1}{2}}\right)\\
	&\leq 4mN\sum_{l \geq \tau_0} b^{2l+1} \exp\left(-\frac{\Delta^2}{m} b^{\frac{l-1}{2}}\right),
	\end{align*}
	where step $(b)$ follows by re-writing the range of summations. The sum $\sum_{l \geq \tau_0} b^{2l+1} \exp\left(-\frac{\Delta^2}{m} b^{\frac{l-1}{2}}\right)$ is bounded as follows:
	\begin{align*}
	    \sum_{l \geq \tau_0} b^{2l+1} \exp\left(-\frac{\Delta^2}{m} b^{\frac{l-1}{2}}\right) 
	    &\leq \int_{x=1}^{\infty} b^{2x+1} \exp\left(-\frac{\Delta^2}{m}b^{\frac{x-1}{2}}\right) \;\mathrm{d}x\\
	    &\stackrel{(c)}{\leq} \int_{u=1}^{\infty} b^{4\log_{b}u+3} \exp\left(-\frac{\Delta^2}{m}u\right) \;\frac{2}{u \log b}\mathrm{d}u\\
	    &\leq \frac{2b^3}{\log b} \int_{u=0}^{\infty} u^3 \exp\left(-\frac{\Delta^2}{m}u\right) \;\mathrm{d}u\\
	    &= \frac{12b^3}{\log b}\left(\frac{m}{\Delta^2}\right)^3,
	\end{align*}
	where we perform change of variables with $x = 2\log_{b}u + 1$ in step $(c)$. Therefore, 
	\begin{equation*}
	    \mathbb{E}\left[b^{\tau}-1\right] \leq \lceil b^{2\tau_0} \rceil + \frac{48b^3}{\log b}.\frac{m^{4}N}{\Delta^6}+b\mathbb{E}[b^{2 \widehat{\tau}_{\mathrm{spr}}}].
	\end{equation*}
	
	For bounding $\mathbb{E}[\tau]$, notice that $\tau \leq \frac{b^{\tau}-1}{\log b}$, where we use the fact that for all $b>1$, $b^x - x \log b - 1 \geq 0$ for all $x \geq 0$. Thus, $\mathbb{E}[\tau] \leq \frac{1}{\log b} \mathbb{E}[b^{\tau}-1]$. Substituting the bound for $\mathbb{E}[b^{\tau}-1]$ obtained above completes the proof of Proposition \ref{prop:reg_btau}.
	\end{proof}

\begin{prop}	
\label{prop:couple_spread}
	The random variable $\widehat{\tau}_{\mathrm{spr}}$ is stochastically dominated by $\tau_{\mathrm{spr}}^{(G)}$.
\end{prop}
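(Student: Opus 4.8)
The plan is to show that, once the phase index exceeds the stabilization time $\widehat\tau_{stab}$, the set $I_j:=\{i\in[N]:1\in S_j^{(i)}\}$ of agents whose active set contains the true subspace evolves \emph{exactly} as a pull-style rumor-spreading process on the gossip graph, started from a configuration that already contains agent $1$; the proposition then follows from the monotonicity of rumor spreading in the initial informed set, together with the coupling argument used in \cite{aistats_gossip}. I would organize this into a pathwise ``stickiness'' claim, a pathwise ``propagation'' claim, and the reduction to the standard rumor model.

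\emph{Stickiness.} First I would show that for every agent $i$ and every phase $j\ge\widehat\tau_{stab}$, if $1\in S_j^{(i)}$ then $1\in S_{j+1}^{(i)}$. Since $j\ge\widehat\tau_{stab}^{(i)}$ we have $\chi_j^{(i)}=0$, and because $1\in S_j^{(i)}$ this forces $\widehat\mathcal{O}_j^{(i)}=\arg\max_{k\in S_j^{(i)}}\|\widetilde\theta_{k,j}^{(i)}\|_2=1$. Now I inspect the active-set update: if $\mathcal{O}_j^{(i)}\in S_j^{(i)}$ or $|S_j^{(i)}|<K/N+2$ the active set only grows, so $1\in S_{j+1}^{(i)}$; otherwise $S_{j+1}^{(i)}=\widehat S^{(i)}\cup\{\mathcal{B}_j^{(i)}\}\cup\{\mathcal{O}_j^{(i)}\}$, and either $1\in\widehat S^{(i)}$, or $1\in S_j^{(i)}\setminus\widehat S^{(i)}$ and then $\widehat\mathcal{O}_j^{(i)}=1$ implies $1$ also attains $\max_{k\in S_j^{(i)}\setminus\widehat S^{(i)}}\|\widetilde\theta_{k,j}^{(i)}\|_2$, so $\mathcal{B}_j^{(i)}=1$ under the algorithm's tie-breaking rule. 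In all cases $1\in S_{j+1}^{(i)}$.

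\emph{Propagation.} Next, for $j\ge\widehat\tau_{stab}$ and $1\notin S_j^{(i)}$, let $J_j^{(i)}\sim G(i,\cdot)$ be the agent contacted by $i$ at the end of phase $j$. If $1\in S_j^{(J_j^{(i)})}$, then the same reasoning applied to $J_j^{(i)}$ (using $j\ge\widehat\tau_{stab}^{(J_j^{(i)})}$) gives $\widehat\mathcal{O}_j^{(J_j^{(i)})}=1$, so the recommendation received is $\mathcal{O}_j^{(i)}=1\notin S_j^{(i)}$ and the update inserts it: $1\in S_{j+1}^{(i)}$. Conversely $S_{j+1}^{(i)}\subseteq S_j^{(i)}\cup\{\mathcal{O}_j^{(i)}\}$ always and $\mathcal{O}_j^{(i)}\in S_j^{(J_j^{(i)})}$, so if neither $i$ nor $J_j^{(i)}$ holds subspace $1$ at phase $j$, then $i$ does not hold it at phase $j+1$. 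Hence for all $j\ge\widehat\tau_{stab}$,
\[ I_{j+1}=I_j\cup\{\,i:J_j^{(i)}\in I_j\,\}, \]
which is exactly the transition rule of pull rumor spreading on $G$; and since $1\in\widehat S^{(1)}\subseteq S_j^{(1)}$ for every $j$, we have $1\in I_{\widehat\tau_{stab}}$. Thus $\widehat\tau_{spr}$ is precisely the time this pull process, started from $I_{\widehat\tau_{stab}}\supseteq\{1\}$, needs to cover $[N]$.

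\emph{Reduction, and the hard part.} Rumor spreading is monotone in the initial informed set under the coupling that reuses the same sequence of calls, so the cover time starting from $I_{\widehat\tau_{stab}}\supseteq\{1\}$ is (under that coupling) at most the cover time starting from $\{1\}$; it therefore suffices to argue that the latter equals $\tau_{spr}^{(G)}$ in law. Since the calls $(J_j^{(i)})_{j\ge1,\,i\in[N]}$ are i.i.d. across phases and independent of the reward noise, the calls made from phase $\widehat\tau_{stab}$ onward can be treated as a fresh i.i.d. family distributed according to $G$, so the pull process run from $\{1\}$ starting at phase $\widehat\tau_{stab}$ has the same cover-time distribution as the standard rumor model from a single source, namely $\tau_{spr}^{(G)}$; combined with the previous two steps this gives $\widehat\tau_{spr}\preceq\tau_{spr}^{(G)}$. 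The hard part will be exactly this ``fresh i.i.d.'' step: $\widehat\tau_{stab}$ is a last-exit-type time, not a stopping time of the $(\text{noise},\text{calls})$-filtration, so one must argue — following the corresponding step of \cite{aistats_gossip} — that conditioning on $\widehat\tau_{stab}$ and on the history up to it does not change the conditional law of the subsequent calls; everything else reduces to a deterministic check of the update rule.
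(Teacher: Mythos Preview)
Your proposal is correct and follows essentially the same route as the paper, which simply defers to Proposition~4 of \cite{aistats_gossip}; your stickiness and propagation checks are exactly the deterministic verifications underlying that coupling, and your identification of the ``fresh i.i.d.'' step as the only non-trivial probabilistic point is accurate. One remark: the way \cite{aistats_gossip} (and the paper's intended argument) dispatches the last-exit issue is not via a conditioning argument but via \emph{independence}---the reward/noise sequences are sampled first, $\widehat{\tau}_{stab}$ is constructed as a measurable function of the rewards alone, and only then are the communication draws $(J_j^{(i)})$ sampled; since the calls are independent of the rewards, the shifted call sequence $(J_{\widehat{\tau}_{stab}+j}^{(i)})_{j\ge 0}$ has the same law as the original, which immediately gives the coupling without any stopping-time considerations.
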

\begin{proof}
The proof follows in a similar way as the proof for Proposition $4$ in \cite{aistats_gossip}.
\end{proof}

\begin{prop}
\label{prop:tau0bound}
$\tau_0$ defined in Theorem \ref{thm:mainresult} is bounded by
\begin{align*}
    \tau_0 \leq 2 \log_{b}\left(16m\left(\frac{K}{N}+2\right)\right)+1.
\end{align*}
\end{prop}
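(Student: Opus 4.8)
The plan is to unfold the definition of $\tau_0$ and verify the defining inequality directly for all phase indices beyond the claimed threshold. Recall
\[
\tau_0 = \min\left\{j \in \mathbb{N} \;:\; \forall\, j' \geq j,\ \lceil b^{j'-1}\rceil \;\geq\; 8m\!\left(\tfrac{K}{N}+2\right)\lceil b^{\frac{j'-1}{2}}\rceil\right\},
\]
so it suffices to show that the inequality inside the braces holds for every integer $j' \geq 2\log_b\!\left(16m(\tfrac{K}{N}+2)\right)+1$. Abbreviate $c := 8m(\tfrac{K}{N}+2)$, and note $c \geq 1$ since $m \geq 1$ and $K/N \geq 1$.

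First I would record two elementary bounds on the two ceilings. Trivially $\lceil b^{j'-1}\rceil \geq b^{j'-1}$. For the other one, since $j' \in \mathbb{N}$ and $b>1$ we have $b^{(j'-1)/2} \geq 1$, hence $\lceil b^{(j'-1)/2}\rceil \leq b^{(j'-1)/2} + 1 \leq 2\,b^{(j'-1)/2}$. Plugging these in, the target inequality $\lceil b^{j'-1}\rceil \geq c\,\lceil b^{(j'-1)/2}\rceil$ is implied by $b^{j'-1} \geq 2c\,b^{(j'-1)/2}$, i.e.\ by $b^{(j'-1)/2} \geq 2c$, i.e.\ by $(j'-1)/2 \geq \log_b(2c)$, i.e.\ by $j' \geq 2\log_b(2c)+1 = 2\log_b\!\left(16m(\tfrac{K}{N}+2)\right)+1$. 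Since once this holds at some index it continues to hold for all larger indices (the left side grows strictly faster than the right), the set over which $\tau_0$ takes its minimum contains every integer at least $2\log_b(16m(\tfrac{K}{N}+2))+1$, which gives $\tau_0 \leq 2\log_b(16m(\tfrac{K}{N}+2))+1$.

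This is a direct computation rather than a substantive argument, so there is no real obstacle; the only mild subtleties are the two ceiling estimates — in particular using $b^{(j'-1)/2}\geq 1$ for $j'\in\mathbb{N}$ to absorb the additive $1$ into a factor of $2$ — and the monotonicity remark that lets the threshold serve as an upper bound on $\tau_0$. (If one wishes to be pedantic about integrality, the same steps give $\tau_0 \le \lceil 2\log_b(16m(K/N+2))+1\rceil$, and dropping the ceiling only affects lower-order constants.)
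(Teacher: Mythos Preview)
Your proof is correct and mirrors the paper's own argument almost exactly: both bound the right-hand ceiling via $\lceil x\rceil \le x+1$, absorb the additive $1$ using $b^{(j'-1)/2}\ge 1$, lower bound the left-hand ceiling by $b^{j'-1}$, and then solve $b^{(j'-1)/2}\ge 16m(\tfrac{K}{N}+2)$. The only addition you make is the explicit monotonicity remark, which the paper leaves implicit.
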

\begin{proof}
From Theorem \ref{thm:mainresult},
\begin{align*}
    \tau_0 = \min\left\{j \in \mathbb{N}: \forall j' \geq j, \lceil b^{j'-1} \rceil \geq 8m\left(\frac{K}{N}+2\right)\lceil b^{\frac{j'-1}{2}} \rceil\right\}.
\end{align*}
As $8m\left(\frac{K}{N}+2\right)\lceil b^{\frac{j'-1}{2}} \rceil \leq 8m\left(\frac{K}{N}+2\right) b^{\frac{j'-1}{2}} + 8m\left(\frac{K}{N}+2\right) \leq 16m\left(\frac{K}{N}+2\right) b^{\frac{j'-1}{2}}$, the minimum value of $j$ that satisfies $b^{j-1} \geq 16m\left(\frac{K}{N}+2\right) b^{\frac{j-1}{2}}$ is an upper bound on $\tau_0$. Rearranging the terms results in $j \geq 2 \log_{b}\left(16m\left(\frac{K}{N}+2\right)\right)+1$ and thus, $\tau_0 \leq 2 \log_{b}\left(16m\left(\frac{K}{N}+2\right)\right)+1$.
\end{proof}

\subsection{Proof of Theorem \ref{thm:mainresult}}
From Theorem \ref{th:regprojlinucb} and $\delta = \frac{1}{T}$, $\mathbb{E}[R_{\mathrm{proj},T}]$ is bounded as follows:
\begin{multline}
    \mathbb{E}[R_{\mathrm{proj},T}] = \mathbb{E}\left[R_{\mathrm{proj},T} \ \mathbf{1}\left(R_{\mathrm{proj},T} \leq \sqrt{8mT\beta_{T, \delta}^{2} \log \left(1+\frac{T}{m\lambda}\right)}\right)\right] \\
    + \mathbb{E}\left[R_{\mathrm{proj},T} \ \mathbf{1}\left(R_{\mathrm{proj},T} > \sqrt{8mT\beta_{T, \delta}^{2} \log \left(1+\frac{T}{m\lambda}\right)}\right)\right]\nonumber
    \end{multline}
    \begin{multline}
    \stackrel{(a)}{\leq} \sqrt{8mT\beta_{T, \delta}^{2} \log \left(1+\frac{T}{m\lambda}\right)} + 2S \mathbb{E}\left[\sum_{t = 1}^{T} \mathbf{1}\left(R_{\mathrm{proj},T} > \sqrt{8mT\beta_{T, \delta}^{2} \log \left(1+\frac{T}{m\lambda}\right)}\right)\right],\nonumber
    \end{multline}
    \begin{multline}
    = \sqrt{8mT\beta_{T, \delta}^{2} \log \left(1+\frac{T}{m\lambda}\right)} + 2S \sum_{t = 1}^{T} \mathbb{E}\left[\mathbf{1}\left(R_{\mathrm{proj},T} > \sqrt{8mT\beta_{T, \delta}^{2} \log \left(1+\frac{T}{m\lambda}\right)}\right)\right],\nonumber
    \end{multline}
    \begin{align}
\mathbb{E}[R_{\mathrm{proj},T}] &\stackrel{(b)}{\leq} \sqrt{8mT\beta_{T}^{2} \log \left(1+\frac{T}{m\lambda}\right)} + 2S.\frac{1}{T}.\sum_{t = 1}^{T} 1, \nonumber\\
    &= \sqrt{8mT\beta_{T}^{2} \log \left(1+\frac{T}{m\lambda}\right)} + 2S.\label{eqn:expregproj}
    \end{align}
In step $(a)$, the first sum is trivially bounded by $\sqrt{8mT\beta_{T}^{2} \log \left(1+\frac{T}{m\lambda}\right)}$ and the second sum uses the definition of $R_{\mathrm{proj},T}$ with $w_t \leq 2S$ for all $t \in \mathbb{N}$. Step $(b)$ uses Theorem \ref{th:regprojlinucb} with $\delta = \frac{1}{T}$.
\\ 

Substituting the results of Propositions \ref{prop:strong_cost} and \ref{prop:couple_spread}, along with (\ref{eqn:expregproj}) into Proposition \ref{prop:regdecomp} concludes the proof of Theorem \ref{thm:mainresult}.

\section{Regret Upper Bound for Single Agent Running \name \ Algorithm Without Communications}
\label{proof:main_single}
\begin{theorem}
\label{thm:mainresult_single}
With the same assumptions as in Theorem \ref{thm:mainresult}, when a single agent runs \name \ Algorithm in case of no communication, the regret after any time $T \in \mathbb{N}$ is bounded by
\begin{multline}
	\mathbb{E}[R_T] \leq  \underbrace{\sqrt{8mT\beta_{T}^{2}\log\left(1+\frac{T}{m\lambda}\right)}+2S}_{\text{{Projected LinUCB Regret}}} + \underbrace{2Sg(b)\left(\lceil b(16mK)^2 \rceil + \frac{8b^2}{\log b}.\frac{m^2}{\Delta^2}\right)}_{\text{{Constant Cost of Right Subspace Search}}} \\
	+ \underbrace{16 mKS \log_{b}(h_{b, T}) + 16 mKS \frac{\sqrt{h_{b, T}}-1}{\sqrt{b}-1}}_{\text{Cost of subspace exploration}}. \label{eqn:single-agent-regret-thm}
	\end{multline}
	Here, $\beta_T$, $g(b)$, and $h_{b, T}$ are the same as in Theorem \ref{thm:mainresult}.
\end{theorem}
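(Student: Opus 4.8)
The plan is to mirror the proof of Theorem~\ref{thm:mainresult} in Appendix~\ref{proof:main}, specialised to a single agent ($N=1$), exploiting the fact that without communication the active set never changes. With $N=1$ the initialisation~(\ref{eqn:hat_S_i}) gives $\widehat{S}^{(1)}=[K]$ and $S_1^{(1)}=[K]$, and since there is no recommendation step the update rule leaves $S_j^{(1)}=[K]$ for every phase $j$; in particular the true subspace $\mathrm{span}(U_1)$ is always in the active set. Hence the ``spreading'' part of the argument is vacuous: in the notation of Appendix~\ref{proof:main}, $\widehat{\tau}_{spr}=0$ and $\tau=\widehat{\tau}_{stab}$, and the argument of Proposition~\ref{prop:strong_freeze} shows that for all phases $j\ge\tau$ the agent picks $\widehat{\mathcal{O}}_j^{(1)}=1$ and plays Projected LinUCB on $\mathrm{span}(U_1)$ in the exploit slots.

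First I would establish the analogue of Proposition~\ref{prop:regdecomp} with the active-set size $|S_j^{(1)}|$ replaced by $K$ (rather than $K/N+2$): one cannot just substitute $N=1$ into the multi-agent bound, since here the active set is exactly $[K]$. Splitting the horizon at $\tau$ and bounding each instantaneous regret by $2S$ over the first $\tau$ phases produces the $2S\bigl(\tau+\tfrac{b^{\tau}-1}{b-1}\bigr)$ term; for the later phases, separating in each phase the $8mK\lceil b^{(j-1)/2}\rceil$ explore steps from the Projected LinUCB steps (all in $\mathrm{span}(U_1)$ once $j\ge\tau$), bounding the former over all phases up to the terminal phase $E\le\log_b(h_{b,T})$ with $\lceil x\rceil\le x+1$, and the latter by $R_{\mathrm{proj},T}$, yields the cost-of-exploration term $16mKS\log_b(h_{b,T})+16mKS\tfrac{\sqrt{h_{b,T}}-1}{\sqrt{b}-1}$. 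The quantity $\mathbb{E}[R_{\mathrm{proj},T}]$ is controlled exactly as in~(\ref{eqn:expregproj}): Theorem~\ref{th:regprojlinucb} with $\delta=1/T$ gives $\mathbb{E}[R_{\mathrm{proj},T}]\le\sqrt{8mT\beta_T^2\log(1+T/(m\lambda))}+2S$.

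The remaining task is to bound $\mathbb{E}\bigl[\tau+\tfrac{b^{\tau}-1}{b-1}\bigr]$ and, in particular, show it is finite; this is the single-agent analogue of Proposition~\ref{prop:reg_btau}. Since $\tau=\widehat{\tau}_{stab}$, I would write $\mathbb{E}[b^{\tau}]\le 1+\sum_{t\ge1}\mathbb{P}(\widehat{\tau}_{stab}\ge\lfloor\log_b t\rfloor)$, split the sum at $t=\lceil b^{\tau_0}\rceil$, and for larger $t$ use the tail bound $\mathbb{P}(\widehat{\tau}_{stab}\ge x)\le\sum_{l\ge x}4m\exp\!\bigl(-\tfrac{\Delta^2}{m}b^{(l-1)/2}\bigr)$, which follows from Lemmas~\ref{lemma:intstep} and~\ref{lemma:wrongsubspace} — note that here there is no union bound over agents, hence no factor $N$. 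Exchanging the order of summation, bounding by an integral and substituting $u=b^{(x-1)/2}$ turns this into a Gamma-type integral, giving a bound of the form $\tfrac{b^2}{\log b}\cdot\tfrac{m^2}{\Delta^2}$ up to an absolute constant. Because $\tau$ is not split into two halves (unlike the multi-agent case, where $\tau=\widehat{\tau}_{stab}+\widehat{\tau}_{spr}$), the change of variables produces $b^{x+1}$ rather than $b^{2x+1}$, which is why the single-agent bound carries $b^{\tau_0}$ — and hence $\lceil b(16mK)^2\rceil$ — instead of $b^{2\tau_0}$. I would also record the analogue of Proposition~\ref{prop:tau0bound}, namely $\tau_0\le2\log_b(16mK)+1$ (same argument, with $K$ in place of $K/N+2$), so $\lceil b^{\tau_0}\rceil\le\lceil b(16mK)^2\rceil$. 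Combined with $\tau\le\tfrac{b^{\tau}-1}{\log b}$, this gives $\mathbb{E}\bigl[\tau+\tfrac{b^{\tau}-1}{b-1}\bigr]\le g(b)\bigl(\lceil b(16mK)^2\rceil+\tfrac{8b^2}{\log b}\cdot\tfrac{m^2}{\Delta^2}\bigr)$; substituting this and~(\ref{eqn:expregproj}) into the decomposition completes the proof.

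The main obstacle is modest. All the heavy machinery — the concentration estimate (Lemma~\ref{lemma:intstep}), the wrong-subspace probability bound (Lemma~\ref{lemma:wrongsubspace}), and the Projected LinUCB regret (Theorem~\ref{th:regprojlinucb}) — is already available, and because there is no communication the delicate rumour-spreading coupling of Proposition~\ref{prop:couple_spread} is not needed at all. The only points requiring care are the bookkeeping when $K/N+2$ is replaced by $K$ throughout (both in the exploration cost and in the definition of $\tau_0$), and checking that the summation-to-integral step is legitimate — the summand $b^{l+1}\exp\!\bigl(-\tfrac{\Delta^2}{m}b^{(l-1)/2}\bigr)$ is eventually decreasing, and the contribution of phases below $\tau_0$ is absorbed into the $\lceil b(16mK)^2\rceil$ term. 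The upshot is that the exploration cost degrades from $O((K/N)m\sqrt{T})$ in the multi-agent case to $O(Km\sqrt{T})$, as claimed.
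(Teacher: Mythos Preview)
Your proposal is correct and follows essentially the same approach as the paper's proof: the paper introduces fresh notation $\bar{\tau}_0$ and $\tau_{freeze}$ (playing exactly the roles of your redefined $\tau_0$ and $\widehat{\tau}_{stab}=\tau$), establishes the single-agent regret decomposition with $K$ in place of $K/N+2$, and bounds $\mathbb{E}[b^{\tau_{freeze}}]$ via the same tail-sum/integral computation you outline, obtaining the $\tfrac{8b^2}{\log b}\cdot\tfrac{m^2}{\Delta^2}$ term and the $\lceil b(16mK)^2\rceil$ term just as you describe. Your explanation of why $b^{\tau_0}$ appears instead of $b^{2\tau_0}$ (no half-split since $\widehat{\tau}_{spr}=0$) matches the paper's calculation exactly.
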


\begin{proof}
Before we prove Theorem \ref{thm:mainresult_single}, we set some notation. 
Let $\widehat{\mathcal{O}}_{j} = \arg \max_{k \in [K]} \|\widetilde{\theta}_{k,j}^{(i)}\|_{2}$, i.e., $\widehat{\mathcal{O}}_{j}$ denotes the ID of subspace in which the single agent plays Projected LinUCB in the exploit time slots of phase $j$. Let $\bar{\tau}_{0} = \min\left\{j \in \mathbb{N}: \forall j' \geq j, \lceil b^{j'-1} \rceil \geq 8mK\lceil b^{\frac{j'-1}{2}} \rceil\right\}$. Following the proof of Proposition \ref{prop:tau0bound}, it can be shown that $\bar{\tau}_0 \leq 2 \log_{b}(16mK)+1$.
\\

We also define a random phase $\tau_{\mathrm{freeze}}$ as follows:
\begin{align*}
    \tau_{\mathrm{freeze}} = \inf\{j \geq \bar{\tau}_{0}: \forall j' \geq j, \widehat{\mathcal{O}}_{j'}=1\}.
\end{align*}
Thus, $\tau_{\mathrm{freeze}}$ is the earliest phase after which the single agent will play the projected LinUCB from the subspace $\mathrm{span}(U_1)$ in the exploit time slots of a phase. Notice that the random phase $\tau_{\mathrm{freeze}}$ plays the same role as $\widehat{\tau}_{stab}$ in the multi-agent case. This suggests that the regret analysis in this case must follow the same chain of argument as for the multi-agent case.
\\

Following the same steps as for Proposition \ref{prop:regdecomp}, the bound on the regret of the single agent after $T$ time steps is given by
\begin{multline}
    \mathbb{E}[R_{T}] \leq 2S\left(\mathbb{E}\left[\tau_{\mathrm{freeze}} + \frac{b^{\tau_{\mathrm{freeze}}}-1}{b-1}\right]\right) + \mathbb{E}[R_{\mathrm{proj},T}]+16 mKS \log_{b}(h_{b, T}) \\ + 16 mKS \frac{\sqrt{h_{b, T}}-1}{\sqrt{b}-1}. \label{eq:regdecompsingle}
\end{multline}
We have already shown in (\ref{eqn:expregproj}) that 
\begin{align*}
    \mathbb{E}[R_{\mathrm{proj}, T}] \leq \sqrt{8mT\beta_{T}^{2} \log \left(1+\frac{T}{m\lambda}\right)} + 2S.
\end{align*}

We will now bound $\mathbb{E}[\tau_{\mathrm{freeze}}]$ and $\mathbb{E}[b^{\tau_{\mathrm{freeze}}}]$ to complete the proof. We first bound $\mathbb{E}[b^{\tau_{\mathrm{freeze}}}]$. From the definition of expectation for positive random variables,
\begin{align*}
    \mathbb{E}[b^{\tau_{\mathrm{freeze}}}] &\leq \mathbb{E}[\lceil b^{\tau_{\mathrm{freeze}}} \rceil]\\
    &= \sum_{t=1}^{\infty} \mathbb{P}(\lceil b^{\tau_{\mathrm{freeze}}} \rceil \geq t)\\
    &\leq 1 + \sum_{t=2}^{\infty}\mathbb{P}(b^{\tau_{\mathrm{freeze}}}+1 \geq t)\\
    &\leq 1 + \sum_{t=1}^{\infty}\mathbb{P}(\tau_{\mathrm{freeze}} \geq \lfloor \log_{b} t \rfloor)\\
    &\leq 1+\lceil b^{\bar{\tau}_0} \rceil+ \sum_{t=\lceil b^{\bar{\tau}_0} \rceil+1}^{\infty} \mathbb{P}(\tau_{\mathrm{freeze}} \geq \lfloor \log_{b}t \rfloor)\\
    \end{align*}
    We will bound the summation in the last term below:
    \begin{align*}
    \sum_{t=\lceil b^{\bar{\tau}_0} \rceil+1}^{\infty} \mathbb{P}(\tau_{\mathrm{freeze}} \geq \lfloor \log_{b}t \rfloor) &\stackrel{(a)}{=} \sum_{t=\lceil b^{\bar{\tau}_0} \rceil+1}^{\infty} \mathbb{P}\left(\bigcup_{j \geq \lfloor \log_{b}t \rfloor} (\widehat{\mathcal{O}}_{j} \neq 1)\right)\\
    &\leq \sum_{t=\lceil b^{\bar{\tau}_0} \rceil+1}^{\infty} \sum_{j \geq \lfloor \log_{b}t \rfloor}\mathbb{P}(\widehat{\mathcal{O}}_{j} \neq 1)\\
    &\stackrel{(b)}{\leq} \sum_{t=\lceil b^{\bar{\tau}_0} \rceil+1}^{\infty} \sum_{j \geq \lfloor \log_{b}t \rfloor} 4m \exp\left(-\frac{\Delta^2}{m}.b^{\frac{j-1}{2}}\right)\\
    &\stackrel{(c)}{\leq} 4m \sum_{j \geq \bar{\tau}_0 }\sum_{t=\lceil b^{\bar{\tau}_0} \rceil+1}^{b^{j+1}} \exp\left(-\frac{\Delta^2}{m}.b^{\frac{j-1}{2}}\right)\\
    &\leq 4m \sum_{j \geq \bar{\tau}_0} b^{j+1} \exp\left(-\frac{\Delta^2}{m}.b^{\frac{j-1}{2}}\right)\\
    &\leq 4m \int_{x = 1}^{\infty} b^{x+1} \exp\left(-\frac{\Delta^2}{m}.b^{\frac{x-1}{2}}\right)\;\mathrm{d}x\\
    &\stackrel{(d)}{=} \frac{8mb^2}{\log b} \int_{u = 1}^{\infty} u  \exp\left(-\frac{\Delta^2}{m}.u\right)\;\mathrm{d}u\\
    &\leq \frac{8mb^2}{\log b} \int_{u = 0}^{\infty} u  \exp\left(-\frac{\Delta^2}{m}.u\right)\;\mathrm{d}u\\
    &\leq \frac{8b^2}{\log b}.\frac{m^2}{\Delta^{2}}.
\end{align*}
We use the definition of $\tau_{\mathrm{freeze}}$ in Step $(a)$. In step $(b)$, we substitute the bound on probability of choosing the subspace other than $\mathrm{span}(U_1)$ from Lemma \ref{lemma:wrongsubspace}. We interchange the order of summation in step $(c)$. In step $(d)$, we perform a change of variables with $x=2\log_{b}u + 1$. Therefore,
\begin{equation*}
    \mathbb{E}\left[b^{\tau_{\mathrm{freeze}}}-1\right] \leq \lceil b(16mK)^2 \rceil + \frac{8b^2}{\log b}.\frac{m^2}{\Delta^2}. 
\end{equation*}

For bounding $\mathbb{E}[\tau_{\mathrm{freeze}}]$, notice that $\tau_{\mathrm{freeze}} \leq \frac{b^{\tau_{\mathrm{freeze}}}-1}{\log b}$, where we use the fact that for all $b > 1$, $b^x - x \log b - 1 \geq 0$ for all $x \geq 0$. Thus, $\mathbb{E}[\tau_{\mathrm{freeze}}] \leq \frac{1}{\log b} \mathbb{E}[b^{\tau_{\mathrm{freeze}}}-1]$. Substituting the bound for $\mathbb{E}[b^{\tau_{\mathrm{freeze}}}-1]$ obtained above completes the proof of Theorem \ref{thm:mainresult_single}.
\end{proof}

\section{Analysis of Projected LinUCB}
\label{proof:projlinucb}
We now analyze Projected LinUCB as a separate black box, where every agent plays from the correct subspace containing $\theta^*$ for $T$ steps. This holds as every agent plays Projected LinUCB on the subspace $\mathrm{span}(U_1)$ during the exploit time slots after the freezing phase $\tau$ by only using the Projected LinUCB actions and rewards for $\mathrm{span}(U_1)$, which of course will happen for less than $T$ steps.
As this analysis is valid for all agents, we drop the superscript $(i)$ from all the pertinent variables, where $i \in [N]$. Recall that $\bar{V}_{t}(\lambda) = P_1 (V_{t-1} + \lambda I_{d}) P_1 = U_1 \Sigma_{t} U_{1}^{T}$ where $\Sigma_{t} = U_{1}^{T} V_{t-1} U_{1} + \lambda I_{m}$. Define $\Tilde{V}_{t}(\lambda) = \bar{V}_{t}(\lambda) - \lambda P_1 = P_1 V_{t-1} P_1 = U_1 \bar{\Sigma}_{t} U_{1}^{T}$, where $\bar{\Sigma}_{t} = \Sigma_{t} - \lambda I_{m} = U_{1}^{T} V_{t-1} U_{1}$.

\subsection{Confidence Set Construction and Analysis}
The construction of the confidence set is done in a similar way as in \cite{hassibi_PSLB, lattimore_book}. 

\begin{theorem}
\label{ConfidenceSet}
Let $\delta \in (0, 1)$ and $\beta_{t, \delta} = S\sqrt{\lambda} + \sqrt{2 \log\frac{1}{\delta} + m \log \left(1 + \frac{t-1}{\lambda m}\right)}$. Then,
\begin{align*}
\mathbb{P}\left(\forall t \in \mathbb{N}, \theta^{*} \in \mathcal{C}_{t}^{(i)}\right) \geq 1-\delta,
\end{align*}
where
\begin{align}
    \mathcal{C}_{t}^{(i)} = \left\{\theta \in \mathbb{R}^{d}: ||\widehat{\theta}_{t}^{(i)}-\theta||_{\bar{V}_{t}(\lambda)} \leq \beta_{t,\delta}\right\}.
    \label{eqn:confidence_set}
\end{align}
\end{theorem}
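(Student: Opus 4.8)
The plan is to reduce Theorem~\ref{ConfidenceSet} to the classical self-normalized confidence-ellipsoid bound for $m$-dimensional ridge regression (as in \cite{oful} and Theorem~20.4 of \cite{lattimore_book}), exploiting that, on the subspace $\mathrm{span}(U_1)$, every relevant quantity lives in $\mathbb{R}^m$. First I would introduce the coordinates $z_s = U_1^T a_s \in \mathbb{R}^m$ for the (Projected-LinUCB) action played at step $s$ and $\phi^* = U_1^T\theta^* \in \mathbb{R}^m$. Since $\theta^* \in \mathrm{span}(U_1)$ we have $P_1\theta^* = \theta^*$, hence $\langle a_s, \theta^*\rangle = a_s^T U_1 U_1^T \theta^* = z_s^T\phi^*$, so the reward is $r_s = z_s^T\phi^* + \eta_s$: an honest $m$-dimensional linear model with conditionally $1$-subgaussian noise.

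Next I would check the reduction is exact at the level of the estimator and of the ellipsoid. Setting the gradient of $\|(P_1A_{t-1})^T\theta - \mathbf{r}_{t-1}\|_2^2 + \lambda\|P_1\theta\|_2^2$ to zero gives $(P_1 V_{t-1}P_1 + \lambda P_1)\widehat\theta_t = P_1 A_{t-1}\mathbf{r}_{t-1}$; left-multiplying by $U_1^T$ and using $P_1 = U_1U_1^T$ turns this into $\Sigma_t (U_1^T\widehat\theta_t) = \sum_s z_s r_s$ with $\Sigma_t = U_1^T V_{t-1} U_1 + \lambda I_m = \sum_{s\le t-1} z_s z_s^T + \lambda I_m$, so $U_1^T\widehat\theta_t = \widehat\phi_t := \Sigma_t^{-1}\sum_s z_s r_s$ is exactly the $m$-dimensional ridge estimate. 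Because $\bar V_t(\lambda) = U_1\Sigma_t U_1^T$,
\[
\|\widehat\theta_t - \theta^*\|_{\bar V_t(\lambda)}^2 = \|U_1^T(\widehat\theta_t - \theta^*)\|_{\Sigma_t}^2 = \|\widehat\phi_t - \phi^*\|_{\Sigma_t}^2 ,
\]
so the event $\{\theta^* \in \mathcal{C}_t^{(i)}\}$ coincides with $\{\|\widehat\phi_t - \phi^*\|_{\Sigma_t} \le \beta_{t,\delta}\}$.

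I would then run the standard argument verbatim in $\mathbb{R}^m$: $z_s$ is $\mathcal{F}^{(i)}_{s-1}$-measurable (the action $a_s$ is) and $\eta_s$ is conditionally $1$-subgaussian given $\mathcal{F}^{(i)}_{s-1}$, so the self-normalized martingale tail inequality gives, with probability at least $1-\delta$, for all $t$ simultaneously,
\[
\|\widehat\phi_t - \phi^*\|_{\Sigma_t} \le \sqrt{\lambda}\,\|\phi^*\|_2 + \sqrt{2\log\tfrac1\delta + \log\tfrac{\det\Sigma_t}{\det(\lambda I_m)}} .
\]
To finish I would bound $\|\phi^*\|_2 = \|U_1^T\theta^*\|_2 \le \|\theta^*\|_2 \le S$ (the operator norm of $U_1^T$ is $1$) and, using $\|z_s\|_2 \le \|a_s\|_2 \le 1$ together with AM--GM on the eigenvalues of $\Sigma_t$ and $\mathrm{tr}(\Sigma_t - \lambda I_m) = \sum_{s\le t-1}\|z_s\|_2^2 \le t-1$, $\det\Sigma_t \le (\lambda + (t-1)/m)^m$, whence $\log(\det\Sigma_t/\det(\lambda I_m)) \le m\log(1 + (t-1)/(\lambda m))$; this reproduces the stated $\beta_{t,\delta} = S\sqrt{\lambda} + \sqrt{2\log\frac1\delta + m\log(1+\frac{t-1}{\lambda m})}$.

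The linear-algebra collapse and the determinant bound are routine; the one point needing genuine care is the filtration/martingale bookkeeping. The Projected-LinUCB samples for $\mathrm{span}(U_1)$ form a \emph{random} subsequence of the agent's play history, so I must confirm that passing to this subsequence preserves the conditional-subgaussianity/predictability structure required by the self-normalized bound. This holds because whether agent $i$ plays Projected LinUCB on $\mathrm{span}(U_1)$ at a given step is measurable with respect to the past (it is determined by the phase structure and by $\widehat{\mathcal{O}}_j^{(i)}$), so the reindexed noise remains a martingale difference sequence adapted to the reindexed filtration; the uniform-in-$t$ nature of the self-normalized bound then covers the (at most $T$) such steps. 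Once this is granted, the remainder is a direct transcription of the confidence-ellipsoid proof of \cite{oful} into the $m$-dimensional coordinates.
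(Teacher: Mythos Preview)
Your proposal is correct and follows essentially the same route as the paper: the paper simply invokes Theorem~8 of \cite{hassibi_PSLB} with the estimated projection set equal to the true $P_1$, which amounts exactly to the $m$-dimensional OFUL reduction you carry out explicitly via $z_s = U_1^T a_s$ and $\Sigma_t$. Your treatment is in fact more self-contained---in particular, your observation about the filtration when restricting to the Projected-LinUCB-on-$\mathrm{span}(U_1)$ subsequence is a point the paper leaves implicit.
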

The proof of Theorem \ref{ConfidenceSet} is adapted from the proof of Theorem $8$ in \cite{hassibi_PSLB}, which considers a different setting: the subspace in which $\theta^{*}$ lies is unknown and needs to be estimated. The error in estimating the correct subspace appears in the construction of projected confidence set in Theorem $8$ of \cite{hassibi_PSLB}. However, in our setting, agents are aware of the true projection matrices of the subspaces and thus do not need to account for subspace estimation error while constructing confidence sets.
This necessitates a different definition of the confidence set given in (\ref{eqn:confidence_set}).

Thus, when agents play Projected LinUCB on the correct subspace, they don't have to pay the overhead of recovering the actual subspace from the perturbed action vectors. Hence, Theorem \ref{ConfidenceSet} is proved by substituting the estimated projection matrix $\hat{P}_{t}$ equal to the true projection matrix $P_1$ in the proof of Theorem $8$ in \cite{hassibi_PSLB} for all $t \in \mathbb{N}$. 

\subsection{Regret Analysis}
Before bounding the regret, let us set some notation here. We have 
\begin{align*}
    a_{t} \in \arg \max_{a \in \mathcal{A}_{t}} \max_{\theta \in \mathcal{C}_{t}^{(i)}} \langle \theta, P_{1}a \rangle.
\end{align*}
Let $a_{t}^{*} = \arg \max_{a \in \mathcal{A}_{t}} \langle \theta^{*}, a \rangle$ and $\bar{\theta}_{t} \in \mathcal{C}_{t}^{(i)}$ such that $\bar{\theta}_{t} = \arg \max_{\theta \in \mathcal{C}_{t}^{(i)}} \langle \theta, P_{1}a_{t} \rangle$. The following theorem characterizes the regret after every agent has the right subspace.

\begin{theorem}
\label{th:regprojlinucb}
With probability at least $1 - \delta$, the regret incurred after playing Projected LinUCB on the subspace containing $\theta^*$ for $T$ steps satisfies
$$R_{\mathrm{proj}, T} \leq \sqrt{8mT\beta_{T, \delta}^{2} \log \left(1+\frac{T}{m\lambda}\right)}.$$
\end{theorem}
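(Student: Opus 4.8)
The plan is to run the standard optimism-based LinUCB regret argument, but carried out entirely in the $m$-dimensional coordinates of $\mathrm{span}(U_1)$ via the isometry $a \mapsto U_1^{T}a$, so that the ambient dimension $d$ never enters. Throughout, condition on the good event $\mathcal{E} = \{\forall t\in\mathbb{N},\ \theta^* \in \mathcal{C}_t^{(i)}\}$, which has probability at least $1-\delta$ by Theorem \ref{ConfidenceSet}; all bounds below hold deterministically on $\mathcal{E}$. First I would establish the per-step optimism inequality. Since $\theta^*\in\mathrm{span}(U_1)$ we have $\theta^*=P_1\theta^*$ and hence $\langle\theta^*,a\rangle=\langle\theta^*,P_1a\rangle$ for every $a$; combined with $\theta^*\in\mathcal{C}_t^{(i)}$ and the definition of $a_t$ as the maximiser of $\max_{\theta\in\mathcal{C}_t^{(i)}}\langle\theta,P_1a\rangle$, this gives $\langle\theta^*,a_t^*\rangle=\langle\theta^*,P_1a_t^*\rangle\le\max_{\theta\in\mathcal{C}_t^{(i)}}\langle\theta,P_1a_t^*\rangle\le\langle\bar\theta_t,P_1a_t\rangle$, so that $w_t^{(i)}=\langle\theta^*,a_t^*\rangle-\langle\theta^*,a_t\rangle\le\langle\bar\theta_t-\theta^*,P_1a_t\rangle$.

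Next I would turn this into a confidence-width bound. Writing $b_t=U_1^{T}a_t$, one checks $\langle\bar\theta_t-\theta^*,P_1a_t\rangle=\langle U_1^{T}(\bar\theta_t-\theta^*),b_t\rangle$ and, since $\bar V_t(\lambda)=U_1\Sigma_t U_1^{T}$, that $\|x\|_{\bar V_t(\lambda)}=\|U_1^{T}x\|_{\Sigma_t}$ and $\|P_1a_t\|_{\bar V_t(\lambda)^{-1}}=\|b_t\|_{\Sigma_t^{-1}}$ (interpreting the weighted norm via the pseudo-inverse on $\mathrm{span}(U_1)$, exactly as in \cite{hassibi_PSLB}). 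Ordinary Cauchy--Schwarz in $\mathbb{R}^m$ then yields $w_t^{(i)}\le\|U_1^{T}(\bar\theta_t-\theta^*)\|_{\Sigma_t}\,\|b_t\|_{\Sigma_t^{-1}}$, and the triangle inequality together with $\bar\theta_t,\theta^*\in\mathcal{C}_t^{(i)}$ gives $\|U_1^{T}(\bar\theta_t-\theta^*)\|_{\Sigma_t}\le\|\bar\theta_t-\widehat\theta_t\|_{\bar V_t(\lambda)}+\|\widehat\theta_t-\theta^*\|_{\bar V_t(\lambda)}\le2\beta_{t,\delta}\le2\beta_{T,\delta}$. Since also $|w_t^{(i)}|\le2S\le2\beta_{T,\delta}$ trivially (Cauchy--Schwarz with $\|\theta^*\|_2\le S$, $\|a\|_2\le1$, and $\lambda\ge1$), we obtain $w_t^{(i)}\le2\beta_{T,\delta}\min\{1,\|b_t\|_{\Sigma_t^{-1}}\}$ on $\mathcal{E}$.

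Finally I would aggregate. Cauchy--Schwarz over $t$ gives $R_{\mathrm{proj},T}\le2\beta_{T,\delta}\sqrt{T\sum_{t=1}^{T}\min\{1,\|b_t\|_{\Sigma_t^{-1}}^2\}}$. With $\Sigma_t=\bar\Sigma_t+\lambda I_m$ and $\bar\Sigma_t=\sum_{s<t}b_sb_s^{T}$, the elliptical-potential (determinant--trace) lemma applied in $\mathbb{R}^m$ gives $\sum_{t=1}^{T}\min\{1,\|b_t\|_{\Sigma_t^{-1}}^2\}\le2\log\frac{\det(\bar\Sigma_{T+1}+\lambda I_m)}{\det(\lambda I_m)}$, and since $\|b_t\|_2\le\|a_t\|_2\le1$ implies $\mathrm{tr}(\bar\Sigma_{T+1})\le T$, the AM--GM bound on the eigenvalues (the analogue of Lemma~11 of \cite{hassibi_PSLB}) gives $\det(\bar\Sigma_{T+1}+\lambda I_m)\le(\lambda+T/m)^m$, hence the sum is at most $2m\log\!\bigl(1+\tfrac{T}{m\lambda}\bigr)$. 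Substituting, $R_{\mathrm{proj},T}\le2\beta_{T,\delta}\sqrt{2mT\log(1+\tfrac{T}{m\lambda})}=\sqrt{8mT\beta_{T,\delta}^2\log(1+\tfrac{T}{m\lambda})}$ on $\mathcal{E}$, i.e.\ with probability at least $1-\delta$. The only real obstacle is bookkeeping around the projection: one must verify that the weighted Cauchy--Schwarz and the pseudo-inverse manipulations are legitimate because $P_1a_t$ lives in $\mathrm{span}(U_1)$ where $\bar V_t(\lambda)$ is positive definite, and that the reduction by $U_1$ turns $\bar V_t(\lambda)$ into the ordinary $m$-dimensional regularized Gram matrix $\Sigma_t$ so that the standard self-normalized/elliptical-potential machinery of \cite{hassibi_PSLB,lattimore_book} applies verbatim — which is precisely why the dimension in the bound is $m$ rather than $d$.
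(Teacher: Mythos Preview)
Your proposal is correct and follows essentially the same approach as the paper: condition on the confidence event from Theorem~\ref{ConfidenceSet}, use optimism to get $w_t^{(i)}\le\langle\bar\theta_t-\theta^*,P_1a_t\rangle$, bound this by $2\beta_{T,\delta}\min\{1,\|\cdot\|\}$ via weighted Cauchy--Schwarz plus the trivial bound $w_t^{(i)}\le2S$, then apply Cauchy--Schwarz over $t$ and the elliptical-potential/determinant--trace lemma to finish. The only cosmetic difference is that you make the $m$-dimensional reduction explicit by passing to $b_t=U_1^{T}a_t$ and $\Sigma_t$, while the paper works directly with $\|a_t\|_{\bar V_t(\lambda)^{\dagger}}$; since $\bar V_t(\lambda)^{\dagger}=U_1\Sigma_t^{-1}U_1^{T}$ these quantities coincide and both routes invoke the same Lemma~\ref{instregbound} and the same $\det(\Sigma_t)\le(\lambda+(t-1)/m)^m$ bound.
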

\begin{proof}
The proof of Theorem \ref{th:regprojlinucb} is contingent on the following lemma. A similar lemma appears as Lemma $19.4$ in \cite{lattimore_book}.
\begin{lemma}
\label{instregbound}
Let $a_{1}, \cdots, a_{T}$ be the sequence of action vectors played up to and including time $T$. Then,
\begin{align*}
\sum_{t = 1}^{T} \min\left\{1, ||a_{t}||_{\bar{V}_{t}(\lambda)^{\dagger}}^{2}\right\} \leq 2 \log\left(\frac{\det(\Sigma_{T+1})}{\det(\Sigma_{1})}\right).
\end{align*}
\end{lemma}
\begin{proof}
The proof is identical to the proof of \cite{lattimore_book}, Lemma $19.4$, except that we use the recursive update of $\det(\Sigma_{t})$ instead of $\det(V_{t-1})$.
\end{proof}
\noindent We now have the required ingredients to complete the proof of Theorem \ref{th:regprojlinucb}. Using the fact that $\theta^{*} \in \mathcal{C}_{t}^{(i)}$ and from the algorithm definitions, the following chain of inequalities is true: 
\begin{align}
\langle \theta^{*}, a_{t}^{*} \rangle = \langle P_{1}\theta^{*}, a_{t}^{*} \rangle = \langle \theta^{*}, P_{1}a_{t}^{*} \rangle \leq \max_{\theta \in \mathcal{C}_{t}^{(i)}} \langle \theta, P_{1}a_{t}^{*} \rangle \leq \langle \bar{\theta}_{t}, P_{1}a_{t} \rangle. \label{eq:optimisticest}
\end{align}
Thus, for all $t \in \mathbb{N}$, 
\begin{align}
w_{t}^{(i)} &\leq 2 \beta_{t, \delta} ||a_{t}||_{\bar{V}_{t}(\lambda)^{\dagger}} \label{eq:instregbound2}, 
\end{align}
which is shown in a similar way as bounding $r_t$ in \cite{lattimore_book}, Theorem $19.2$. However, we additionally use the fact that $\theta^{*} = P_{1}\theta^{*}$ and $P_{1}^2 = P_{1}$. 

While proving Proposition \ref{prop:strong_freeze}, we showed that $w_{t}^{(i)} \leq 2S$ for all $t \in \mathbb{N}$. Combining this with (\ref{eq:instregbound2}) and $\beta_{T, \delta} \geq S$ results in 
\begin{align}
w_{t}^{(i)} &\leq 2 \min\left\{S, \beta_{t, \delta}||a_{t}||_{\bar{V}_{t}(\lambda)^{\dagger}}\right\}\nonumber\\
&\leq 2 \beta_{T, \delta} \min\left\{1, ||a_{t}||_{\bar{V}_{t}(\lambda)^{\dagger}}\right\}. \label{eq: instregbound3}
\end{align}
Therefore, the cumulative regret incurred by playing projected LinUCB for $T$ can be bounded as follows: let $\mathbb{I}$ denote an all-ones column vector of size $T$, and $\mathbf{R}$ be a column vector containing the elements $w_{1}, \cdots, w_{T}$. Then,
\begin{align*}
R_{\mathrm{proj}, T} &=\sum_{t = 1}^{T} w_{t}^{(i)} \\
&= \mathbb{I}^{T}\mathbf{R}\\
&\stackrel{(a)}{\leq} \sqrt{T} \sqrt{\sum_{t = 1}^{T} (w_{t}^{(i)})^{2}}\\
&\stackrel{(b)}{\leq} \sqrt{T} \sqrt{4 \beta_{T, \delta}^{2}\sum_{t = 1}^{T} \min\left\{1, ||A_{t}||_{\bar{V}_{t}(\lambda)^{\dagger}}^{2}\right\}}\\
&\stackrel{(c)}{\leq}\sqrt{8T\beta_{T, \delta}^{2}\log\left(\frac{\det(\Sigma_{T+1})}{\det(\Sigma_1)}\right)}\\
&\stackrel{(d)}{\leq}\sqrt{8mT\beta_{T, \delta}^{2}\log\left(1+\frac{T}{m\lambda}\right)}.
\end{align*}
Step $(a)$ follows from Cauchy-Schwarz inequality. In step $(b)$, we use (\ref{eq: instregbound3}). Step $(c)$ is obtained by application of Lemma \ref{instregbound}. 
Step $(d)$ results from the fact that $\det(\Sigma_{1}) = \det(\Lambda) = \lambda^{m}$ and $\det(\Sigma_t) \leq \left(\lambda + \frac{t-1}{m}\right)^{m}$ (which follows from Lemma $11$ in \cite{hassibi_PSLB}), thus completing the proof.
\end{proof}


\end{appendices}

\end{document}